\newtheorem{theorem}{Theorem}[section]
\newtheorem{lemma}{Lemma}[section]
\newtheorem{assumption}{Assumption}[section]
\begin{document}

% If your paper is accepted and the title of your paper is very long,
% the style will print as headings an error message. Use the following
% command to supply a shorter title of your paper so that it can be
% used as headings.
%
%\runningtitle{I use this title instead because the last one was very long}

% If your paper is accepted and the number of authors is large, the
% style will print as headings an error message. Use the following
% command to supply a shorter version of the authors names so that
% they can be used as headings (for example, use only the surnames)
%
%\runningauthor{Surname 1, Surname 2, Surname 3, ...., Surname n}

\twocolumn[

\aistatstitle{Learning relationships between data obtained independently}% for Local Influence}

\aistatsauthor{Alexandra Carpentier \And Teresa Schl\"uter}

\aistatsaddress{ Universit\"at Potsdam \And London School of Economics and WBGU} ]

\begin{abstract}
The aim of this paper is to provide a new method for learning the relationships between data that have been obtained independently. Unlike existing methods like matching, the proposed technique does not require any contextual information, provided that the dependency between the variables of interest is monotone. It can therefore be easily combined with matching in order to exploit the advantages of both methods. This technique can be described as a mix between quantile matching, and deconvolution. We provide for it a theoretical and an empirical validation.
\end{abstract}

\vspace{-0.3cm}

\section{Introduction}
\vspace{-0.3cm}

The Big Data phenomenon is made possible by the parallelisation of data acquisition. The data are not collected by a centralised organism, but by several contributors. This is a strength since it allows a huge variety and quantity of data to be made available. However, a necessary step for studying data from different contributors is \textit{merging} these datasets. For instance, consider the classical problem of knowing which part $Y$ of its income $X$ an individual spends on housing. Formally, we model the relation between $X$ and $Y$ by % relation by
$$Y = h(X,Z) + \epsilon,$$
where $\epsilon$ is a noise, $Z$ is (in case they are provided) some additional \textit{contextual variables} (like e.g.~age, sex, job, etc) and $h_Z := h(.,Z)$ is the dependence function between $Y$ and $X$ given the contextual variables $Z$. The objective is to estimate the dependence relation $h_Z$. An obstacle for answering this question is that in most countries, the data on wages $\mathbf X$ are collected by one kind of agent (e.g.~the office for national statistics), and the data on housing transactions $\mathbf Y$ are collected by another kind of agent (e.g.~financial institutions specialized in mortgage lending). The dependency $h_Z$ between these variables cannot be established immediately, since the two data sets have been collected independently : $\mathbf X$ and $\mathbf Y$ are independent. Standard results on regression don't apply in this context. Estimating $h_Z$, or \textit{merging the variables $\mathbf X$ and $\mathbf Y$}, is then challenging. This kind of problem becomes more and more common with the growing importance of social networks such as twitter, facebook, etc. For instance, it is desirable to combine the data collected by these social networks by merging the user profiles, in the interest of a fuller analysis of their content.% of the data that these sources provide, it is desired to \textit{merge} their informations.

A very popular method for overcoming this problem is called \textit{matching}, see~\cite{walter1999matching,monge1996field, cohen2002learning}. Suppose that in addition to collecting data on incomes and house prices, the agents also collect \textit{contextual data} (corresponding to the contextual variables $Z$) such as the age, sex, job, etc. In other words, the two independent datasets are of the form $(\mathbf X, \mathbf Z(1))$ and $(\mathbf Y, \mathbf Z(2))$ where $\mathbf Z(1)$ and $\mathbf Z(2)$ are the contextual data of respectively $\mathbf X$ and $\mathbf Y$. The matching procedure consists in associating the data in $\mathbf X$ with the data in $\mathbf Y$ that are most similar in terms of the contextual data, i.e.~in associating the $(X_i, Z(1)_i)$ with the $(Y_j, Z(2)_j)$ such that a given distance between $Z(1)_i$ and $Z(2)_j$ is minimised. Once the matched dataset is available, $h_Z$ can be inferred using a suitable regression method. Also using this idea of cleverly taking advantage of contextual variables, many other methods have been considered for dealing with the problems of fusing data coming from independent sources, as \textit{data fusion, linkage, data integration, etc}~\cite{auer2013introduction,dong2009data,bleiholder2008data, dong2013big}.

These approaches are very reasonable, but they rely heavily on the contextual data $\mathbf Z(1)$ and $\mathbf Z(2)$. If these contextual data are not very detailed, these methods can perform poorly. For instance, the most complete UK database on housing transactions which is collected by the Land Registry contains the house market transaction prices, some structural characteristics of the house and the geographical location, but no information on the buyer. On the other hand, the micro databases that contain information on the wage of individuals have few if no information on the type of lodging that the individuals occupy, apart from sometimes an approximate geographic location. In this baseline problem, the approximate geographic location is the only matching variable available. In some highly populated area, there will be many matches. In this situation, an additional noise, or even a model misspecification will be introduced by the imprecision of the matching. In the case of the social network example, the collected data are often partially anonymized, which makes the merging process difficult.

%In this paper, we develop an alternative method for merging datasets, in the case where the dependence function $h$ is of given monotonicity (increasing or decreasing) and the noise distribution is known (exactly or by an estimate). In the income and house price example, it is clear that $h$ is an increasing function - the larger the income of an individual, the more it will pay for its house \textit{on average}, and the monotonicity assumption is satisfied. In the social network example, a high utilisation of a given social mainstream network is \textit{on average} positively connected with a high utilisation of another mainstream social media - and the monotonicity assumption is again satisfied. This situation - an increasing relation $h$ - is in fact fairly common, since we only need \textit{averaged} monotonicity (through $h$).

In this paper, we develop an alternative method for learning the relationships between two variables that have been collected independently. Our approach \textit{does not rely on contextual variables} and can therefore be used to improve on any given matching method - it is particularly interesting in the situation where many possible matches are available. It can even be used when \textit{no contextual variables are available}, i.e.~when the only available data are $\mathbf X$ and $\mathbf Y$ and when they have been collected independently.\\
The necessary assumptions in order for our method to work is that the dependence function $h_Z$ given the contextual variables $Z$ is monotone (increasing or decreasing) and that the noise distribution is known (exactly or by an estimate). In the income and house price example, it is clear that $h_Z$ is an increasing function given standard contextual variables $Z$ such as sex, age, job, etc - the larger the income of an individual, the more it will pay for its house \textit{on average} 
%\footnote{Note that we do not need a strict order relationship, but only an averaged one - e.g.~we do not need that if someone has a higher income then he or she will necessarily pay more for his or her house, but we need such a relation on average, that is to say that people that earn more are more likely to pay more for their houses.}
given the standard contextual variables. In the social network example, a high utilisation of a given social mainstream network is \textit{on average} positively connected with a high utilisation of another mainstream social media. This situation - an increasing relation $h_Z$ given the contextual variables $Z$ - is in fact fairly common, since we only need \textit{averaged} monotonicity (through $h$).\\
The fact that our method can even be used when no contextual variables are available can seem very surprising, since it is counter intuitive that the relation $h$ (when there are no contextual variables, $h_Z = h$) between $\mathbf X$ and $\mathbf Y$ can be deduced from data $\mathbf X$ and $\mathbf Y$ that are independent. The reason why this is possible is the monotonicity of $h$. In the simple case when the noise $\epsilon$ is null, if it is known that $h$ is e.g.~increasing, then the $t$ percentile of $\mathbf X$ corresponds to the $t$ percentile of $\mathbf Y$. The relation $h$ between $\mathbf X$ and $\mathbf Y$ can then be inferred by matching the quantiles of the distributions of $\mathbf X$ and $\mathbf Y$. When there is noise, the problem is slightly more involved, and this is the situation that we develop in this paper. Our approach mixes elements of \textit{quantile regression and quantile comparison}, and of \textit{deconvolution}, applied in a non-standard way to the problem of merging data sets. Quantile regression and quantile comparison (see e.g.~\cite{lombard2005nonparametric, wilcox2012comparing, einmahl1999confidence, doksum1976plotting}) consists in fitting a distribution with other distributions, or comparing a distribution with other distributions, by using the quantiles of the distributions. Distribution deconvolution (see e.g.~\cite{dattner2011deconvolution,chan1998total, fan1991optimal, cordy1997deconvolution, carroll1988optimal, bell1995information, chan1998total,moulines1997maximum, starck2003wavelets}) consists in estimating the distribution of a random variable using noisy samples.

%In all these examples, the two variables (house and wage, activity on twitter and activity on facebook, etc) have been collected independently and the variables are therefore independent - so the relations between these variables cannot be directly studied by e.g.~regression. However, clearly in these examples, they should have a strong relationship, that could be easily inferred if we had a cross dataset with these two variables and not two  independent data sources.

This paper is structured as follows. In Section~\ref{s:setting}, we describe the setting of the paper. In Section~\ref{s:results}, we provide the estimator of $h_Z$, and associated bounds on its performance. Finally, we present numerical experiments on world bank data, and on census and land registry data in Section~\ref{s:expes} for assessing the practical impact of our method. The Appendix contains the proofs of the main theorems and additional experiments. %The data we consider are world bank macro-data on African countries.

\vspace{-0.3cm}

\section{Setting}\label{s:setting}
\vspace{-0.3cm}

In this section, we present in a formal way the setting and the objective. %We first detail the most basic setting of merging two variables $\mathbf X$, and $\mathbf{\bar Y}$, and we then present the more involved objective of improving on matching procedures. %More involved settings where the aim is to improve on matching strategies are presented in Subsections~\ref{sec:}

%\vspace{-0.2cm}

%We consider the following model.

%\subsection{The simple dataset merging problem}

\subsection{Presentation of the model}
%\vspace{-0.1cm}
\vspace{-0.3cm}

Let $d \in \mathbb N$ be the dimension of the contextual variables $Z$. In our approach, we allow $d=0$, i.e.~no contextual variables are available. We do not make assumptions on the way the contextual variables are generated. %We do not really assume any model on the way they are generated, but we assume that they belong to a set $\mathcal U\subset \mathbb R^d$.

We assume the following modelling for the explicative variables $X$:
$$X|Z \sim f_Z,$$
where $f_Z$ is a density on $\mathbb R$ (and $F_Z$ is the associated distribution function). We now assume the following modelling for the explained variable $Y$:
$$Y := h(X, Z) + \epsilon,$$
where $\epsilon|Z \sim \xi_Z$ is a noise that is independent of $X$, where $\xi_Z$ is a density on $\mathbb R$ of mean $0$, and where for any $Z$, $h(.,Z)$ is a function from $\mathbb R$ to $\mathbb R$ which is \textit{monotone}. We assume that we know the distribution $\xi_Z$. We write $f_{h,Z}$ for the density of $h(X,Z)|Z$ (and $F_{h,Z}$ for the associated distribution function), and $g_Z$ for the density of $Y|Z$. Given the model, we have
$$g_Z = f_{h,Z} * \xi_Z,$$
i.e.~$g_Z$ is the convolution of $f_{h,Z}$ and $\xi_Z$.

%We assume that $F$ and $h$ are continuous and strictly increasing 

\noindent
{\bf Remark on the monotonicity assumption:}  Assuming that for any $Z$, $h(.,Z)$ is monotone, e.g.~non-decreasing, means that given the contextual variables $Z$ and \textit{on average}, a larger $X$ corresponds to a larger $Y$. In the rent and wage example, it is a very  reasonable assumption that richer people, given standard contextual variables as their sex, age, socio professional category, location, etc spend \textit{on average} more on lodging. Another example concerns the number of followers on two social media such as e.g.~twitter and facebook. It is reasonable to assume that someone who is very active on facebook is more likely to be active on twitter. This assumption is not very restrictive since it is made \textit{on average}, and we do not make the assumption that there is a strict order relationship that holds for every individual, which is a much stronger assumption.% One does not want to make assumptions that are way more restrictive, though.

%In order to fufill this objective, we will assume that the function $h_Z$ is monotonous (without loss of generality, we will consider that the $h_Z$ are non decreasing) for every context $Z$. The assumption ``$h_Z$ is non-decreasing" corresponds then to the assumption ``when $X$ is larger, $Y$ is on average larger, conditional on $Z$ for any $Z$". In other words, the assumption is here that, conditional on $Z$, a larger $X$ corresponds on average to a larger $Y$. This is not unrealistic and this is the case for instance for the wage and rent example discussed in the introduction if we add some control variables such as e.g.~geographical location :on average, people with higher income pay more for their houses, also given sex and age. This is also the case for the number of followers on various social media with control variates as e.g.~sex and age.

\vspace{-0.3cm}

\subsection{Data}
\vspace{-0.3cm}

We assume that we are given two databases:
$$(\mathbf X, \mathbf Z(1)) \quad \mathrm{and}\quad (\mathbf Y, \mathbf Z(2))$$
with respectively $m$ and $n$ individuals, where $X_i|Z(1)_i \sim f_{Z(1)_i}$ and $Y_i|Z(2)_i \sim g_{Z(2)_i}$ and are \textit{totally independent}, as e.g.~in the case where they are collected from two independent sources. For instance, $\mathbf X$ can be a dataset containing a sample of wages of individual in a geographical unit, and $\mathbf Y$ a dataset containing a sample of house prices in the same geographical unit, and $\mathbf Z(1), \mathbf Z(2)$ can be standard categorical variables such age age and sex. The problem here is that these datasets have been collected independently of each other, and one does not know which individual, of a given wage, buys which house.

%\smallskip

\noindent
\textbf{Objective: Infer the function $h$ from the data $(\mathbf X, \mathbf Z(1)) \quad \mathrm{and}\quad (\mathbf Y, \mathbf Z(2))$.}

%\smallskip

\vspace{-0.3cm}

\subsection{Preliminary matching procedure}\label{ss:match}
\vspace{-0.3cm}

%Consider now a more complex scenario where the underlying model behind the data is
%$$Y = \tilde h(X,Z) + \epsilon,$$
%i.e.~there are some additional contextual variables $Z$ in the model that describes the relation between $Y$ and $X$.

%A practical extension of the setting described in the previous subsection, and that is ruled by the equation directly above, is the case when the datasets $\mathbf X$ and $\mathbf Y$ are a bit more complex, and come with contextual variables $(\mathbf X, \mathbf Z(1))$ and $(\mathbf Y, \mathbf Z(2))$. $ \mathbf Z(1)$ and $ \mathbf Z(2)$ contain some contextual information, for instance the age or sex of the individuals. Again, we assume that these two databases are independent of each other. 

Matching procedures (see e.g.~\cite{walter1999matching,monge1996field, cohen2002learning}) aim at merging $(\mathbf X, \mathbf Z(1))$ and $(\mathbf Y, \mathbf Z(2))$ by finding good matches between the contextual variables $\mathbf Z(1)$ and $\mathbf Z(2)$. The basic procedure can be summarized as follows. %The underlying idea is to use the contextual information
Let $d$ be a distance function between the data points in $ \mathbf Z(1)$ and the data points in $ \mathbf Z(2)$. A non-robust matching procedure aims at associating any $(X_i, Z(1)_i)$ of the first dataset with the point $(Y_j, Z(2)_j)$ of the second dataset that minimises $d(Z(1)_i,Z(2)_j)$. A more robust generalisation of this method, related to nearest neighbours methods, is to match, for any $z$, the points $\mathcal D_1(z)$ of the first dataset whose contextual variables $\mathbf Z(1)$ are $\upsilon-$close to $z$, with points $\mathcal D_2(z)$ of the second dataset whose contextual variables $\mathbf Z(2)$ are $\upsilon-$close to $z$, for $\upsilon>0$.

Particularly in the case of not very detailed contextual variables (age, sex, etc), a typical matching approach would then not return a one to one match from $\mathbf X$ to $\mathbf Y$, but would map subsets of $\mathbf X$ to subsets of $\mathbf Y$, in function of $Z$. So a matching procedure outputs, for values of $Z$ that exist in the dataset, the following subsets of $\mathbf X$ and $\mathbf Y$ that correspond to $Z$ :
$$\mathbf X^Z \quad \mathrm{and} \quad \mathbf Y^Z.$$
This provides a first merging of the variables, but in the case where for given $Z$, the function $h(.,Z)$ is not constant (which is often the case for not very detailed contextual variables), this does not allow for a reconstruction of $h$, and therefore for the determination of the relation between $X$ and $Y$.

In this paper is to refine a such procedure (or in the case where there are no contextual variables and therefore where no matching is possible, our aim is to link as well as we can $X$ and $Y$), and is thus to estimate, for any $Z$ in the set of contextual variables, the relation between $X$ and $Y$ given $Z$, i.e.~the function
$$ h_Z(.) := h(.,Z),$$
given the data $\mathbf X^Z \quad \mathrm{and} \quad \mathbf Y^Z$.

\noindent
\textbf{Revisited objective: Infer the function $h_Z$ from the data $\mathbf X^Z \quad \mathrm{and}\quad \mathbf Y^Z$.}

\paragraph{Remark on the model} The post-matching model is $y=h_Z(x^Z)+\epsilon$ - so that if for a given context $Z$ we observed a dataset of the form $(X^Z,Y^Z)$ (therefore with with cross-information), $X^Z$ and $Y^Z$ would not be independent - and one could use standard techniques as e.g.~regression. But in our setting we observe the data $X^Z$ and $Y^Z$ from different, independent sources - making de facto $X^Z$ and $Y^Z$ independent (and fully independent, not just knowing the order statistics). Regression techniques are not applicable there since we do not have the information of which $x^Z$ in dataset $X^Z$ corresponds to which $y^Z$ in dataset $Y^Z$.%- and our method applies there if h is monotone.

%When there are some control variables, 

%In order to obtain theoretical guarantees for this operation, classical methods are then based on the fact that $\tilde h(X,Z) := \tilde h(Z)$ is constant given $Z$ : one can then consider a weighted average on $\mathcal D_1(z)$ and $\mathcal D_2(z)$ for inferring $\tilde h$. This assumption is very strong and often does not correspond to reality, in particular when the contextual variables are not very detailed (e.g.~in the case where these variables are simple categorical variables of the form age, sex and soci-professional category). Also in this case, 

%In this paper, 

%In this paper, we assume that we dispose of a given matching procedure that matches the data according to $Z$. It outputs for the values of $Z$ that exist in the dataset the following 

\vspace{-0.3cm}
\section{Methods and results}\label{s:results}

\vspace{-0.3cm}

We now present our main procedure and results.

\vspace{-0.3cm}
\subsection{Main procedure and results}\label{ss:mr}

\vspace{-0.3cm}

We assume that we dispose of a matching procedure based on the contextual variables $Z$ as described in Subsection~\ref{ss:match}. We restrict to the case of discrete contextual variables for the theoretical results, and we consider exact matching according to these variables (the subsets $\mathbf X^Z \quad \mathrm{and} \quad \mathbf Y^Z$ correspond to exactly the same $Z$) but this can be easily generalised in practice. If no contextual variables $Z$ are available ($d=0$), then we use as convention in the rest of this section
$$\mathbf X^Z := \mathbf X \quad \mathrm{and} \quad \mathbf Y^Z := \mathbf Y.$$

Let $Z$ be a given value of the contextual variables such that $(\mathbf X^Z , \mathbf Y^Z)$ are non-empty and let $n_Z$ be the number of data in the smallest of these two sets. Let $\hat F_Z$ be the empirical distribution estimator of $F_Z$ defined over the samples $\mathbf X^Z$. We assume that there is a \textit{deconvolution estimator} $\hat F_{h,Z}$ that estimates the distribution $F_{h,Z}$, based on $\mathbf Y^Z$ (separating its density $g_Z = f_{h,Z} * \xi_Z$ form the noise $\xi_Z$), and satisfies the following assumption.  
\begin{assumption}[Deconvolution estimator available for the explained residuals]\label{ass:estFh}
Let $\delta>0$. Let $x \in \mathbb R$. There exists an estimator $\hat F_{h,Z}$ of $F_{h,Z}$ computed using the $\mathbf Y^Z$ and the knowledge of $\xi$ and that is such that with probability larger than $1-\delta$
\begin{align}\label{eq:estdec}
|F_{h,Z}(x) - \hat F_{h,Z}(x)| \leq \psi(\delta,n_z):=\psi.
\end{align}
\end{assumption}
The existence of a such \textit{deconvolution estimator}, satisfying Assumption~\ref{ass:estFh} is standard under some regularity conditions. A discussion on the existence of it is provided in the Subsection~\ref{app:decon}.

%Under suitable assumptions on $f_h$ (i.e.~on $s$), and on the noise $\xi$ (i.e.~that its Fourier transform 

%We consider now $\hat F_{h,Z}$ as an estimator of $F_{h,Z}$ satisfying Assumption~\ref{ass:estFh}. 
% Using this estimator, it is possible to construct an estimate of $h(.,Z)$ based on the \textit{mutually independent} samples $(\mathbf X^Z, \mathbf Y^Z)$, provided that $h(.,Z)$ is indeed a monotone function. Before this, 
%We recall the definition of the pseudo-inverse $G^{-1}$ of a function $G$ defined from $\mathcal U$ to $\mathcal V$. For any $Y \in \mathcal V$, we have (if the below set is non-empty)
%$$G^{-1}(v) = \sup\{w\in \mathcal U: G(w) = v\}.$$

Theorems~\ref{th:cocobleu} and~\ref{th:cocobleu2} below give some properties on the efficiency of the estimate $\hat h_Z$ of $h_Z$ defined as
$$\hat h_Z = \hat F_{h,Z}^{-1}\circ \hat F_Z,$$
where $\hat F_{h,Z}^{-1}$ is the pseudo inverse of $\hat F_{h,Z}$. The entire procedure for computing $\hat h_Z$, which we call MatchMerge, for constructing this estimator of $h$ is summarized in Algorithm~\ref{algo1}.

\begin{algorithm}[t]
\caption{The procedure MatchMerge}
 \label{algo1}
 \begin{algorithmic}
 \STATE {\bfseries Input:}
 \STATE \quad $(\mathbf X, \mathbf Z(1)) \quad \mathrm{and}\quad (\mathbf Y, \mathbf Z(2))$
 \STATE \quad A matching method with respect to $Z$
 \STATE \quad A deconvolution method from the noise $\xi_z$
% \STATE \quad The distribution of the noise $\xi_z$
%\STATE \quad $\hat r_* \gets r_*$ : Knowledge $r_*$ (I think we can also estimate it fast enough - to be developed) 
\smallskip
\STATE {\bfseries Main procedure :}
\STATE Apply the matching method to the data and obtain for all $Z$ $\mathbf X^Z \quad \mathrm{and}\quad \mathbf Y^Z$
\FOR{$Z$ s.t.~$n_Z>1$}
\STATE Compute the empirical estimator $\hat F_Z$ of $F_Z$ on $\mathbf X^Z$
\STATE Compute the deconvolution estimator $\hat F_{h,Z}$ of $F_{h,Z}$ using $\mathbf Y^Z$ and $\xi_Z$
\STATE Set $\hat h_Z = \hat F_{h,Z}^{-1}\circ \hat F_Z.$
\ENDFOR
\smallskip
\STATE {\bfseries Output :}
\STATE \quad Return $\hat h (.,.) = \hat h_.(.)$

 \end{algorithmic}
\end{algorithm}

%\todo Commenter la dessus

The following theorem provides a first theoretical guarantees for $\hat h_Z(.)$, as well as a confidence statement in any point $u$.
\begin{theorem}\label{th:cocobleu}
Let $u\in \mathbb R$. Let Assumption~\ref{ass:estFh} be satisfied for $\delta>0$. We have with probability $1-2\delta$
$$h_Z \circ F_Z^{-1}(F_Z(u)-\psi - \phi) \leq \hat h_Z(u) \leq h_Z \circ F_Z^{-1}( F_Z(u) + \psi +\phi).$$
%where $\hat F_{h,Z}^{-1}$ is the pseudo-inverse of $\hat F_{h,Z}$.
\end{theorem}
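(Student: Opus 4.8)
The plan is to control $\hat h_Z(u) = \hat F_{h,Z}^{-1}(\hat F_Z(u))$ by sandwiching both of the ingredient estimators, $\hat F_Z$ and $\hat F_{h,Z}$, between the true distribution functions shifted by their respective error terms, and then composing the pseudo-inverse carefully. First I would introduce the error $\phi$ for the empirical distribution $\hat F_Z$: by the Dvoretzky--Kiefer--Wolfowitz inequality applied to the $n_Z$ (or $m_Z$) samples in $\mathbf X^Z$, with probability at least $1-\delta$ one has $\sup_x |\hat F_Z(x) - F_Z(x)| \leq \phi$, where $\phi = \phi(\delta, n_Z)$ is of order $\sqrt{\log(1/\delta)/n_Z}$. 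Combined with Assumption~\ref{ass:estFh} for $\hat F_{h,Z}$, a union bound gives that, with probability at least $1-2\delta$, simultaneously $|\hat F_Z(u) - F_Z(u)| \leq \phi$ and $|\hat F_{h,Z}(x) - F_{h,Z}(x)| \leq \psi$ at the relevant point $x$.

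Next I would work on the event where both bounds hold. The identity $h_Z \circ F_Z^{-1} = F_{h,Z}^{-1}$ (valid when $h_Z$ is non-decreasing, since $F_{h,Z}(t) = \P(h_Z(X) \le t) = \P(X \le h_Z^{-1}(t)) = F_Z(h_Z^{-1}(t))$, and symmetrically in the non-increasing case with the appropriate orientation) is the bridge between the two worlds: it lets me rewrite the target bound $h_Z \circ F_Z^{-1}(F_Z(u) - \psi - \phi) \le \hat h_Z(u)$ as $F_{h,Z}^{-1}(F_Z(u) - \psi - \phi) \le \hat F_{h,Z}^{-1}(\hat F_Z(u))$. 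From $\hat F_Z(u) \geq F_Z(u) - \phi$ and monotonicity of $\hat F_{h,Z}^{-1}$, it suffices to show $F_{h,Z}^{-1}(F_Z(u) - \psi - \phi) \le \hat F_{h,Z}^{-1}(F_Z(u) - \phi)$. Setting $p := F_Z(u) - \phi$, this is $F_{h,Z}^{-1}(p - \psi) \le \hat F_{h,Z}^{-1}(p)$, which follows from the standard quantile-comparison fact that a uniform $\psi$-closeness $|\hat F_{h,Z} - F_{h,Z}| \le \psi$ implies $\hat F_{h,Z}^{-1}(p) \ge F_{h,Z}^{-1}(p - \psi)$ for all $p$. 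The upper bound is handled by the mirror-image argument, using $\hat F_Z(u) \le F_Z(u) + \phi$ and $\hat F_{h,Z}^{-1}(p) \le F_{h,Z}^{-1}(p + \psi)$.

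The main obstacle is the careful handling of the pseudo-inverses at points of discontinuity or flatness of the distribution functions: the generalized inverse inequalities $\hat F^{-1}(p) \ge F^{-1}(p-\psi)$ require a precise chosen convention (left- or right-continuous inverse) and a clean lemma relating $\sup|\hat F - F| \le \psi$ to the quantile shift, valid without any continuity assumption on $F_{h,Z}$. A secondary subtlety is that the identity $F_{h,Z}^{-1} = h_Z \circ F_Z^{-1}$ must be justified for a general monotone (possibly non-strictly monotone, possibly discontinuous) $h_Z$; on flat pieces of $h_Z$ the composition $h_Z \circ F_Z^{-1}$ is still well-defined and the inequality version of the identity suffices. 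I would isolate these two facts as short preliminary lemmas (quantile perturbation, and the $h_Z$--quantile identity) and then the theorem follows by the three-line chain above on the probability-$(1-2\delta)$ event. The remaining bookkeeping — tracking which direction of inequality pairs with which sign of $\phi$ and $\psi$, and treating the non-increasing case — is routine.
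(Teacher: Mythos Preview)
Your proposal is correct and follows essentially the same route as the paper: a concentration bound for $\hat F_Z$ at $u$ (the paper uses Bernstein rather than DKW, but the role of $\phi$ is identical), Assumption~\ref{ass:estFh} for $\hat F_{h,Z}$, a quantile-perturbation lemma turning $|\hat F_{h,Z}-F_{h,Z}|\le\psi$ into $F_{h,Z}^{-1}(p-\psi)\le \hat F_{h,Z}^{-1}(p)\le F_{h,Z}^{-1}(p+\psi)$, and the identity $F_{h,Z}^{-1}=h_Z\circ F_Z^{-1}$, all chained on a $1-2\delta$ event. Your two proposed preliminary lemmas are exactly the paper's Lemma~\ref{lem:quantconf} and Lemma~\ref{lem:graph}; if anything, you are slightly more careful than the paper about pseudo-inverse conventions on flats and jumps.
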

The proof of this result is in the Appendix, Subsection~\ref{pr:cocobleu}. This theorem provides a bound on the accuracy of the estimate of $\hat h_Z(u)$ if a rather mild condition (Assumption~\ref{ass:estFh} is verified). It is important to note here that apart from the assumption that $h_Z$ is a monotone function (without loss of generality, let us say that $h_Z$ is non-decreasing), no additional assumptions are made on $h_Z$. And even though the datasets $\mathbf X^Z$, $\mathbf Y^Z$ are independent, it is possible to recover the link function $h$ in this non-parametric model. The bound in this theorem is not explicit, it depends on $h_Z, F_Z, F_{h,Z}$. The next theorem gives an explicit bound (depending on $\phi, \psi$), provided that an additional assumption is made on $F_Z,h_Z$.

\begin{assumption}[H\"older assumption]\label{ass:Holder}
Let $(\alpha,L)>0$. A function $G$ is H\"older continuous on $\mathcal V$ if for any $(x,y)\in \mathcal V^2$, we have
$$|G(x)-G(y)| \leq L|x-y|^{\alpha}.$$
\end{assumption}
The H\"older assumption is mild if $\alpha$ is small and $L$ is large. In particular, functions that are differentiable on a compact (or Lipschitz) verify it with $L = \sup |G'|$ and $\alpha = 1$, but it is more general than this (functions of the form $x^{\alpha}$ verify it in $0$ for any $\alpha>0$ with parameters $\alpha$ and $L=1$). 

This theorem provides a more specific theoretical guarantee in the case where the distributions are H\"older smooth.
\begin{theorem}\label{th:cocobleu2}
Let $u\in \mathbb R$. Let Assumption~\ref{ass:estFh} be satisfied for $\delta>0$ and assume that for any $Z$, $h_Z$ is monotone (without loss of generality, non-decreasing). Let $L,M,\alpha,\beta>0$. Assume that $h_Z$ is $(\alpha,L)-$H\"older on $[u-M(\psi +\phi)^{\beta}, u + M(\psi +\phi)^{\beta}]$, and that $F_Z^{-1}$ is $(\beta,M)-$H\"older on $[F_Z(u)-\psi - \phi, F_Z(u) + \psi + \phi]$ (where $F_Z^{-1}$ is the pseudo-inverse of $F_Z$) as in Assumption~\ref{ass:Holder}.

Then with probability larger than $1-2\delta$
$$|\hat h_Z(u) - h_Z(u)| \leq  LM^{\alpha}(\psi +\phi)^{\alpha\beta}.$$
\end{theorem}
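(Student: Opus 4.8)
The plan is to deduce Theorem~\ref{th:cocobleu2} directly from Theorem~\ref{th:cocobleu} by turning the implicit sandwich bound there into an explicit one via the two Hölder hypotheses. First I would place myself on the event of probability at least $1-2\delta$ on which the conclusion of Theorem~\ref{th:cocobleu} holds, namely
$$h_Z \circ F_Z^{-1}(F_Z(u)-\psi - \phi) \leq \hat h_Z(u) \leq h_Z \circ F_Z^{-1}( F_Z(u) + \psi +\phi);$$
everything that follows is deterministic on this event.

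Next I would control the argument $F_Z^{-1}(F_Z(u)\pm(\psi+\phi))$. Both points $F_Z(u)-\psi-\phi$ and $F_Z(u)+\psi+\phi$ lie in the interval $[F_Z(u)-\psi - \phi, F_Z(u) + \psi + \phi]$ on which $F_Z^{-1}$ is assumed $(\beta,M)$-Hölder; since $F_Z^{-1}(F_Z(u))=u$ (the standard identity for the pseudo-inverse at a point in the range of $F_Z$), the Hölder bound gives
$$\bigl|F_Z^{-1}(F_Z(u)\pm(\psi+\phi)) - u\bigr| \leq M(\psi+\phi)^{\beta}.$$
In particular both values $F_Z^{-1}(F_Z(u)\pm(\psi+\phi))$ belong to the interval $[u-M(\psi+\phi)^{\beta}, u+M(\psi+\phi)^{\beta}]$ on which $h_Z$ is assumed $(\alpha,L)$-Hölder.

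I would then apply the Hölder property of $h_Z$ on that interval, together with $h_Z(u) = h_Z\circ F_Z^{-1}(F_Z(u))$, to obtain for each choice of sign
$$\bigl|h_Z\circ F_Z^{-1}(F_Z(u)\pm(\psi+\phi)) - h_Z(u)\bigr| \leq L\bigl(M(\psi+\phi)^{\beta}\bigr)^{\alpha} = LM^{\alpha}(\psi+\phi)^{\alpha\beta}.$$
Because $h_Z$ is non-decreasing, the lower bound in Theorem~\ref{th:cocobleu} is at most $h_Z(u)$ while the upper bound is at least $h_Z(u)$, so $\hat h_Z(u)$ is trapped in an interval of half-width $LM^{\alpha}(\psi+\phi)^{\alpha\beta}$ centred at $h_Z(u)$, which is exactly the claimed estimate; the event has probability at least $1-2\delta$, giving the stated confidence level.

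The only genuinely delicate point is the handling of the pseudo-inverse: one must ensure that the identities $F_Z^{-1}(F_Z(u))=u$ and hence $h_Z\circ F_Z^{-1}\circ F_Z=h_Z$ are valid at the point $u$ of interest (this is implicit once $F_Z^{-1}$ is assumed Hölder, hence single-valued and continuous, on a neighbourhood of $F_Z(u)$), and that $F_Z(u)\pm(\psi+\phi)$ remains in $[0,1]$ so that all the evaluations are well defined. Once these technicalities are pinned down, the argument is just a two-line chaining of the two Hölder inequalities together with the monotonicity of $h_Z$.
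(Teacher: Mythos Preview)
Your proof is correct and follows essentially the same route as the paper: start from the sandwich of Theorem~\ref{th:cocobleu}, apply the $(\beta,M)$-H\"older property of $F_Z^{-1}$ to localise the argument near $u$, then the $(\alpha,L)$-H\"older property of $h_Z$ together with its monotonicity to conclude. The only cosmetic difference is that the paper isolates the identity $F_{h,Z}^{-1}\circ F_Z(u)=h_Z(u)$ (equivalently $F_Z^{-1}(F_Z(u))=u$) as a separate lemma, whereas you absorb it into the discussion of technicalities at the end; either way the argument is the same two-step chaining of H\"older bounds.
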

The proof of this result is in the Appendix, Subsection~\ref{pr:cocobleu2}.

The bound in Theorem~\ref{th:cocobleu2} implies that it is possible to recover the link function between the data $X,Y,Z$, 
$$h(x,z) = h_z(x),$$
with a good precision. Our approach enables us to complement matching in this way. Again, if no contextual variables $Z$ are available ($d = 0$), our approach is still applicable by just skipping the matching step.

\paragraph{Remark on the noise assumption} In order for our theoretical results to holds, we need to know the distribution of the noise to perform the deconvolution. Let $f^*$ be the Fourier transform of the noise, and $f$ be the Fourier transform of a distribution. If $|1-1/f^*|\geq|1/f-1/f^*|$, i.e.~if the distance defined as $d(g,g') = |1/g-1/g'|$, between the Dirac Fourier transform and $f^*$ is larger than the distance between $f$ and $f^*$, then deconvolution with $f$ is better than quantile matching. The distance relation stated before is quite sensitive to parameters such as variance or range, so if $f^*$ is closer to $f$ in terms of these quantities than to a Dirac mass in $0$, it is likely that our method applied using $f$ in the deconvolution step will be more efficient than simple quantile matching.

%This rate is of order $LM^{\alpha}(\psi +\phi)^{\alpha+\beta}$. If for instance the

%are stated in a pointwise manner, but they can be extended to hold on an entire compact using an union bound, if the bound $\psi(\delta,n)$ in Assumption~\ref{ass:estFh} has a dependency in $\delta$ which is mild enough. For instance, if it is of order $(\log(1/\delta)n)^{\epsilon}$, where $\epsilon>0$ is the rate

%\subsection{Improving the matching approach}\label{sec:compl}

%If one does not want to consider that $\tilde h$ is constant given $z$, but that it is monotone given $z$, one can use the technique we proposed in this paper, on the independent (given $z$) datasets $\mathcal D_1(z) : = \mathbf X(z)$ and $\mathcal D_2(z) = \mathbf Y(z)$. In this way, we can use our method to enhance a matching procedure. The method should be used exactly as described in~\ref{ss:mr}, but on the restricted datasets $\mathcal D_1(z) : = \mathbf X(z)$ and $\mathcal D_2(z) = \mathbf Y(z)$, and not on the entire datasets. It results, for this given $z$, in an estimate of $h(.,z)$. Our method is thus a complement to matching, and allows to enhance its performances when the contextual variables have been fully used. %For a variation on this method, see the Supplementary Material, Appendix~\ref{sec:compl2}.
\vspace{-0.3cm}
\subsection{Special case when $\tilde h$ is separable : regression approach}\label{sec:compl2}
\vspace{-0.3cm}

The model we have been considering until now is quite general. In many cases, however, the effect of the contextual variables $Z$ can be separated from the effect of $X$.

The underlying model is then as follows. Let $d \in \mathbb N$ be the dimension of the control variables $Z$. %We do not really assume any model on the way they are generated, but we assume that they belong to a set $\mathcal U\subset \mathbb R^d$.

We assume the following model for the explicative variables $X$:
$$X := h_1(Z) + \tilde X,$$
where $h_1$ belongs to some functional class $\mathcal H$, and where $\tilde X \sim f$ is independent of $Z$, where $f$ is a density on $\mathbb R$ (with corresponding distribution $F$). We now assume the following modelling for the explained variable $Y$:
\vspace{-0.3cm}
$$Y := h(\tilde X) + h_2(Z) + \epsilon,\vspace{-0.2cm}$$
where $\epsilon \sim \xi$ is independent of $(Z, \tilde X)$, where $\xi$ is a density on $\mathbb R$ of mean $0$, where $h_2$ belongs to some functional class $\mathcal H$, and where $h$ is an increasing function. Typically, $\epsilon$ is a white noise (e.g.~Gaussian). We assume that we know its distribution $\xi$. Let us write
\vspace{-0.3cm}
$$\tilde Y := Y - h_2(Z).$$% \quad \mathrm{and} \quad \tilde X := X - h_1(Z).$$
We write $f_h$ for the density of $h(\tilde X)$ (with corresponding distribution $F_h$), and $g$ for the density of $\tilde Y$ (with corresponding distribution $G$). Given the model, we have
\vspace{-0.3cm}
$$g = f_h * \xi,$$
i.e.~$g$ is the convolution of $f_h$ and $\xi$.

A classical model which is separable is the popular linear model. Real data are in general not exactly separable, but it is often a useful approximation to assume that they are - at least locally.

The objective here is to find $h$, knowing that it is increasing. The following methodology can be applied.
%\begin{enumerate}
%\item[(i)] 
First, estimate $h_1$ and $h_2$ using respectively $(\mathbf X, \mathbf Z(1))$ and $(\mathbf Y, \mathbf Z(2))$. These two problems are standard functional regression problems, and usual methods can be used here (linear regression, Kernel regression, regression on a base, SVM, etc, depending on the set $\mathcal H$ containing $h_1,h_2$). Using these estimates, one can deduce estimates $\mathbf{\hat X}$ and $\mathbf{\hat Y}$ for $\mathbf{\tilde X}$ and $\mathbf{\tilde Y}$. One can the perform the method described in Subsection~\ref{ss:mr} on the data $\mathbf{\hat X}$ and $\mathbf{\hat Y}$ with null contextual variables. The method is described in Algorithm~\ref{algo2}.
%\item[(ii)] Estimating $h$ from $\mathbf{\tilde X}$ and $\mathbf{\tilde Y}$ is exactly what is done in Subsection~\ref{ss:mr} (see Theorems~\ref{th:cocobleu} and~\ref{th:cocobleu2}). One can apply the methodology of Subsection~\ref{ss:mr} to $\mathbf{\hat X}$ and $\mathbf{\hat Y}$ to obtain $h$, i.e.~estimate $F_h$ and $F$ by $\hat F_h$ and $\hat F$ using these samples, and then set
%$$\hat h = \hat F_h^{-1} \circ \hat F.$$
%\end{enumerate}

%\vspace{-1cm}

\begin{algorithm}[t]
\caption{The procedure MatchMergeSep}
 \label{algo2}
 \begin{algorithmic}
 \STATE {\bfseries Input:}
 \STATE \quad $(\mathbf X, \mathbf Z(1)) \quad \mathrm{and}\quad (\mathbf Y, \mathbf Z(2))$
 \STATE \quad A regression method
 \STATE \quad A deconvolution method from the noise $\xi$
% \STATE \quad The distribution of the noise $\xi_z$
%\STATE \quad $\hat r_* \gets r_*$ : Knowledge $r_*$ (I think we can also estimate it fast enough - to be developed) 
\smallskip
\STATE {\bfseries Main procedure :}
\STATE Compute the regression estimator $\hat h_1$ of $h_1$ using $(\mathbf X, \mathbf Z(1))$
\STATE Compute the regression estimator $\hat h_2$ of $h_2$ using $(\mathbf Y, \mathbf Z(2))$
\FOR{$Z$ s.t.~$n_Z>1$}
\STATE Set $\mathbf{\hat Y} := \mathbf{Y} - \hat h_2(\mathbf{Z}(1))$, and $\mathbf{\hat X} := \mathbf{X} - \hat h_1(\mathbf{Z}(1))$
\STATE Compute the empirical estimator $\hat F$  of $F$ on $\mathbf{\hat X}$
\STATE Compute the deconvolution estimator $\hat F_{h}$ of $F_{h}$ using $\mathbf{\hat Y}$ and $\xi$
\STATE Set $\hat h = \hat F_{h}^{-1}\circ \hat F.$
\ENDFOR
\smallskip
\STATE {\bfseries Output :}
\STATE \quad  Return $\hat h(.)$

 \end{algorithmic}
\end{algorithm}
%\vspace{-1cm}

\vspace{-0.3cm}
\section{Experiments}\label{s:expes}
\vspace{-0.3cm}

In this section, we apply our methods on two real datasets : the first dataset contains macro-economic data from the world bank on development in $45$ African countries, and the second contains micro-economic data from the UK land registry on house transaction prices in London. In order to assess the performance of our method, we have deliberately chosen datasets which contain simultaneously the two variables we want to merge and therefore their relationship - but we split in these two applications these datasets when we apply our method and consider the first variable on the first part and the second variable on the second part, so that the two variables are independent and so that we are in real condition for applying our method. The datasets we use are publicly available and more informations on them can be found in the description of the experiments. We also performed simulations and applied our method on synthetic data, this is to be found in the Appendix~\ref{s:expes2} due to space constraint.

\vspace{-0.3cm}
\subsection{Experiment on real data I: Urbanisation and life expectancy}\label{s:expes1}
\vspace{-0.3cm}

In the first example we look at the impact of urbanisation on life expectancy in $45$ African countries, in $2006$. We use world bank macro-economic data (the Africa Development Indicators (ADI) dataset). We consider the two variables $\mathbf Y=$``average life expectancy" and $\mathbf X=$``urbanisation percentage". The life expectancy is clearly an increasing function of urbanisation since a more urbanised country implies that more infrastructure (electricity, hospitals, etc) is available, and that accessing these public services is much easier.

In this dataset, the cross-information is available (the data-set provides, for each country, the average life expectancy and the urbanisation percentage). This cross information will be used for performance assessment only, and of course, not by our method. %The estimator $\hat h$ is computed without using the cross information . More precisely, we sub-sample independently in $\mathbf X$ and $\mathbf Y$, and work with these sub-samples data. %Although macro-economic data are usually not anonymised (like these ones), and can then be easily matched, it can be very useful for micro-economic datasets which are anonymised, often non centralised (many small data sets are available, one containing e.g.~the first variable of interest, and the other the second variable of interest), and thus, $\mathbf X$ is often difficult to match with $\mathbf Y$.
We do not provide our algorithm with the dependence structure $(\mathbf X,\mathbf Y)$, but we provide it with independent sub-samples of size $30$ of $\mathbf X$ and $\mathbf Y$. We plot the estimate $\hat h$ obtained with our method using different deconvolution distributions in the deconvolution process (the ``true" noise distribution $\xi$ is not available), as well as the points $(X_i,Y_i)$ for illustrating how the estimator of the link function captures the dependence structure from the monotonicity constraint. The results are plotted in Figure~\ref{fig:exp4}, each curve corresponds to an estimate $\hat h$ for which the deconvolution is made using a different deconvolution distribution. We provide results for normal deconvolution distribution with various variances (left plot), and uniform deconvolution distribution with various ranges (right plot). %The estimate are computed 
As expected, the smaller the variance of the deconvolution distribution, the closer the estimator $\hat h$ is to the points from which it is constructed. However, as in standard regression, this comes together with the problem of over-fitting, as illustrated in Table~\ref{ta1} where the risk (the square root of the MSE) is evaluated on an independent sample of 20 countries. This evaluation shows clearly that taking a deconvolution distribution of small variance does not provide an estimator that has good generalization properties, and it is therefore better to take this into account instead of doing a simple quantile matching.%The estimate matches well the data points, and are not too dependent on the deconvolution distributions. The choice of a uniform deconvolution distribution is the most reasonable in this case since the variables are bounded and rescaled between $0$ and $100$, so $\xi$ must have a bounded support.

From the curves in Figure~\ref{fig:exp4}, it seems that urbanisation has a multiplier effect on the life expectancy: indeed, the function $h$ is convex until a threshold. However, there is clearly endogeneity in this model. In particular, the richer the country, the more urbanised it is (in Africa). And the richer the country, the higher the life expectancy. If we want to measure the true effect of urbanisation on life expectancy, we should get rid of this side effect. We should thus control for this effect using a control variate $Z=$``GDP per head", as explained in Subsection~\ref{sec:compl2}. We assume here that there is a linear underlying model, and that we have
\vspace{-0.2cm}
$$X_i = \tilde X_i + \alpha_1  + \beta_1 Z_i \quad \mathrm{and} \quad Y_i = \tilde Y_i + \alpha_2  + \beta_2 Z_i.\vspace{-0.2cm}$$
We estimate the $\tilde X_i$ and the $\tilde Y_i$ using the control $Z$ and doing a linear regression (as in Algorithm~\ref{algo2}). As explained before, urbanisation, even after the control should have a positive effect on the controlled life expectancy, because urbanisation enlarges the access to important public services. The results in this controlled setting are plotted in Figure~\ref{fig:exp5}. The points are the estimates of the controlled $(\tilde X_i,\tilde Y_i)$, and each curve corresponds to a deconvolution with a different noise deconvolution. We observe here that the curve $\hat h$ is now concave - controlling by the GDP per head has cancelled the multiplier effect, although the impact of urbanisation on life expectancy is still positive. It means that some urbanisation has a very positive effect on life expectancy (because it implies better access to vital infrastructure), but that this positive effect is sub linear (the multiplier effect, coming from the fact that the GDP is positively correlated with urbanisation and life expectancy, is suppressed). The MSE (again, evaluated on an independent sample) is displayed in Table~\ref{ta2}
\vspace{-2cm}

\begin{center}
\begin{figure*}[!htb]
\begin{minipage}{8.4cm}
%\hspace{-0.83cm}
\includegraphics[width=8.4cm]{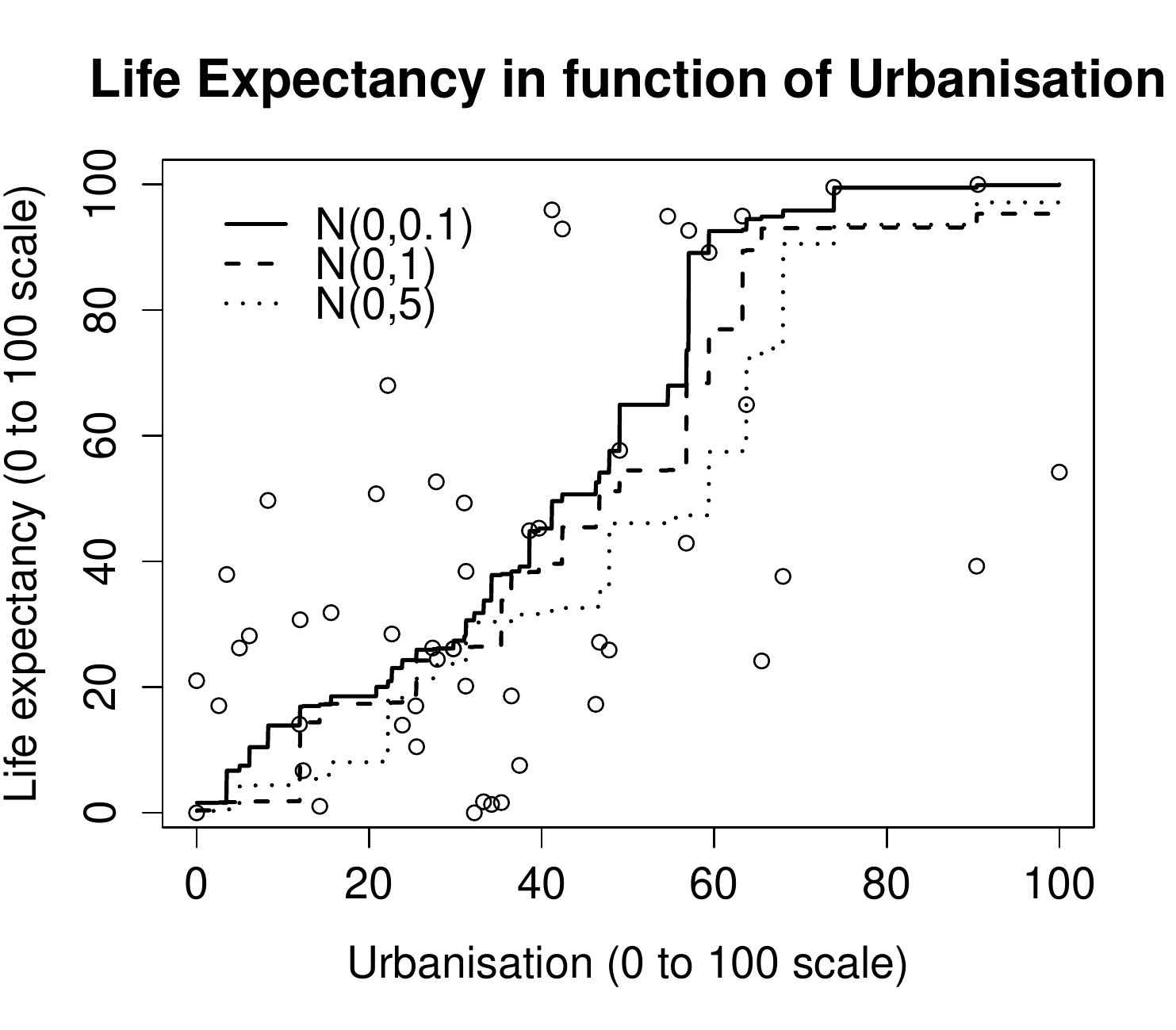}
%\end{center}
\end{minipage}
%\hspace{-1cm}
\begin{minipage}{8.4cm}
\includegraphics[width=8.4cm]{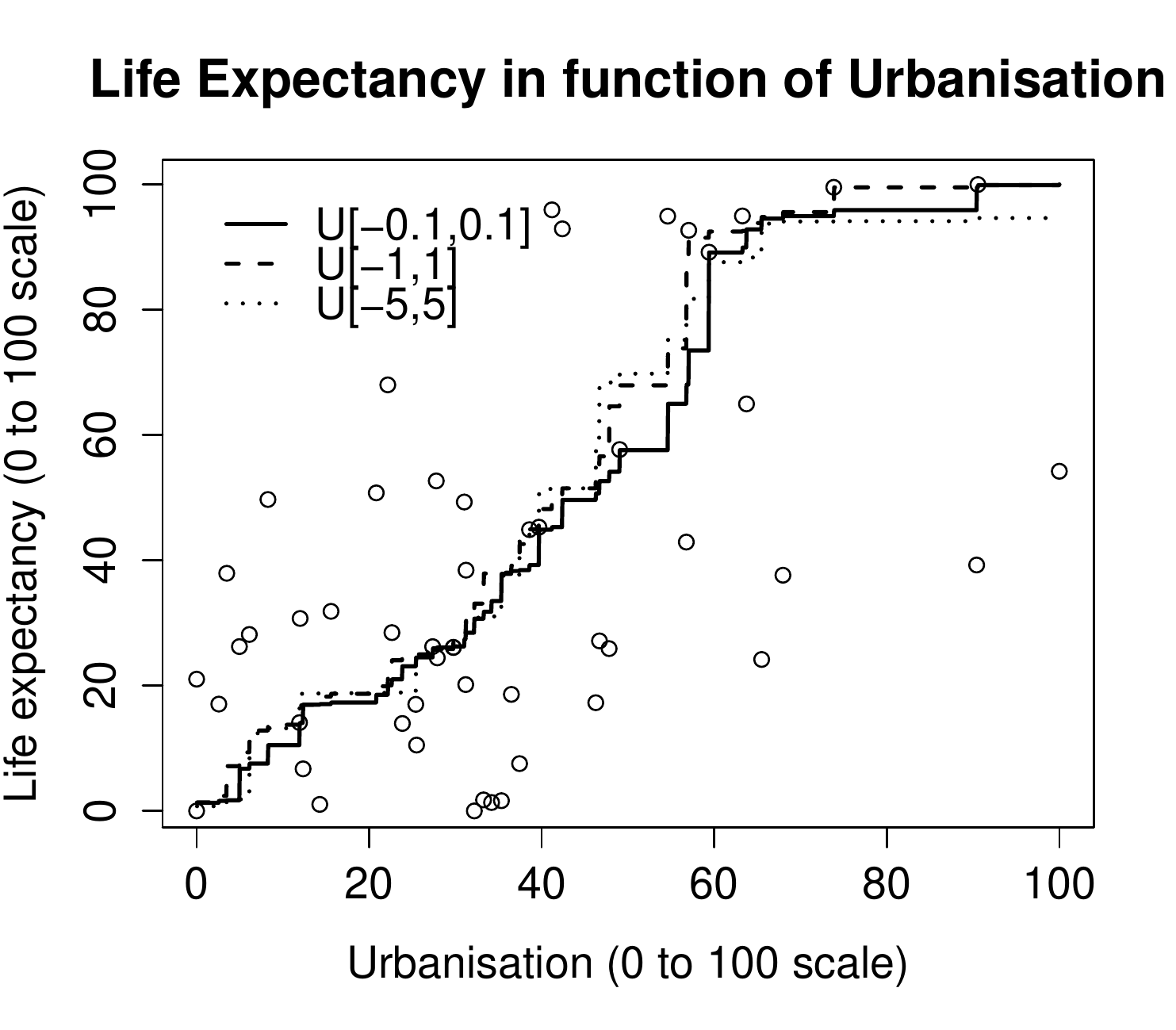}
\end{minipage}
%\vspace{-0.5cm}
\caption{Life expectancy in function of urbanisation percentage (both rescaled on $[0,100]$) in $45$ African countries. Each point corresponds to a country. The lines correspond to $\hat h$ estimated using different deconvolution distributions i.e. $\mathcal N(0,0.1)$, $\mathcal N(0,1)$ and $U([-5,5])$.} \label{fig:exp4}
\end{figure*}
\end{center}
%\vspace{-1cm}

\begin{table*} \centering 
\caption{Risk of the estimates obtained in Figure~\ref{fig:exp4}.} \label{ta1}
%\ra{1.2}
%\begin{center} 
\begin{tabular}{@{}l rrr c rrr@{}} 
\hline
%  \toprule[1.2pt]
&\multicolumn{3}{c}{Normal distribution} && \multicolumn{3}{c}{Uniform distribution}\\ 
\hline
%\cmidrule{2-5} \cmidrule{7-10}
Distribution & $\mathcal N(0,0.1)$ & $\mathcal N(0,1)$ & $\mathcal N(0,5)$ &  \phantom{abc} & $U([-0.1,0.1])$ & $U([-1,1])$  & $U([-5,5])$ \\ 
\hline
%\midrule
MSE & 40.4 & 37.5 &  38.2 & \phantom{abc} & 39.4 & 31.2  & 33.4 \\

\hline
% \bottomrule[1.2pt]
\end{tabular} 
\end{table*}

\begin{center}
\begin{figure*}[!htb]
\begin{minipage}{8.4cm}
%\hspace{-0.83cm}
\includegraphics[width=8.4cm]{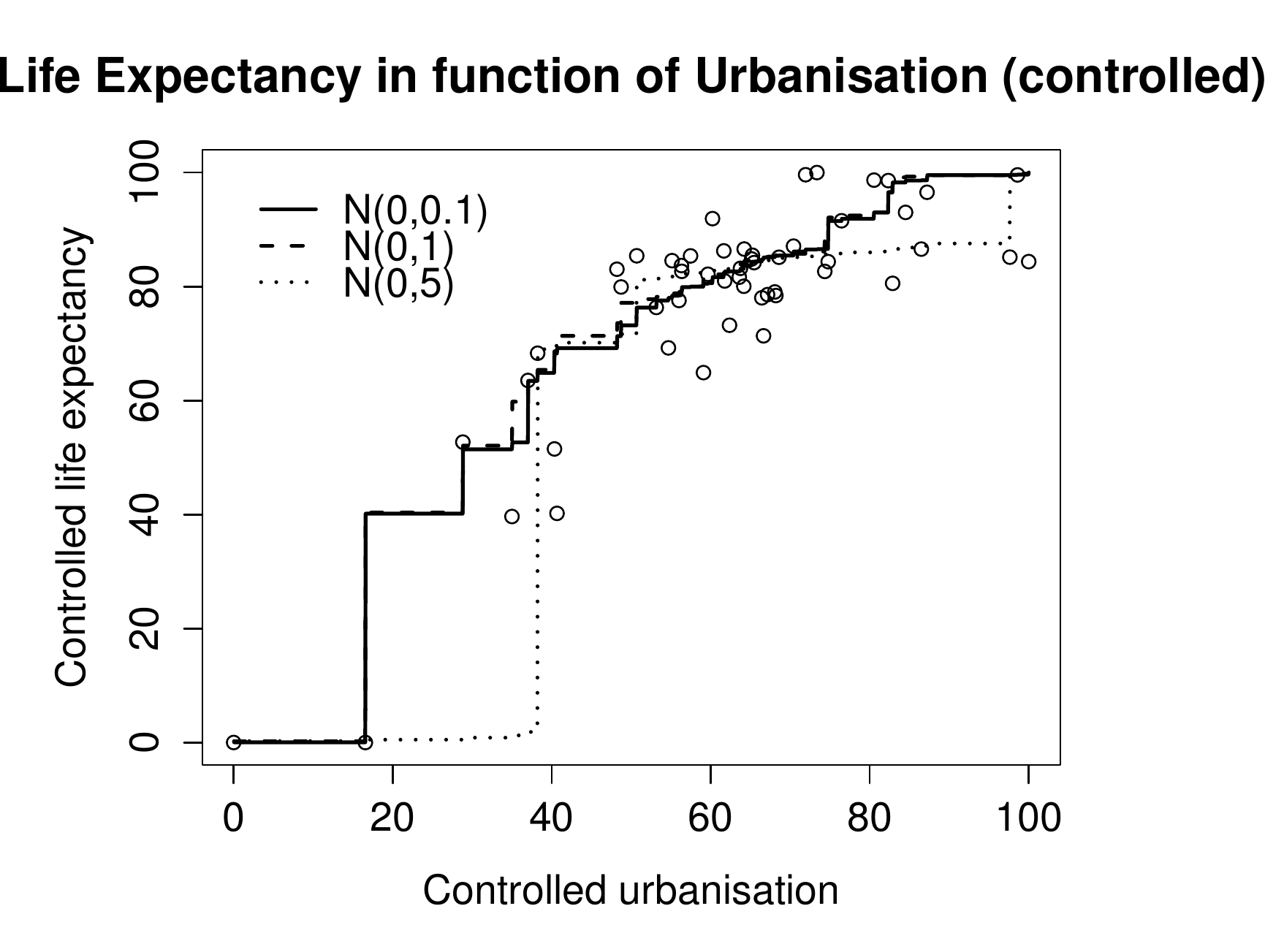}
%\end{center}
\end{minipage}
%\hspace{-1cm}
\begin{minipage}{8.4cm}
\includegraphics[width=8.4cm]{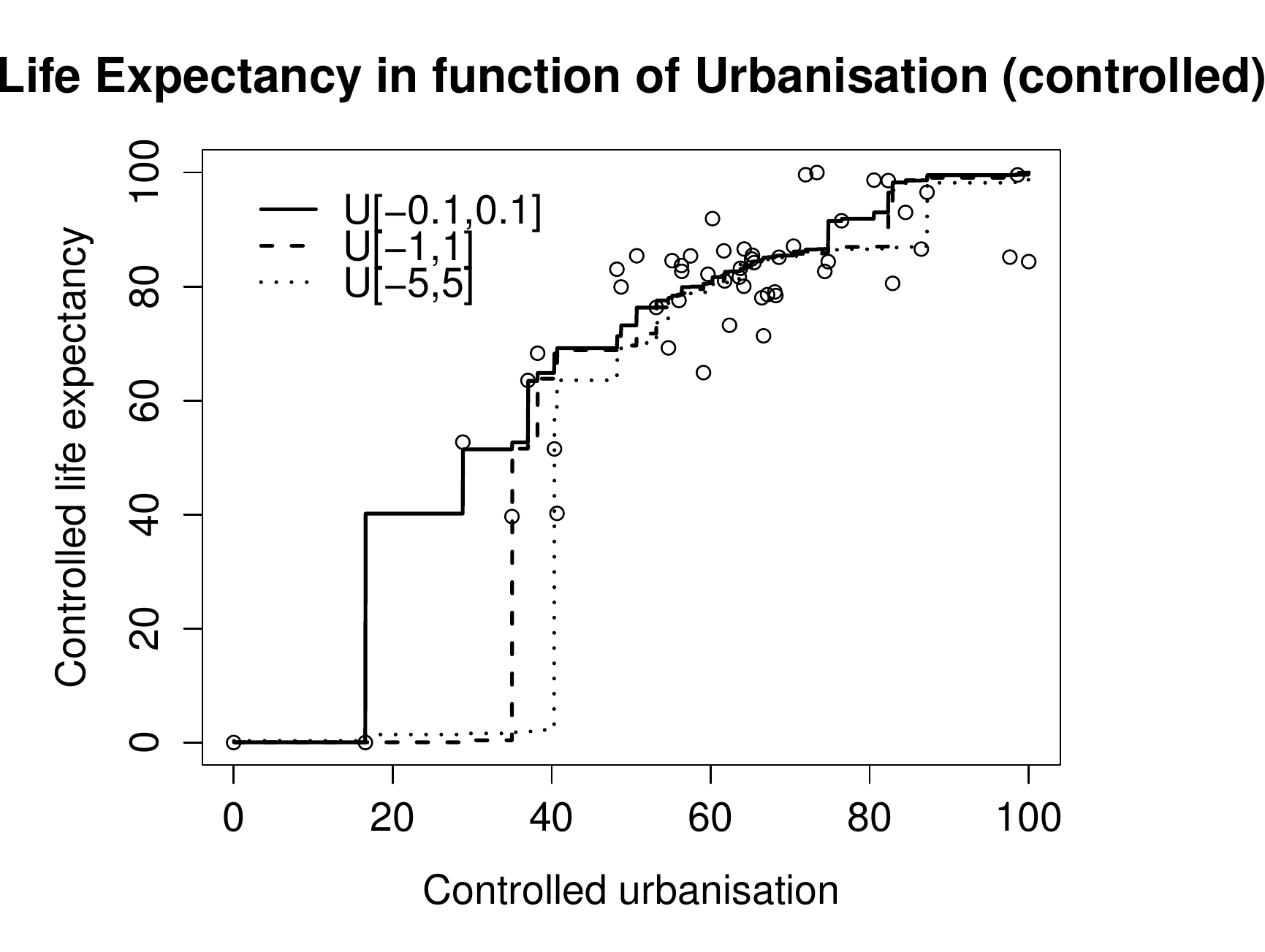}
\end{minipage}
\caption{Controlled life expectancy in function of controlled urbanisation percentage (both rescaled on $[0,100]$) in $45$ African countries. Each point corresponds to a country. The lines correspond to $\hat h$ estimated using different deconvolution distributions i.e. $\mathcal N(0,0.1)$, $\mathcal N(0,1)$ and $U([-0.5,0.5])$.} \label{fig:exp5}
\end{figure*}
\end{center}
%\vspace{-1.4cm}
\begin{table*}[!htb] \centering 
\caption{Risk of the estimates obtained in Figure~\ref{fig:exp5}.} \label{ta2}
%\ra{1.2}
%\begin{center} 
\begin{tabular}{@{}l rrr c rrr@{}} 
\hline
%  \toprule[1.2pt]
&\multicolumn{3}{c}{Normal distribution} && \multicolumn{3}{c}{Uniform distribution}\\ 
\hline
%\cmidrule{2-5} \cmidrule{7-10}
Distribution & $\mathcal N(0,0.1)$ & $\mathcal N(0,1)$ & $\mathcal N(0,5)$ &  \phantom{abc} & $U([-0.1,0.1])$ & $U([-1,1])$  & $U([-5,5])$ \\ 
\hline
%\midrule
MSE & 17.4 & 7.9 &  8.6 & \phantom{abc} & 15.5 & 7.3  & 8.2 \\

\hline
% \bottomrule[1.2pt]
\end{tabular} 
\end{table*}

\vspace{-0.3cm}
\subsection{Experiment on real data II : Property prices and percentage of educated residents}\label{s:expes2}
\vspace{-0.3cm}

In the second example we look at the impact of the the neighborhood share of high skilled residents on local property prices. We use data from the 2011 UK census on the share of residents holding a university degree in electoral wards and prices of 2011 housing transactions available for download at the land registry website\footnote{http://www.landregistry.gov.uk/market-trend-data/public-data/transaction-data}. We consider the two variables $\mathbf Y=$``average price" and $\mathbf X=$``percentage of high skilled residents". House prices in an area are clearly increasing in the local concentration of well educated workers as workers holding university degrees are paid the highest wages and subsequently spend more on housing. We are interested in reconstructing the map of house prices in London, using the percentages of degrees (with geographical co-variate), and the house prices (but without using the geographical information) : we want to merge the percentages of degrees (plus geographical location) with the house prices. We believe that this example is interesting, because unlike the UK, most countries do not provide refined geographical data for house transaction, whereas geographical census data are usually available in developed countries. % \\

Figure \ref{fig:exp6} a) shows the distribution of average house prices in 2011 for 733 London wards. House prices are highest in the west of Inner London and in the south-west of Outer London. With our method we try to reconstruct the local price pattern on the basis of the degree variable without making use of the geographical information available for the house prices. We divide at random our sample in two datasets, one that will serve the purpose of constructing the estimate and the other the one of evaluating it through the MSE (300 for construction of the estimates and 433 for evaluating their performances). Figure \ref{fig:exp6} c) and d) show reconstructions of the original price map using respectively the distributions $U([-0.5,0.5])$, and $U([-2.5,2.5])$ for the deconvolution (the ``true" noise distribution of the noise). The choice of a uniform deconvolution distribution $\xi$ is reasonable in this case (since the variables are bounded and rescaled between $0$ and $100$), and we considered two ranges for $\xi$ (1 and 5). As a comparison method we also  reconstruct the map based on quantile matching, which is equivalent to using a Dirac mass in $0$ in the deconvolution step (Figure \ref{fig:exp6} b)). All three methods are able to generate a price pattern very similar to the original spatial distributing shown in a). The qualitative difference is that the larger the noise with which one deconvolves, the more contrasted the picture gets - removing the smoothing that is due to the noise - and d) is the most contrasted picture. %\\

Figure \ref{fig:exp6bis} shows the estimate $\hat h$ obtained with our method using the different deconvolution distributions, plotted with the second part of the sample for evaluation. We evaluate the performance under the different deconvolution distributions by calculating the mean squared error (MSE), using the distance of each point to the function $\hat h$. Quantile matching results in a MSE of \textbf{8.62}, whereas the uniform distributions yield an MSE of \textbf{8.54} for $\xi=U([-0.5,0.5])$, respectively \textbf{8.23} for $\xi=U([-2.5,2.5])$. Our method outperforms a simple quantile match, when using a reasonable deconvolution distribution (here uniform, since the noise is bounded). %For the UK house price database that we consider, a very refined geographic location of the house transaction is actually available - we use this to control the quality of our results - but a such method can be applied to for other country's databases, which are likely not to be 
%\vspace{-0.cm}
%\begin{center}

\paragraph{Remark regarding the deconvolution noise in the empirical results} Empirically, if the noise distribution is unknown, we found out that in all studied cases, deconvolving with some "reasonable" distribution - even if it is not the correct one - outperforms quantile matching. The "reasonable" distribution does not need to be the true noise distribution, which is unknown, but it can be a distribution deduced from some prior knowledge on the noise. This knowledge does not need to be precise. In the examples of Subsection~\ref{s:expes1}, deconvolving with a distribution of very small variance is giving bad results with respect to the MSE while bigger noises are performing significantly better, see Table 1 and 2 (improvement ranges between 30\% to more than 50\% in Table 2). We did not show results for pure quantile matching because they are slightly worse than deconvolution with $\mathcal N(0,0.1)$. Also, deconvolution with the uniform distribution or the Gaussian distribution of variance ranging from $1$ to $2\times 5^2$ provides comparable results - although these distributions are significantly different and largely span the set of "reasonable" distributions. Similar results can be seen in the MSE in Figure 4 for the second example and in the Appendix on synthetic data. This highlights the fact that one does not need to have a precise knowledge of the noise distribution to gain something significant with respect to quantile matching.

\vspace{-0.3cm}

%\newpage
\section*{Conclusion} \vspace{-0.3cm}We developed in this paper a new method for merging variables. It can be used as a complement to matching - or also on its own when no contextual variables are available. It is easy to implement and provides good results, both in theory and in practice, provided that the dependence function $h$ is monotone.

\begin{figure*}[!p]
%\hspace{-1cm}
\begin{minipage}{8.4cm}
a)\\
\includegraphics[width=7.4cm, trim=0 0 3cm 0]{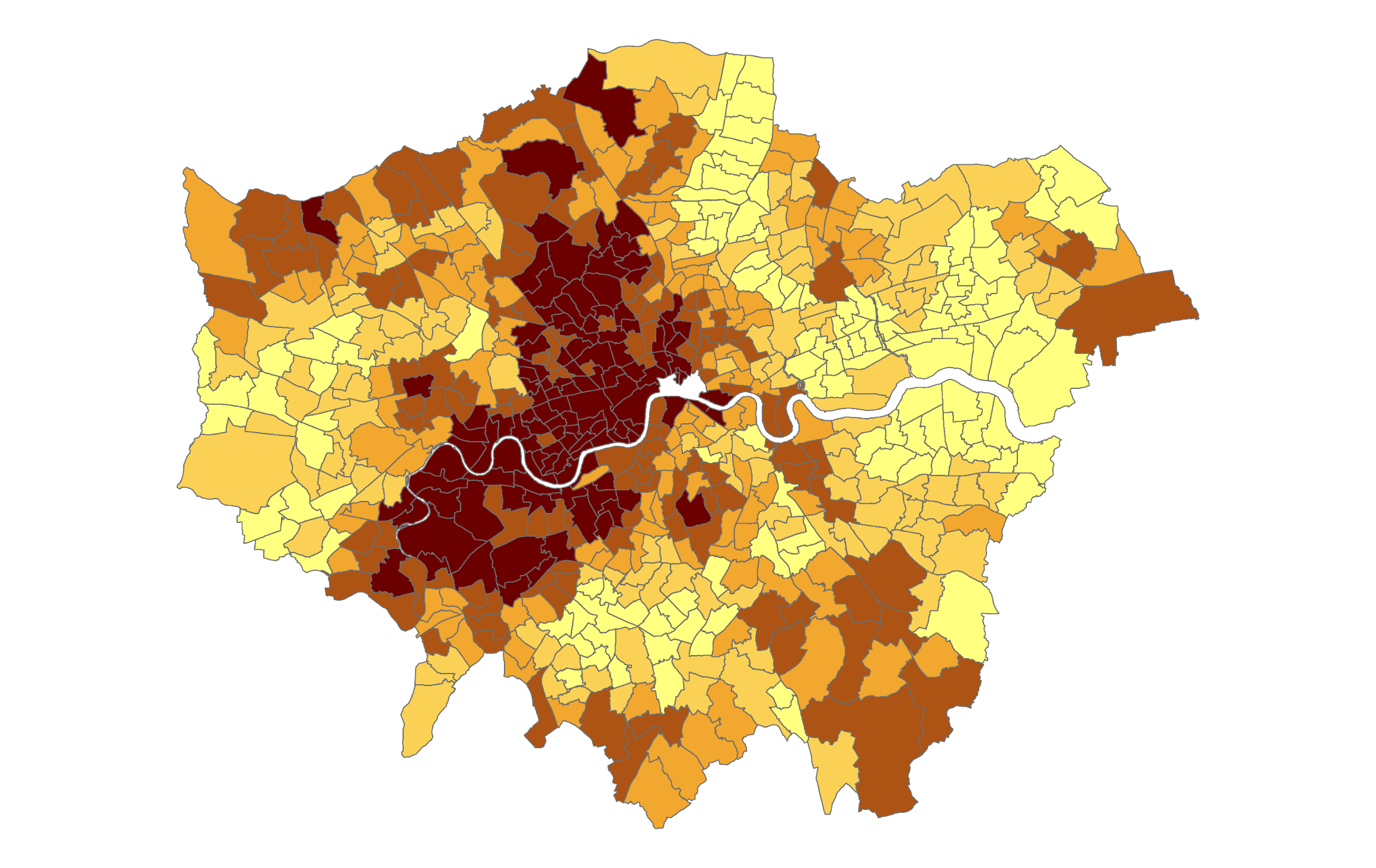}
\end{minipage}
\begin{minipage}{8.4cm}
%\hspace{-0.83cm}
b)\\
\includegraphics[width=8.4cm, trim=0 0 3cm 0]{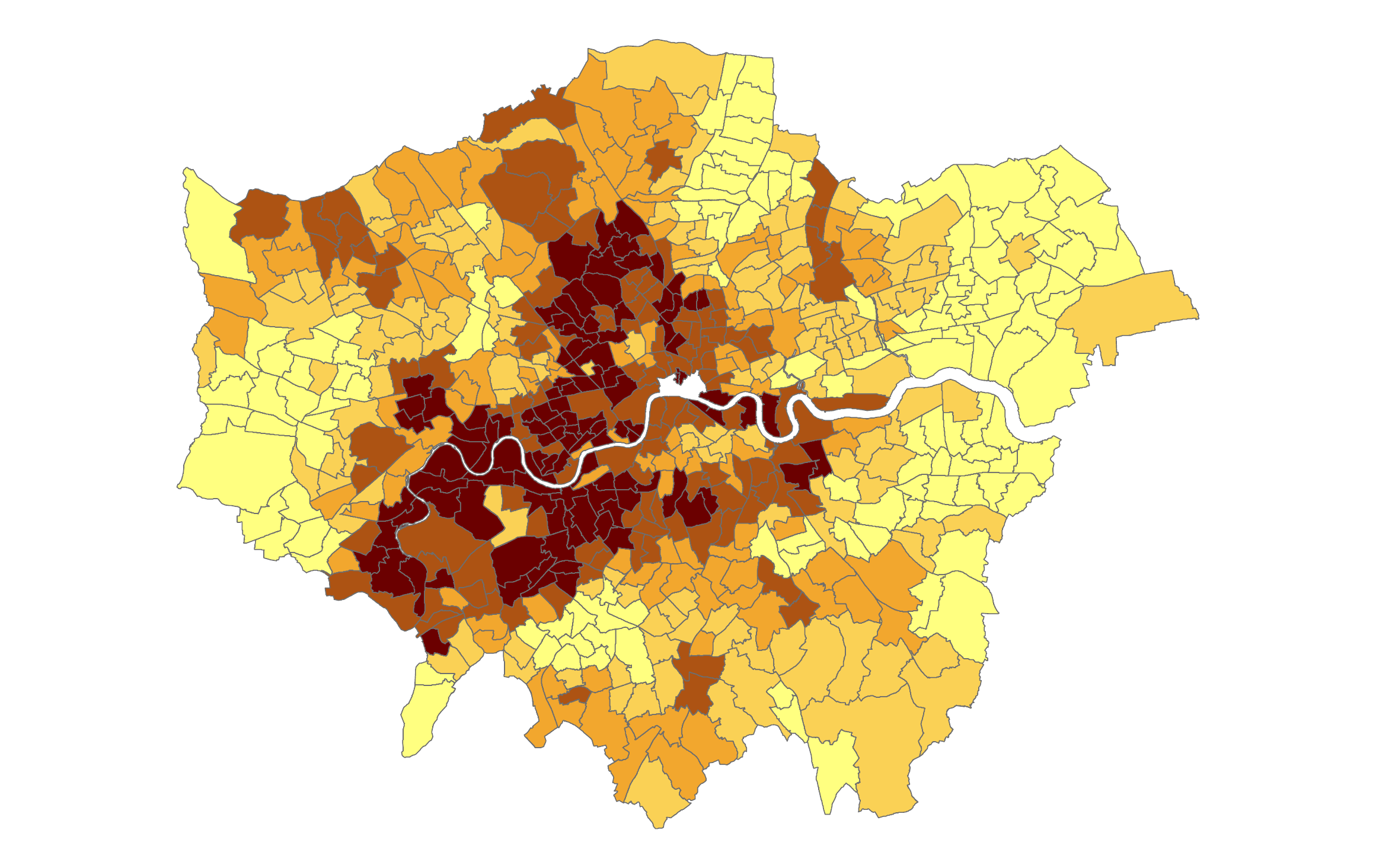}
%\end{center}
\end{minipage}
%\begin{center}
%\begin{figure*}[t]
\begin{minipage}{8.4cm}
%\hspace{-0.83cm}
c)\\
\includegraphics[width=8.4cm, trim=0 0 3cm 0]{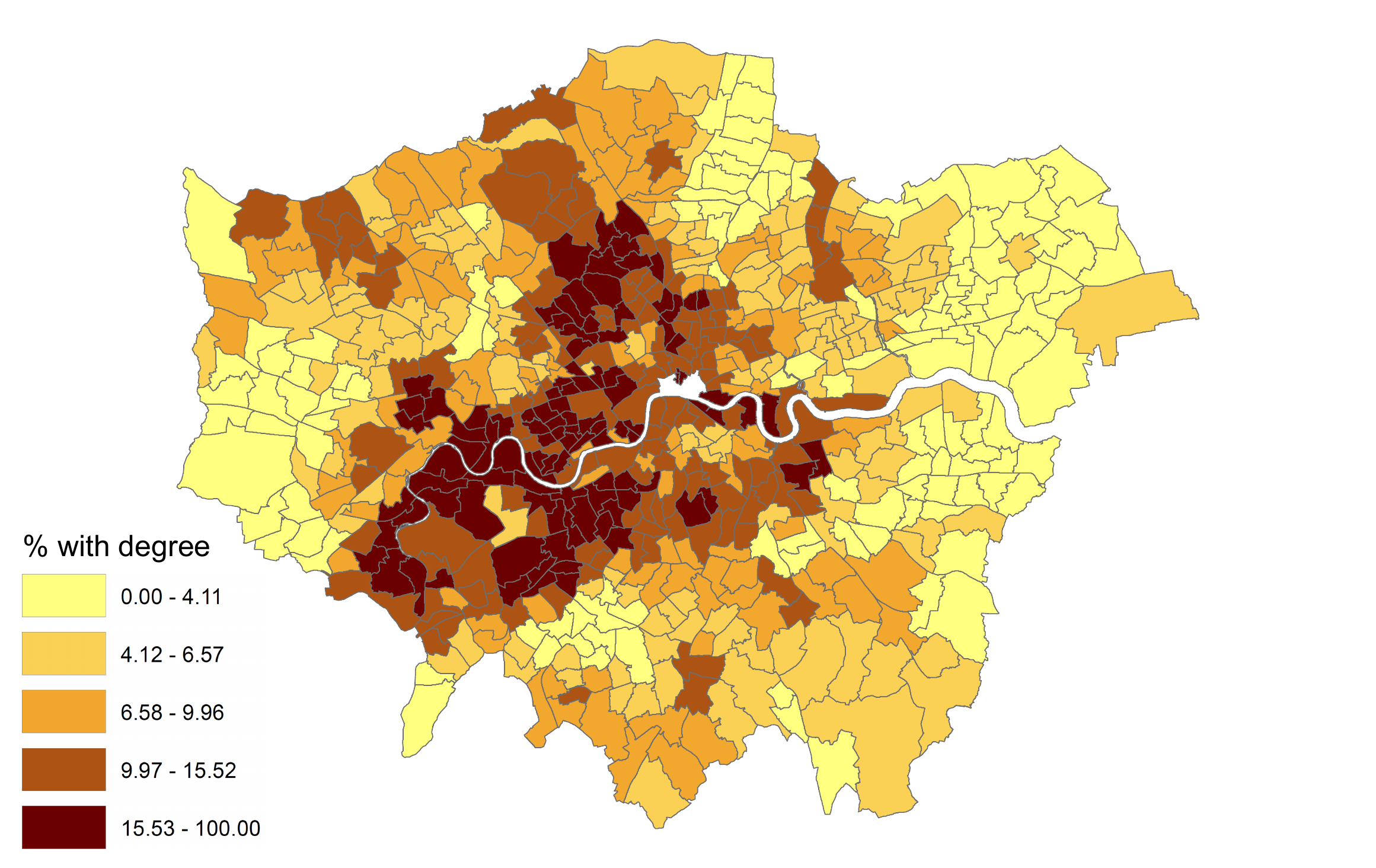}
%\end{center}
\end{minipage}
%\hspace{-1cm}
\begin{minipage}{8.4cm}
d)\\
\includegraphics[width=8.4cm, trim=0 0 3cm 0]{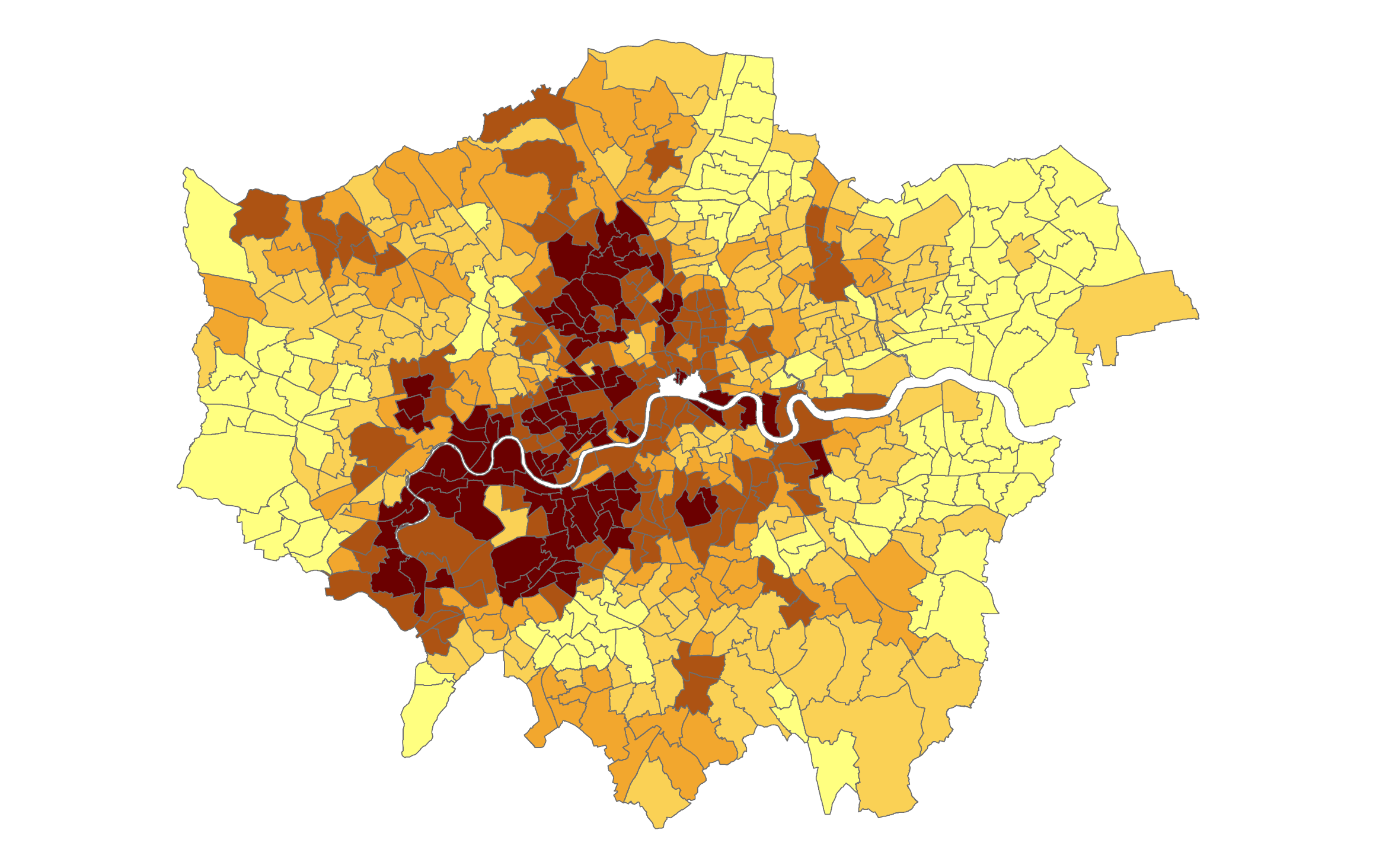}
\end{minipage}
\caption{Reconstruction of the house price map using the percentage of residents holding a degree in each ward. Maps from Up to Down and Left to Right: (UL) True map of average house prices in 733 London wards, (UR) Reconstruction of the map based on quantile matching $\xi=0$, (DL) Reconstruction of the map based on our method with $\xi=U([-0.5,0.5])$ and (DR) Reconstruction of the map based on our method with $\xi=U([-2.5,2.5])$} \label{fig:exp6}
%\end{center}
%\end{figure*}
%\end{center}
%\vspace{-6cm}
%\begin{center}
%\begin{figure*}[!htb]
\begin{minipage}{5.5cm}
%\hspace{-0.83cm}
\includegraphics[width=6cm]{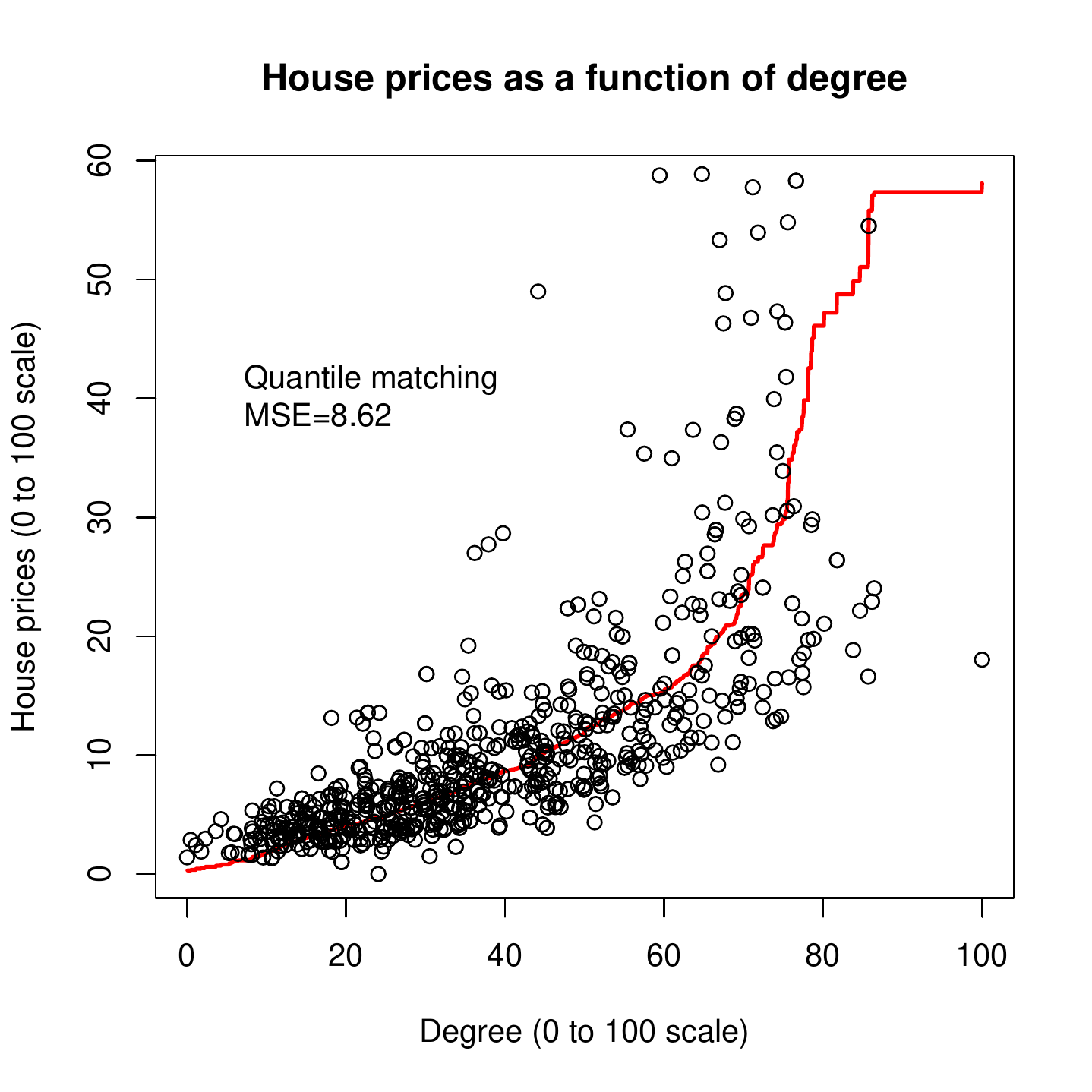}
%\end{center}
\end{minipage}
\begin{minipage}{5.5cm}
%\hspace{-0.83cm}
\includegraphics[width=6cm]{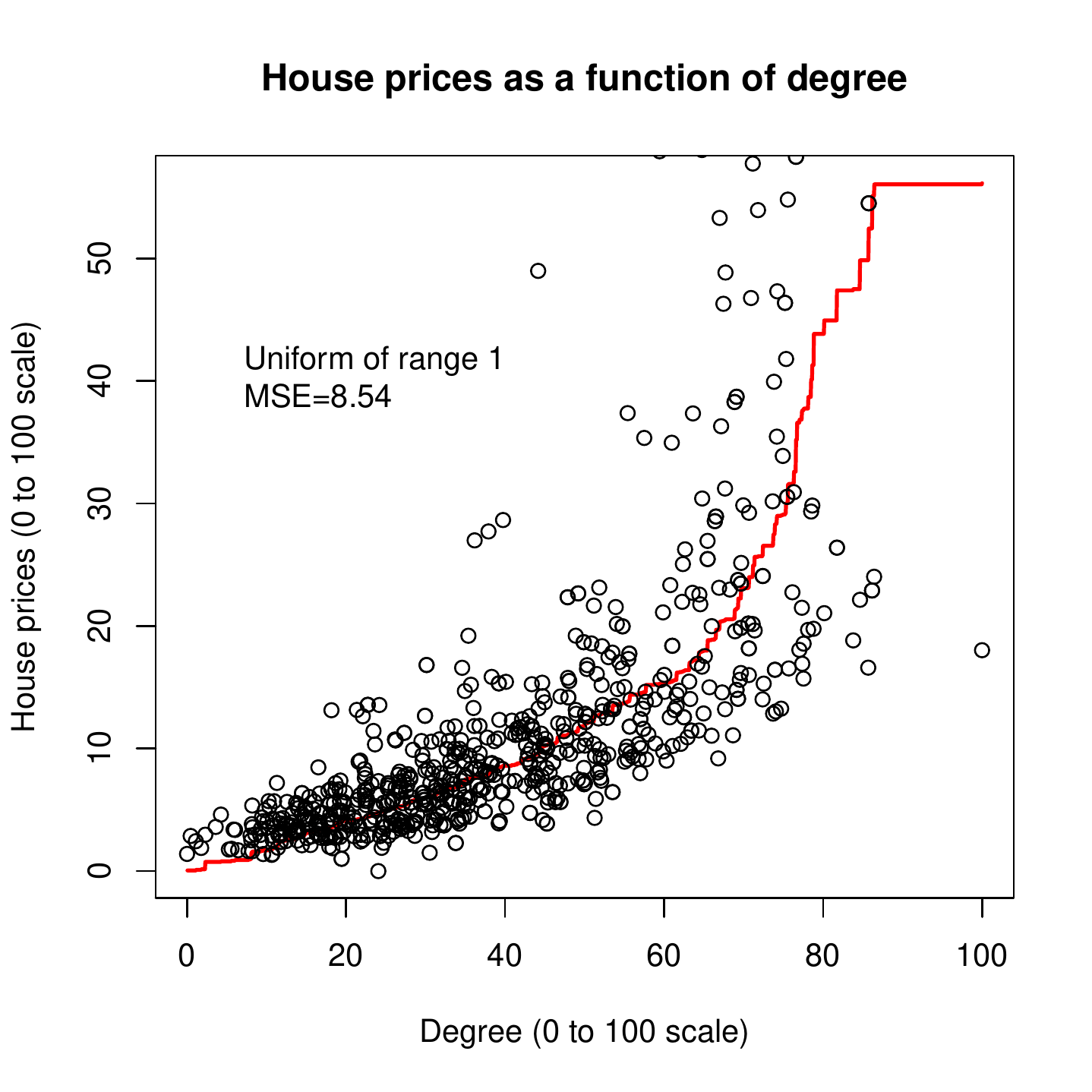}
%\end{center}
\end{minipage}
%\hspace{-1cm}
\begin{minipage}{5.5cm}
\includegraphics[width=6cm]{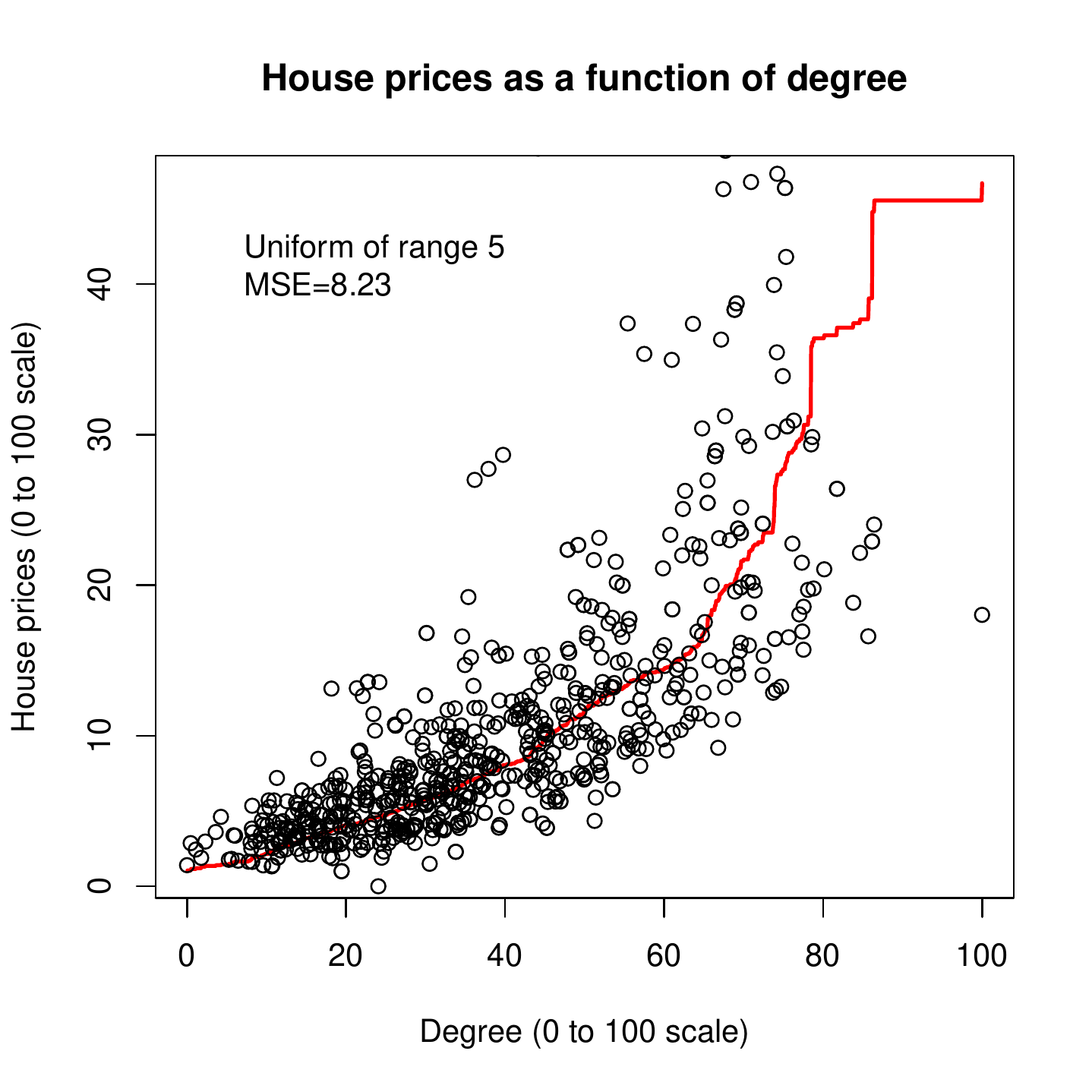}
\end{minipage}
\caption{House prices in a geographical unit in function of degree percentage in the same geographical unit (both rescaled on $[0,100]$) in $733$ London wards. Each point corresponds to a ward. The red lines correspond to $\hat h$ estimated using different distributions $\xi$. From left to right: quantile matching ($\xi = 0$), $U([-0.5,0.5])$ and $U([-2.5,2.5])$. The MSE (mean squared error) is the mean squared error between the line and the points.} \label{fig:exp6bis}
%\vspace{-0.5cm}
\end{figure*}
%\end{center}

\paragraph{Acknowledgements} Part of this work was produced when AC was in the StatsLab in the University of Cambridge. AC's work is supported since 2015 by the DFG's Emmy Noether grant MuSyAD (CA 1488/1-1).

\newpage
{

\bibliography{allocation,bib}

\begin{thebibliography}{10}

\bibitem{auer2013introduction}
S{\"o}ren Auer, Jens Lehmann, Axel-Cyrille~Ngonga Ngomo, and Amrapali Zaveri.
\newblock Introduction to linked data and its lifecycle on the web.
\newblock In {\em Reasoning Web. Semantic Technologies for Intelligent Data
  Access}, pages 1--90. Springer, 2013.

\bibitem{bell1995information}
Anthony~J Bell and Terrence~J Sejnowski.
\newblock An information-maximization approach to blind separation and blind
  deconvolution.
\newblock {\em Neural computation}, 7(6):1129--1159, 1995.

\bibitem{bleiholder2008data}
Jens Bleiholder and Felix Naumann.
\newblock Data fusion.
\newblock {\em ACM Computing Surveys (CSUR)}, 41(1):1, 2008.

\bibitem{carroll1988optimal}
Raymond~J Carroll and Peter Hall.
\newblock Optimal rates of convergence for deconvolving a density.
\newblock {\em Journal of the American Statistical Association},
  83(404):1184--1186, 1988.

\bibitem{chan1998total}
Tony~F Chan and Chiu-Kwong Wong.
\newblock Total variation blind deconvolution.
\newblock {\em Image Processing, IEEE Transactions on}, 7(3):370--375, 1998.

\bibitem{cohen2002learning}
William~W Cohen and Jacob Richman.
\newblock Learning to match and cluster large high-dimensional data sets for
  data integration.
\newblock In {\em Proceedings of the eighth ACM SIGKDD international conference
  on Knowledge discovery and data mining}, pages 475--480. ACM, 2002.

\bibitem{cordy1997deconvolution}
Clifford~B Cordy and David~R Thomas.
\newblock Deconvolution of a distribution function.
\newblock {\em Journal of the American Statistical Association},
  92(440):1459--1465, 1997.

\bibitem{dattner2011deconvolution}
Itai Dattner, A~Goldenshluger, A~Juditsky, et~al.
\newblock On deconvolution of distribution functions.
\newblock {\em The Annals of Statistics}, 39(5):2477--2501, 2011.

\bibitem{doksum1976plotting}
Kjell~A Doksum and Gerald~L Sievers.
\newblock Plotting with confidence: Graphical comparisons of two populations.
\newblock {\em Biometrika}, 63(3):421--434, 1976.

\bibitem{dong2009data}
Xin~Luna Dong and Felix Naumann.
\newblock Data fusion: resolving data conflicts for integration.
\newblock {\em Proceedings of the VLDB Endowment}, 2(2):1654--1655, 2009.

\bibitem{dong2013big}
Xin~Luna Dong and Divesh Srivastava.
\newblock Big data integration.
\newblock In {\em Data Engineering (ICDE), 2013 IEEE 29th International
  Conference on}, pages 1245--1248. IEEE, 2013.

\bibitem{einmahl1999confidence}
John~HJ Einmahl, Ian~W McKeague, et~al.
\newblock Confidence tubes for multiple quantile plots via empirical
  likelihood.
\newblock {\em The Annals of Statistics}, 27(4):1348--1367, 1999.

\bibitem{fan1991optimal}
Jianqing Fan et~al.
\newblock On the optimal rates of convergence for nonparametric deconvolution
  problems.
\newblock {\em The Annals of Statistics}, 19(3):1257--1272, 1991.

\bibitem{lombard2005nonparametric}
F~Lombard.
\newblock Nonparametric confidence bands for a quantile comparison function.
\newblock {\em Technometrics}, 47(3), 2005.

\bibitem{monge1996field}
Alvaro~E Monge, Charles Elkan, et~al.
\newblock The field matching problem: Algorithms and applications.
\newblock In {\em KDD}, pages 267--270, 1996.

\bibitem{moulines1997maximum}
Eric Moulines, J~Cardoso, and Elisabeth Gassiat.
\newblock Maximum likelihood for blind separation and deconvolution of noisy
  signals using mixture models.
\newblock In {\em Acoustics, Speech, and Signal Processing, 1997. ICASSP-97.,
  1997 IEEE International Conference on}, volume~5, pages 3617--3620. IEEE,
  1997.

\bibitem{starck2003wavelets}
Jean-Luc Starck, Mai~K Nguyen, and Fionn Murtagh.
\newblock Wavelets and curvelets for image deconvolution: a combined approach.
\newblock {\em Signal Processing}, 83(10):2279--2283, 2003.

\bibitem{walter1999matching}
Volker Walter and Dieter Fritsch.
\newblock Matching spatial data sets: a statistical approach.
\newblock {\em International Journal of Geographical Information Science},
  13(5):445--473, 1999.

\bibitem{wilcox2012comparing}
Rand~R Wilcox and David~M Erceg-Hurn.
\newblock Comparing two dependent groups via quantiles.
\newblock {\em Journal of Applied Statistics}, 39(12):2655--2664, 2012.

\end{thebibliography}
\bibliographystyle{plain}

}

%\end{document}

\newpage

\appendix

\section{Proofs and technical results}
%{\huge Appendices}

%\section{Proofs}

\subsection{Existence of a deconvolution estimator}\label{app:decon}

There is a huge literature on deconvolution, and many estimators are available, see e.g.~\cite{dattner2011deconvolution,chan1998total, fan1991optimal, cordy1997deconvolution, carroll1988optimal, bell1995information, chan1998total,moulines1997maximum, starck2003wavelets}. A simple example of a such estimator is as follows. Let $\mathcal F(.)$ be the Fourier transform, and $\mathcal F^{-1}(.)$ be the inverse Fourier transform, and let $\hat P_{\mathbf Y^Z}$ be the empirical distribution of $\mathbf Y^Z$. An estimator of the density $f_{h,Z}$ can be defined as follows
$$\hat f_{h,Z} = \mathcal F^{-1}\Bigg(\frac{\mathcal F(\hat P_{\mathbf Y^Z})\Big|_{\tau}}{\mathcal F(\xi_Z)}\Bigg),$$
where $\mathcal F(\hat P_{\mathbf Y^Z})\Big|_{\tau}$ is the truncated Fourier transform of $\mathbb P_{\hat Y^Z}$ where any coefficients smaller than $\tau$ in absolute value are set to $0$. The estimator presented in Equation~\eqref{eq:estdec} is a very simple estimate, and is not always optimal. Many other estimators have been proposed in the previously quoted papers, are more effective, and should be preferred to this one in practice.

The plug-in estimate of the distribution function associated to $\hat f_{h,Z}$, i.e.~$\hat F_{h,Z}(x) = \int_{-\infty}^x \hat f_{h,Z}(u)du$ of $F_{h,Z}$ will converge to $F_{h,Z}$ at a rate $\psi$ which depends on the regularity $s$ of $f_{h,Z}$ (the more regular $f_{h,Z}$, the faster the rate), and of the decay of the Fourier spectrum of $\xi_Z$ (the heavier the tails of $\mathcal F(\xi_Z)$, the faster the rate), see e.g.~\cite{dattner2011deconvolution}. For instance, if the function $f_{h,Z}$ is in a H\"older ball of smoothness $s$ (see Assumption~\ref{ass:Holder} below), and if the noise $\xi_Z$ is ``ordinary smooth" (polynomial decay of $\mathcal F(\xi_Z)$), then $\psi(n_Z,\delta) = C (n_Z/\log(1/\delta))^{-\epsilon}$, where $C,\epsilon>0$ depend on $s$ and on the polynomial decay rate of $\mathcal F(\xi_Z)$. In e.g.~Gaussian noise, the rate is worse because the Fourier transform of a Gaussian decays very quickly, and the rate is $\psi(n_Z,\delta) = C (\log(n_Z/\delta))^{-\epsilon}$, where $\epsilon$ depends on $s$.

\subsection{Proof of Theorem~\ref{th:cocobleu}}\label{pr:cocobleu}

In this proof, in order to avoid notational heaviness, we write for convenience $\mathbf X$ instead of $\mathbf X^Z$ and $\mathbf Y$ instead of $\mathbf Y^Z$. We also write $F$ for $F_Z$ and $F_h$ for $F_{h,Z}$ (and the same simplification for the estimators).

We first state the two following lemma.

\begin{lemma}[Estimation of the distribution $F$]\label{lem:estF}
Let $\delta>0$. Let $x \in \mathbb R$. There exists an estimator $\hat F$ of $F$ computed using the $\mathbf X$ that is such that with probability larger than $1-\delta$
$$|F(x) - \hat F(x)| \leq \sqrt{\frac{\log(2/\delta)}{m}} + \frac{16\log(2/\delta)}{m}  =\phi(\delta,m) := \phi.$$  
\end{lemma}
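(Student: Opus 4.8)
The plan is to take $\hat F$ to be the ordinary empirical distribution function of the sample $\mathbf X$, namely $\hat F(x) = \frac{1}{m}\sum_{i=1}^m \mathbf 1\{X_i \le x\}$ — exactly the ``empirical estimator of $F_Z$'' invoked in Algorithm~\ref{algo1}. Fix the evaluation point $x$. Then the summands $B_i := \mathbf 1\{X_i \le x\}$ are i.i.d.\ Bernoulli variables with mean $F(x)$, take values in $[0,1]$, and have variance $F(x)(1-F(x)) \le 1/4$. Thus $|F(x) - \hat F(x)| = \frac{1}{m}\bigl|\sum_{i=1}^m (B_i - \mathbb E B_i)\bigr|$ is the deviation of an average of bounded, small-variance i.i.d.\ variables, and the statement is a routine pointwise concentration bound for the empirical c.d.f.

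First I would apply a two-sided Bernstein-type inequality: for every $t>0$,
\[
\P\!\left( |\hat F(x) - F(x)| \ge t \right) \;\le\; 2\exp\!\left( - \frac{m t^2}{2\sigma^2 + \tfrac{2}{3}t} \right),\qquad \sigma^2 = F(x)(1-F(x)) \le \tfrac14 .
\]
Setting the right-hand side equal to $\delta$ and writing $L := \log(2/\delta)$, this is the quadratic inequality $m t^2 \ge 2\sigma^2 L + \tfrac23 t L$ in $t$; solving it, bounding $\sigma^2 \le 1/4$, and using $\sqrt{a+b}\le\sqrt a+\sqrt b$ gives a bound of the form $t \le \sqrt{L/m} + c\,L/m$ for an absolute constant $c$. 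Tracking the constants (and being deliberately generous, so as to keep the statement of $\phi$ clean) yields $c=16$, i.e.\ with probability at least $1-\delta$ one has $|F(x) - \hat F(x)| \le \sqrt{\log(2/\delta)/m} + 16\log(2/\delta)/m = \phi(\delta,m)$. Alternatively, Hoeffding's inequality alone already gives $|\hat F(x)-F(x)|\le\sqrt{\log(2/\delta)/(2m)}$, which by itself implies the claimed bound; I keep the stated second-order form because that is the shape in which $\phi$ is later added to the deconvolution error $\psi$ in Theorems~\ref{th:cocobleu} and~\ref{th:cocobleu2}.

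There is essentially no serious obstacle: the only care needed is (i) checking that this $\hat F$ is the same estimator that enters $\hat h_Z = \hat F_{h,Z}^{-1}\circ\hat F_Z$ in the main theorems, and (ii) bookkeeping the constants when inverting the concentration bound. In particular, only the pointwise version is required, since the point $u$ (equivalently $x$) is held fixed throughout Theorems~\ref{th:cocobleu} and~\ref{th:cocobleu2}, so no uniform-in-$x$ device (DKW, VC-type arguments) is needed here.
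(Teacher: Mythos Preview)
Your proposal is correct and matches the paper's own proof essentially line for line: the paper also takes $\hat F(x)=\frac{1}{m}\sum_i \mathbf 1\{X_i\le x\}$, observes that the indicators are i.i.d.\ Bernoulli$(F(x))$, and applies Bernstein's inequality to obtain $|F(x)-\hat F(x)|\le 2\sqrt{F(x)(1-F(x))\log(2/\delta)/m}+16\log(2/\delta)/m$, then bounds $F(x)(1-F(x))\le 1/4$. Your additional remarks (that Hoeffding alone would already suffice, and that only the pointwise bound is needed) are correct and go slightly beyond what the paper states.
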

\begin{proof}[Proof of Lemma~\ref{lem:estF}]
The samples $\mathbf X$ are i.i.d.~distributed according to $F$. This implies that the $\mathbf 1\{X_i \leq x\}$ are i.i.d.~Bernoulli random variables of parameter $F(x)$. Consider the classic estimator 
$\hat F(x) = \frac{1}{m} \sum_{i=1}^m \mathbf 1\{X_i \leq x\}.$
Applying Bernstein's inequality to this estimate implies that with probability $1-\delta$
$$|F(x) - \hat F(x)| \leq 2\sqrt{F(x)(1-F(x))\frac{\log(2/\delta)}{m}} + \frac{16\log(2/\delta)}{m},$$
which implies the result.
\end{proof}

\begin{lemma}[Quantile confidence set]\label{lem:quantconf}
Let $\delta \in (0,1)$. Let $G$ be a distribution, and $\hat G$ be the an estimate of $G$ such that for any $x \in \mathbb R$, on an event of probability $1-\delta$,
$$|\hat G(x) - G(x)| \leq \lambda(\delta):=\lambda.$$
Let $\epsilon \in (0,1)$. With probability $1-2\delta$, we have
$$G^{-1}(\epsilon-\lambda) \leq \hat G^{-1}(\epsilon) \leq G^{-1}(\epsilon+\lambda),$$
where $G^{-1}$ and $\hat G^{-1}$ are the pseudo-inverses of $G$ and $\hat G$.% in $\epsilon$ and $\hat G^{-1}$ is the pseudo-inverses of $\hat G$. 
\end{lemma}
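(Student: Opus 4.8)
The plan is to prove the two inequalities $G^{-1}(\epsilon - \lambda) \leq \hat G^{-1}(\epsilon)$ and $\hat G^{-1}(\epsilon) \leq G^{-1}(\epsilon + \lambda)$ separately, using the characterization of the pseudo-inverse $G^{-1}(t) = \inf\{x : G(x) \geq t\}$ together with monotonicity of $G$. The starting point is that by hypothesis there is an event $\mathcal E$ of probability at least $1 - \delta$ on which $|\hat G(x) - G(x)| \leq \lambda$ for the relevant point $x$; since we will need the comparison at (essentially) two points — roughly, at $x = \hat G^{-1}(\epsilon)$ and at $x = G^{-1}(\epsilon \pm \lambda)$ — a union bound over these gives an event of probability at least $1 - 2\delta$ on which both pointwise comparisons hold, which is the source of the $1 - 2\delta$ in the statement. (One should be slightly careful that the point $\hat G^{-1}(\epsilon)$ is itself random; the clean way is to invoke the pointwise bound in the form that it holds uniformly, or to apply it at the two deterministic quantiles and use the definition of $\hat G^{-1}$ as an infimum to pass to the limit — I would phrase it via the deterministic points to avoid measurability fuss.)

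For the upper bound $\hat G^{-1}(\epsilon) \leq G^{-1}(\epsilon + \lambda)$: set $x_0 = G^{-1}(\epsilon + \lambda)$, so that $G(x_0) \geq \epsilon + \lambda$ by right-continuity of $G$ (or by the infimum definition together with monotonicity, taking a limit from the right). On the event where $|\hat G(x_0) - G(x_0)| \leq \lambda$ we get $\hat G(x_0) \geq G(x_0) - \lambda \geq \epsilon$, and hence $x_0 \in \{x : \hat G(x) \geq \epsilon\}$, which by definition of the pseudo-inverse as an infimum gives $\hat G^{-1}(\epsilon) \leq x_0 = G^{-1}(\epsilon + \lambda)$. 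For the lower bound $G^{-1}(\epsilon - \lambda) \leq \hat G^{-1}(\epsilon)$: let $x_1$ be any point with $x_1 < \hat G^{-1}(\epsilon)$. By definition of the infimum, $\hat G(x_1) < \epsilon$, so on the event where $|\hat G(x_1) - G(x_1)| \leq \lambda$ we get $G(x_1) < \epsilon + \lambda$... wait — that's the wrong direction; instead I use $G(x_1) \leq \hat G(x_1) + \lambda < \epsilon + \lambda$, which shows $x_1 \notin \{x : G(x) \geq \epsilon + \lambda\}$, giving the companion bound $\hat G^{-1}(\epsilon) \leq G^{-1}(\epsilon+\lambda)$ again; for the genuine lower bound I instead argue: $G(x_1) \geq \hat G(x_1) - \lambda$ is useless when $\hat G(x_1) < \epsilon$, so rather I show directly that if $x < G^{-1}(\epsilon - \lambda)$ then $G(x) < \epsilon - \lambda$, hence $\hat G(x) \leq G(x) + \lambda < \epsilon$, hence $x < \hat G^{-1}(\epsilon)$; taking the supremum over such $x$ yields $G^{-1}(\epsilon - \lambda) \leq \hat G^{-1}(\epsilon)$.

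So the skeleton is: (i) recall $G^{-1}(t) = \inf\{x: G(x) \geq t\}$ and the elementary facts that $G(x) \geq t \Rightarrow x \geq G^{-1}(t)$ and $x < G^{-1}(t) \Rightarrow G(x) < t$; (ii) fix the two deterministic evaluation points $G^{-1}(\epsilon - \lambda)$ and $G^{-1}(\epsilon + \lambda)$ and apply the hypothesis at each, union-bounding to get probability $1 - 2\delta$; (iii) run the two one-line comparisons above. The main obstacle — really the only subtlety worth care — is the handling of the random evaluation point and the boundary behaviour of the pseudo-inverse (left- versus right-continuity, strictness of inequalities at plateaus of $G$). I would sidestep it by always evaluating at the deterministic quantiles and working with the infimum characterization plus limiting arguments, which keeps every step to a one-line monotonicity manipulation and never requires evaluating $\hat G$ at a random argument. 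Everything else is routine.
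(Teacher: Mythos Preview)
Your approach is essentially the paper's: evaluate the pointwise concentration hypothesis at two deterministic shifted quantiles and use monotonicity together with the generalized-inverse definition to sandwich $\hat G^{-1}(\epsilon)$. The paper does this by setting $x^{\pm} = G^{-1}(\epsilon \pm 2\lambda)$, assuming local invertibility so that $G(x^{\pm}) = \epsilon \pm 2\lambda$, and then bounding $\hat G(x^{\pm})$; as written, its argument in fact only yields the slightly weaker conclusion $G^{-1}(\epsilon - 2\lambda) \leq \hat G^{-1}(\epsilon) \leq G^{-1}(\epsilon + 2\lambda)$.

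Your direct use of the infimum characterization at $G^{-1}(\epsilon \pm \lambda)$ recovers the sharper $\pm\lambda$ bound actually stated in the lemma and does not need the invertibility assumption. The one caveat is that your lower-bound step (``if $x < G^{-1}(\epsilon-\lambda)$ then $G(x) < \epsilon - \lambda$, hence $\hat G(x) < \epsilon$'') formally invokes the hypothesis at every such $x$, not at a single deterministic point, so the clean ``two points, union bound to $1-2\delta$'' accounting is not literally enough there without the limiting/continuity argument you allude to. The paper sidesteps exactly this by paying the extra factor of~$2$ in $\lambda$ to force the strict inequality $\hat G(x^{-}) \leq \epsilon - \lambda < \epsilon$ from a single evaluation at $x^{-}$.
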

\begin{proof}
Let 
$x = G^{-1}(\epsilon),\quad  x^- = G^{-1}(\epsilon-2\lambda),\quad \mathrm{and} \quad x^+ = G^{-1}(\epsilon+2\lambda).$ 
It holds that $x \in [x^-, x^+],$ since $G$ is a distribution function and is thus increasing. Since $\hat G$ is increasing, it holds that 
$\hat G(x^-) \leq \hat G(x)\leq \hat G(x^+).$ 
Moreover, since the inverse of $G$ exists in the neighbourhood $[\epsilon-\lambda, \epsilon+\lambda]$, we have that $G(x) = \epsilon, \quad  G(x^-) = \epsilon-2\lambda, \quad \mathrm{and} \quad G(x^+) = \epsilon+2\lambda.$

It holds by an union bound that with probability $1-2\delta$, $\hat G(x^-)  \leq G(x^-) + \lambda \quad \mathrm{and} \quad \hat G(x^+) \geq G(x^+) - \lambda.$ This implies that with probability $1-2\delta$, 
$\hat G(x^-)  \leq \epsilon - 2\lambda + \lambda = \epsilon - \lambda \quad \mathrm{and} \quad \hat G(x^+) \geq \epsilon + 2\lambda -  \lambda=\epsilon + \lambda.$
This implies that with probability $1-2\delta$, $$\hat G(x^-)+\lambda \leq \epsilon \leq \hat G(x^+)-\lambda.$$
Since $\hat G$ is increasing as an estimate of a distributions function (and $\hat G^{-1}$ too then), we have that with probability $1-2\delta$,  $x^- \leq \hat G^{-1}(\epsilon) \leq x^+.$
This concludes the proof.
\end{proof}

By Lemma~\ref{lem:estF}, we have with probability $1-\delta$, $|\hat F(u) - F(u)| \leq \phi.$
Let $\epsilon = F(u)$ and $\hat \epsilon  = \hat F(u)$. The previous equation implies that with probability $1-\delta$, $|\hat \epsilon - \epsilon| \leq \phi.$

%Since $F_h = F \circ h^{-1}$, it holds that 
By Lemma~\ref{lem:quantconf} and Assumption~\ref{ass:estFh}, we have with probability $1-\delta$, $F_h^{-1}(\hat \epsilon-\psi) \leq \hat F_h^{-1}(\hat\epsilon) \leq F_h^{-1}(\hat \epsilon+\psi),$ where $F_h^{-1}$ is the pseudo-inverse of $F_h$, which implies since $F_h^{-1} = h\circ F^{-1}$, that with probability $1-\delta$, $h\circ F^{-1}(\hat \epsilon-\psi) \leq \hat F_h^{-1}(\hat \epsilon) \leq h\circ F^{-1}(\hat \epsilon+\psi),$ where $F^{-1}$ is the pseudo-inverse of $F$.

The previous equations imply, since $h\circ F^{-1}$ is increasing that with probability $1-2\delta$, $h\circ F^{-1}(\epsilon-\psi - \phi) \leq \hat F_h^{-1}\circ \hat F(x) \leq h\circ F^{-1}( \epsilon+ \psi +\phi),$ which implies that with probability $1-2\delta$, $h\circ F^{-1}(F(u)-\psi - \phi) \leq \hat F_h^{-1}\circ \hat F(u) \leq h\circ F^{-1}( F(u) + \psi +\phi).$

\subsection{Proof of Theorem~\ref{th:cocobleu2}}\label{pr:cocobleu2}

In this proof, in order to avoid notational heaviness, we write for convenience $\mathbf X$ instead of $\mathbf X^Z$ and $\mathbf Y$ instead of $\mathbf Y^Z$.  We also write $F$ for $F_Z$ and $F_h$ for $F_{h,Z}$ (and the same simplification for the estimators).

We first state the following lemma.

\begin{lemma}[Relation between the graph of $h$ and the distributions]\label{lem:graph}
Let $u \in \mathbb R$. Assume that $F$ is strictly increasing in $u$. Then
$$h(u) = F_h^{-1}(F(u)),$$
where $F^{-1}$ is the pseudo-inverse of $F$.
\end{lemma}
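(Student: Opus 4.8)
The plan is to prove the identity $h(u) = F_h^{-1}(F(u))$ by using the monotonicity of $h$ to push forward distributions, and to be careful about the exact definition of the pseudo-inverse.

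First I would recall the setup: $X \sim f_Z$ (here written $F$ for the CDF), and $h(X)$ has CDF $F_h$. Since $h$ is monotone — without loss of generality non-decreasing — and since we may assume $F$ is strictly increasing at $u$ (the hypothesis), the event $\{h(X) \le h(u)\}$ is, up to the value $X = u$ itself, the same as the event $\{X \le u\}$. More precisely, $\{X \le u\} \subseteq \{h(X) \le h(u)\}$ always, so $F_h(h(u)) \ge F(u)$; and conversely, if $h(X) \le h(u)$ but $X > u$, then by monotonicity $h$ is constant equal to $h(u)$ on the interval $(u, X]$, which would force $F$ to be constant on a right-neighbourhood of $u$, contradicting strict monotonicity of $F$ at $u$. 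Hence $F_h(h(u)) = F(u)$.

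Second, I would apply the pseudo-inverse $F_h^{-1}$ to both sides. Using the standard fact that for a CDF $G$ one has $G^{-1}(G(t)) = \inf\{s : G(s) \ge G(t)\} \le t$, and that equality $G^{-1}(G(t)) = t$ holds whenever $t$ is not in the interior of a flat piece of $G$ — which again follows from the fact that $h$ does not plateau at the value $h(u)$ around $u$ (a consequence of $F$ being strictly increasing at $u$, transported through $h$) — I conclude $F_h^{-1}(F(u)) = F_h^{-1}(F_h(h(u))) = h(u)$. This gives the claimed identity.

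The main obstacle, and the only subtle point, is the careful handling of the pseudo-inverse at the boundary: one must rule out that $F_h$ has a flat stretch whose left endpoint corresponds to $h(u)$, since then $F_h^{-1}(F(u))$ could land strictly below $h(u)$. This is precisely where the hypothesis "$F$ strictly increasing at $u$" is used — it forbids $h$ from being locally constant at the level $h(u)$ on either side of $u$, which in turn forbids the offending flat piece of $F_h$. Everything else is a routine translation between "monotone map pushing forward a law" and "composition of CDFs and quantile functions," so I would state those pushforward facts cleanly as a preliminary and then the two displayed equalities follow in a couple of lines.
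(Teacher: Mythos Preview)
Your route differs from the paper's: the paper writes $F_h = F \circ h^{-1}$ (from $\{h(X)\le y\}=\{X\le h^{-1}(y)\}$), inverts to $F_h^{-1} = h \circ F^{-1}$, and evaluates at $F(u)$, using only $F^{-1}(F(u)) = u$ from strict monotonicity of $F$ at $u$. Your direct approach --- establish $F_h(h(u)) = F(u)$ and then apply $F_h^{-1}$ --- is a reasonable alternative, but two of your justifications are wrong.

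The implication ``$h$ constant on $(u,X]$ would force $F$ to be constant on a right-neighbourhood of $u$'' is a non-sequitur: $F$ is the distribution of $X$ and is unrelated to whether $h$ is flat. Indeed, if $h$ is flat to the right of $u$ on a set of positive $F$-mass, one gets $F_h(h(u)) > F(u)$ strictly, so your intermediate equality can fail (only the inequality $F_h(h(u)) \ge F(u)$ is really needed, and that part you do prove). Likewise, ``$h$ does not plateau at the value $h(u)$ \ldots\ a consequence of $F$ being strictly increasing at $u$'' is false for the same reason: the hypothesis on $F$ constrains the law of $X$, not the shape of $h$. What you actually need to conclude $F_h^{-1}(F(u)) = h(u)$ is that for every $y < h(u)$ there exists $u' < u$ with $h(u') > y$, so that $\{h(X)\le y\}\subseteq\{X\le u'\}$ and hence $F_h(y)\le F(u') < F(u)$; this is left-continuity of $h$ at $u$, not a property of $F$. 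In the paper's application (Theorem~\ref{th:cocobleu2}) the H\"older assumption on $h$ supplies this continuity, and the paper's own inversion step $(F\circ h^{-1})^{-1} = h\circ F^{-1}$ tacitly relies on it as well --- but it cannot be extracted from the stated hypothesis on $F$ alone.
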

\begin{proof}
Let $h^{-1}$ and $F^{-1}$ be the pseudo-inverses of $F,h$.
By definition $\mathbb P(X \leq u) = F(u),$ and $\mathbb P(h(X) \leq u) =  F_h(u).$ Moreover 
$\mathbb P(h(X) \leq u) = \mathbb P(X \leq h^{-1}(u)) = F(h^{-1}(u)),$ where $h^{-1}$ is the pseudo-inverse of $h$.% on $u$, which is well defined since $h$ is strictly increasing and continuous on an open set that contains .

We have %Let $u \in \mathcal C$. Let $y = h(u)$. It holds that $F_h(y) = F \circ h^{-1} \circ h(u) = F(u)$, since $h$ is strictly increasing in $u$. We have
$F_h^{-1}(F(u)) = (F \circ h^{-1})^{-1} \circ F(u) = h \circ F^{-1} \circ F(u) = h(u),$ 
since $F$ is strictly increasing in $u$. This concludes the proof.
\end{proof}

Lemma~\ref{lem:graph} implies since $F$ is strictly increasing in $u$ (since $F^{-1}$ is increasing and H\"older continuous) that $F_h^{-1}\circ F(u) = h(u)$.

From Theorem~\ref{th:cocobleu}, we have that with probability $1-2\delta$,
$h\circ F^{-1}(F(u)-\psi - \phi) \leq \hat F_h^{-1}\circ \hat F(u) \leq h\circ F^{-1}( F(u) + \psi +\phi).$ 
Since $F^{-1}$ is $(\beta,M)-$H\"older on $[F(u)-\psi - \phi, F(u) + \psi + \phi]$, this implies that with probability $1-2\delta$
$$h(u-M(\psi +\phi)^{\beta}) \leq \hat F_h^{-1}\circ \hat F(u) \leq h(u + M(\psi +\phi)^{\beta}),$$
and since $h$ is $(\alpha,L)-$H\"older on $[u-M(\psi +\phi)^{\beta}, u + M(\psi +\phi)^{\beta}]$, this implies that with probability $1-2\delta$
$$h(u)-LM^{\alpha}(\psi +\phi)^{\alpha\beta} \leq \hat F_h^{-1}\circ \hat F(u) \leq h(u) + LM^{\alpha}(\psi +\phi)^{\alpha\beta}.$$
This concludes the proof.

\section{Additional Experiments}\label{s:expes2}

\subsection{Simulations}

In the simulations, we do not use any contextual variables ($d = 0$) in order to focus on the gain due to our method. 

\subsubsection{Simulations with different sample size and distributions $\xi$}\label{sss:sim1}

We simulate $m$ data $X_i \sim U([-5,5])$ that are i.i.d.~(here $f$ is a uniform density on $[-5,5]$), and $n=m$ data $Y_j \sim g = f_h * \xi$, independently of the $X_i$, for different distributions $\xi$ and $m=n$. The functions $h$ we consider in these simulations is $h(x) = x|x|/4$. %We also performed simulations for other functions $h$, see Section~\ref{s:expes2} in the Supplementary Material.

We estimate the link function $h$ by 
$\hat h = \hat F_h^{-1}\circ \hat F,$ 
using the distribution of $\xi$ to deconvolve $g$ from $\xi$. The results of these simulations, i.e.~plots of $h$ and of a realisation of $\hat h$, are provided in Figure~\ref{fig:exp1}, for various noises $\xi$ and sample sizes.
%\vspace{-0.5cm}

%\vspace{-1cm}
Without surprises, the larger $m=n$, the better the estimator $\hat h$. A less intuitive fact, is that the more heavy tailed the distribution, the the better the deconvolution (the performances when $\xi$ is a Student distribution are better than for the Gaussian and uniform cases). This comes from the fact that heavy tailed noise distributions are easier to deconvolve than light tailed noises, as we mentioned below Assumption~\ref{ass:estFh}. %However, the function $h = x|x|/4$ of Figure~\ref{fig:exp1} is easier to reconstruct than the function $h$ of Figure~\ref{fig:exp2}, which has a discontinuity and that thus does not satisfy the assumptions of Theorem~\ref{th:cocobleu2}. The recovery is however not too bad in particular in Student noise, or at some distance of the discontinuity. 
%We also performed experiments for illustrating the impact of making a mistake on the deconvolution distribution $\xi$, see also Section~\ref{s:expes2} in the Supplementary Material.

\begin{center}
\begin{figure*}[!htb]
\begin{minipage}{4.2cm}
%\hspace{-0.83cm}
\includegraphics[width=4.5cm]{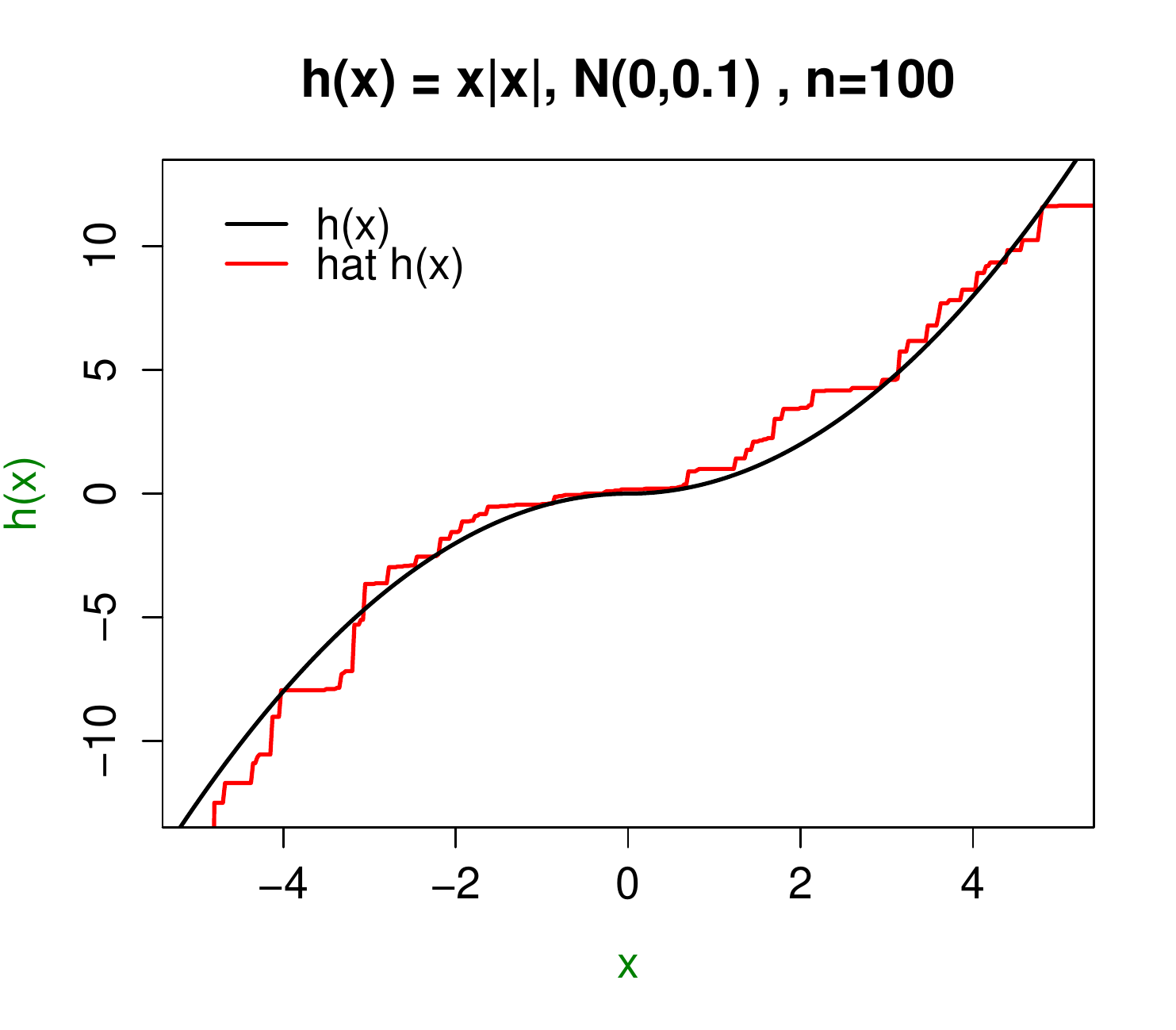}
%\end{center}
\end{minipage}
\begin{minipage}{4.2cm}
\includegraphics[width=4.5cm]{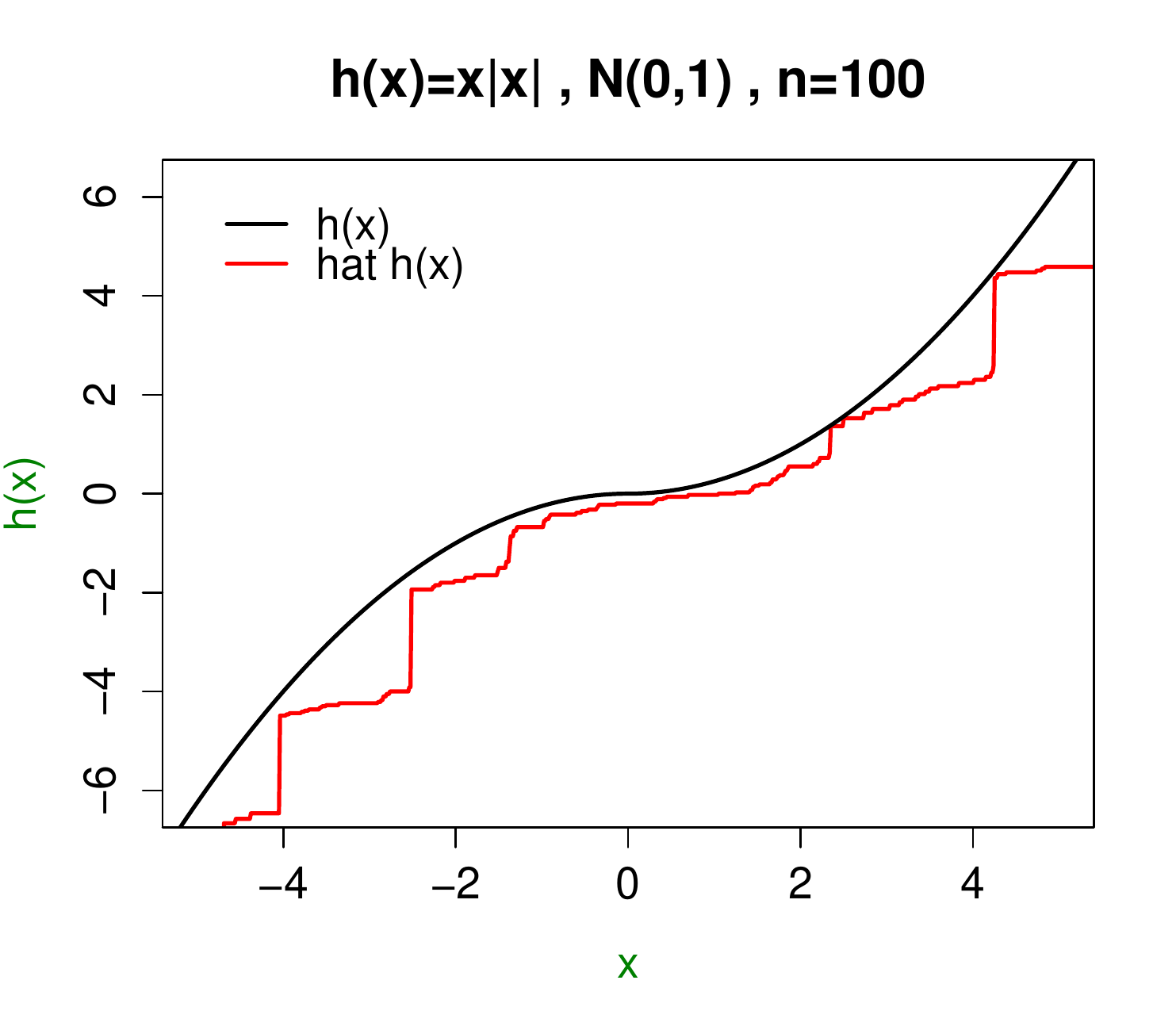}
\end{minipage}
\begin{minipage}{4.2cm}
\includegraphics[width=4.5cm]{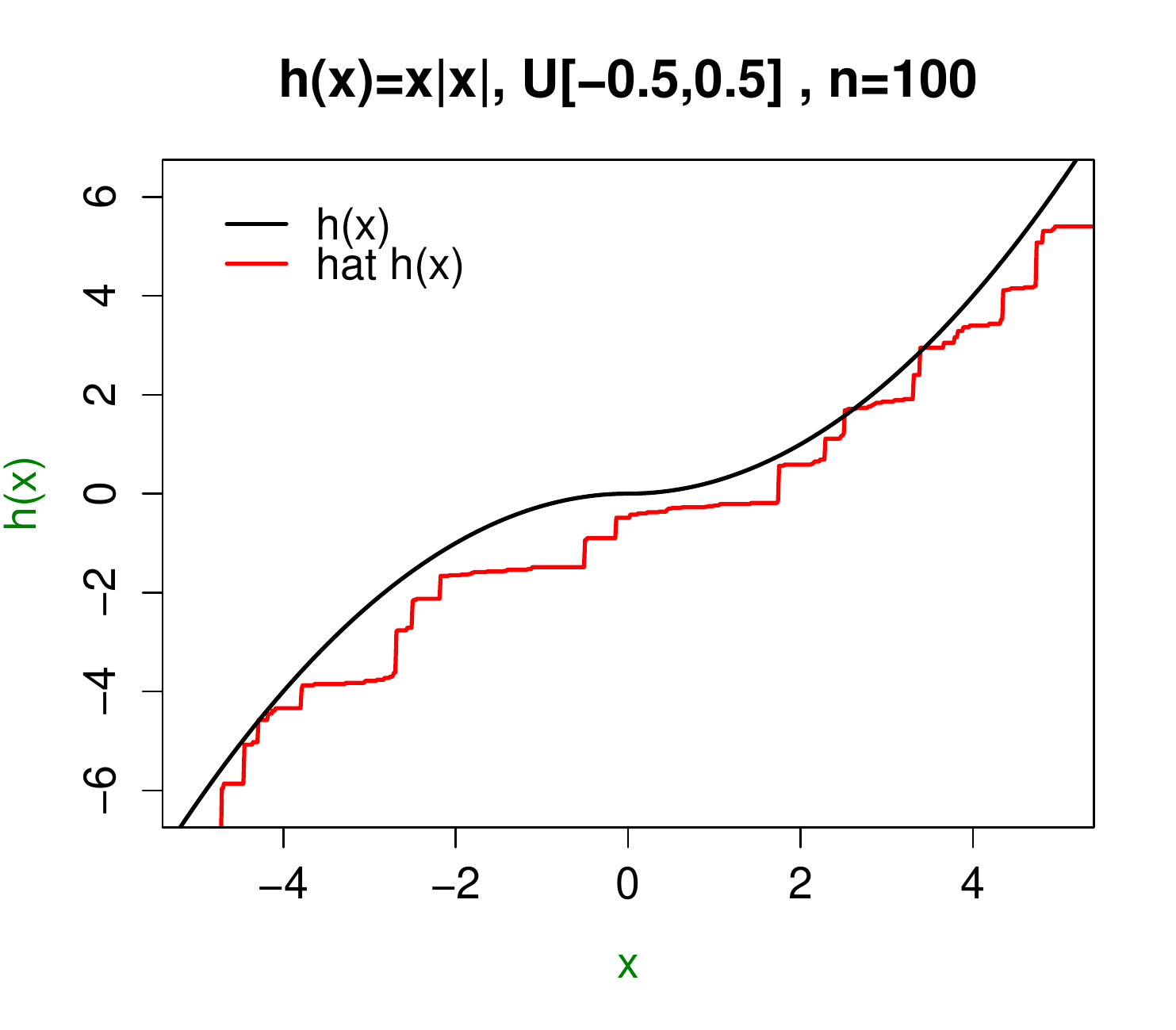}
%\caption{Mean of the square of the re-ordered coordinates $\theta_{(j)}^2$ of $\theta$ sampled according to priors $\Theta_1$, $\Theta_2$, and $\Theta_3$.} \label{fig:exp5}
\end{minipage}
\begin{minipage}{4.2cm}
\includegraphics[width=4.5cm]{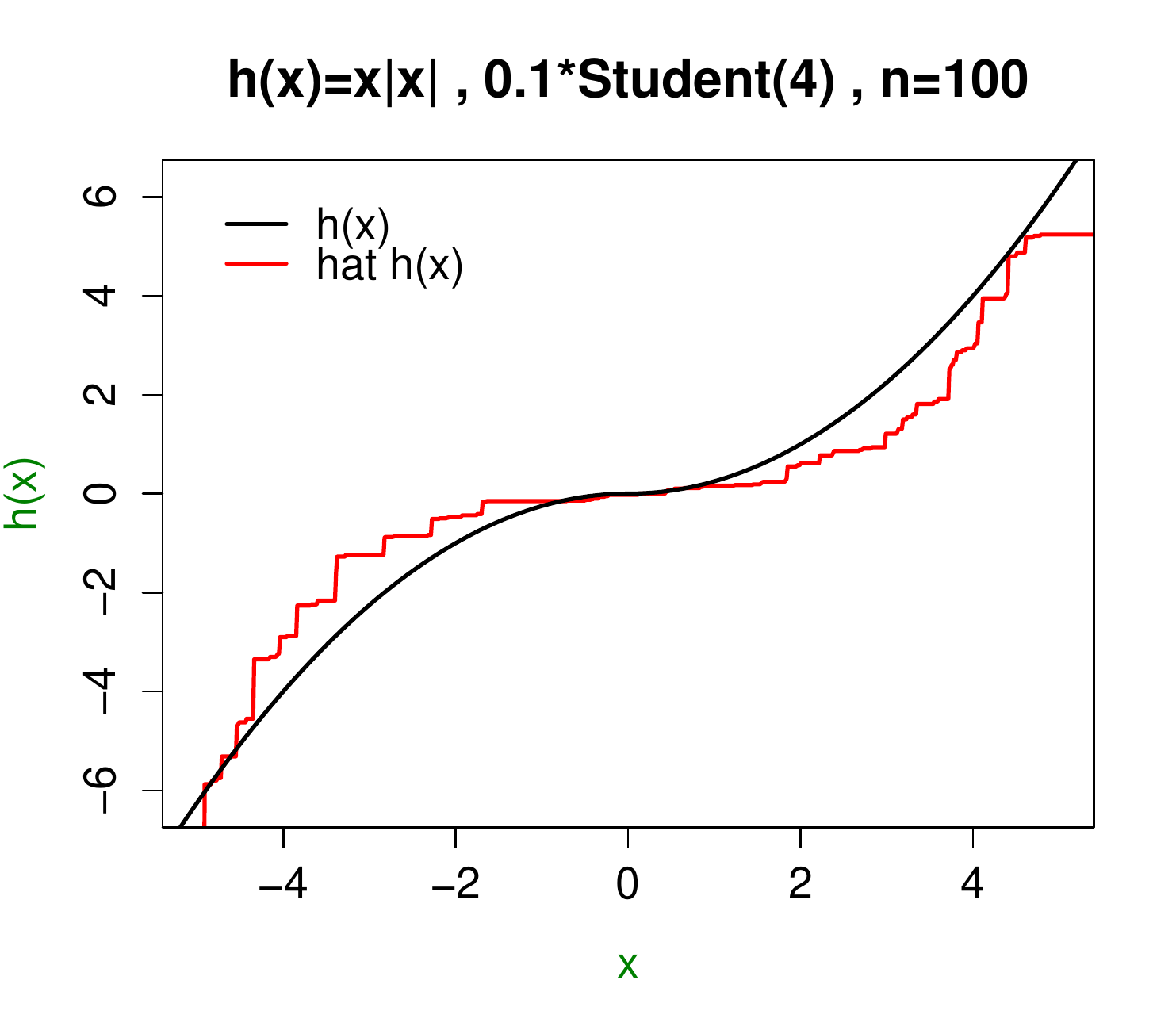}
%\caption{Mean of the square of the re-ordered coordinates $\theta_{(j)}^2$ of $\theta$ sampled according to priors $\Theta_1$, $\Theta_2$, and $\Theta_3$.} \label{fig:exp5}
\end{minipage}

%\vspace{-0.4cm}
\begin{minipage}{4.2cm}
%\hspace{-0.83cm}
\includegraphics[width=4.5cm]{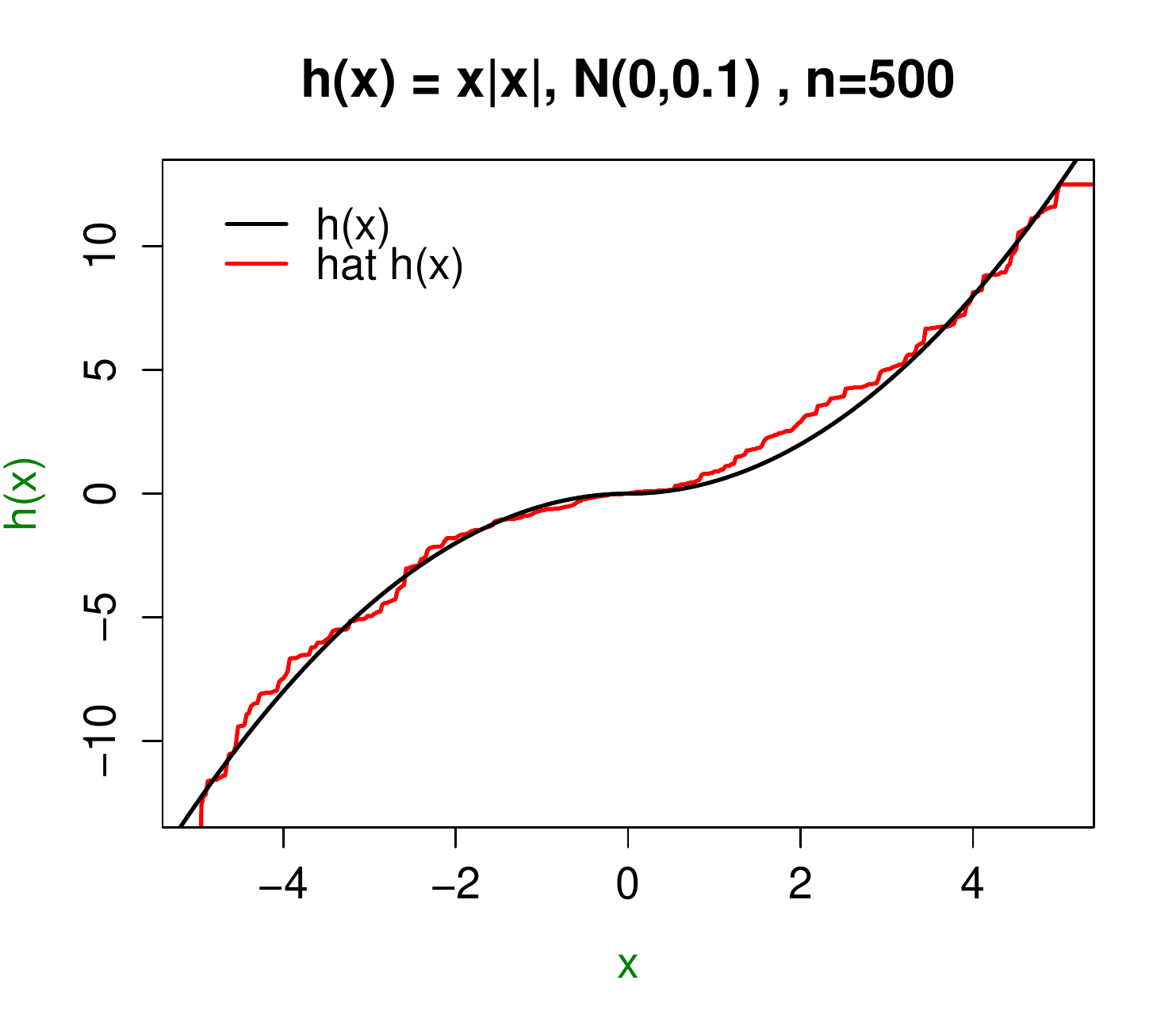}
%\end{center}
\end{minipage}
\begin{minipage}{4.2cm}
\includegraphics[width=4.5cm]{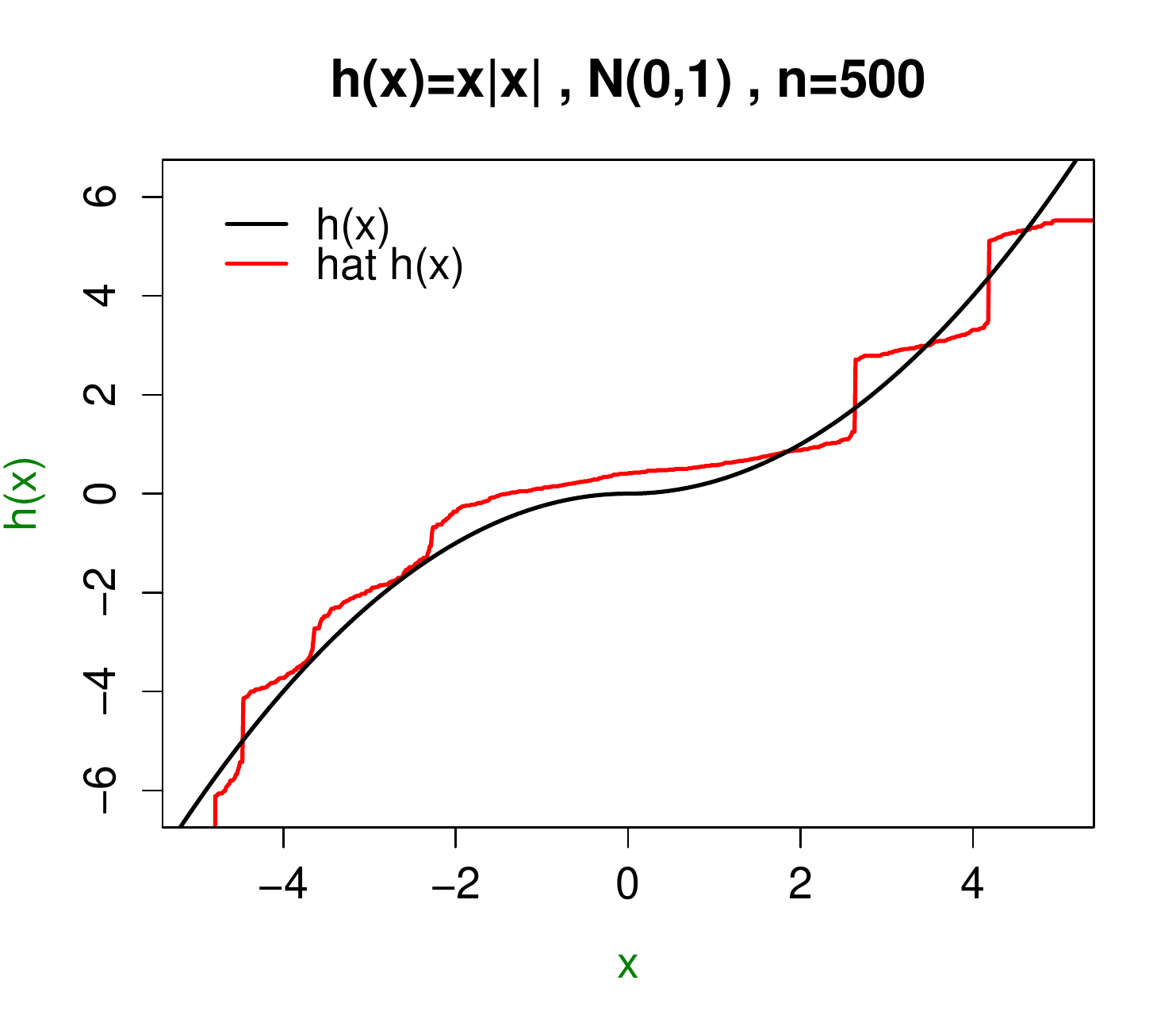}
\end{minipage}
\begin{minipage}{4.2cm}
\includegraphics[width=4.5cm]{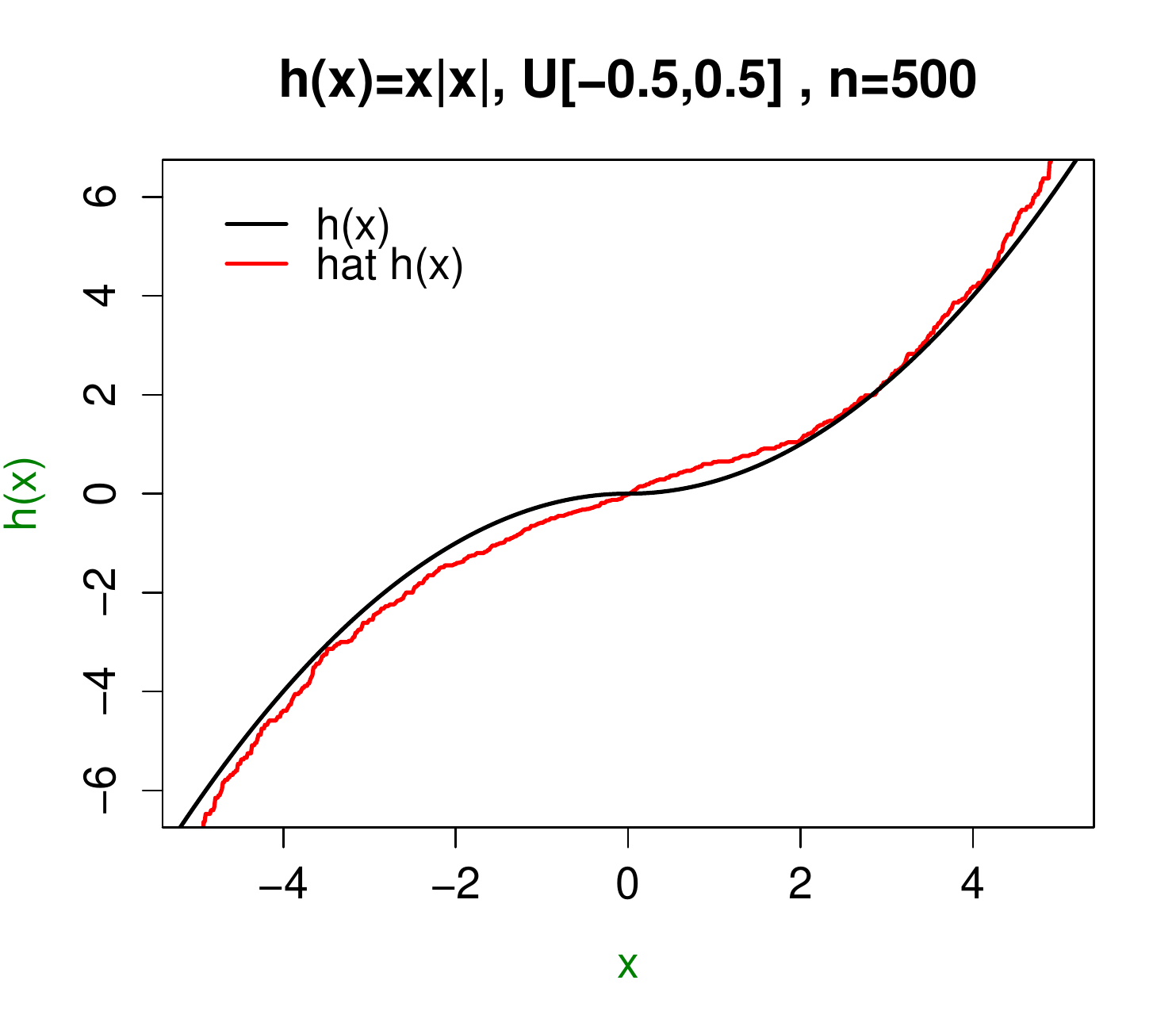}
%\caption{Mean of the square of the re-ordered coordinates $\theta_{(j)}^2$ of $\theta$ sampled according to priors $\Theta_1$, $\Theta_2$, and $\Theta_3$.} \label{fig:exp5}
\end{minipage}
\begin{minipage}{4.2cm}
\includegraphics[width=4.5cm]{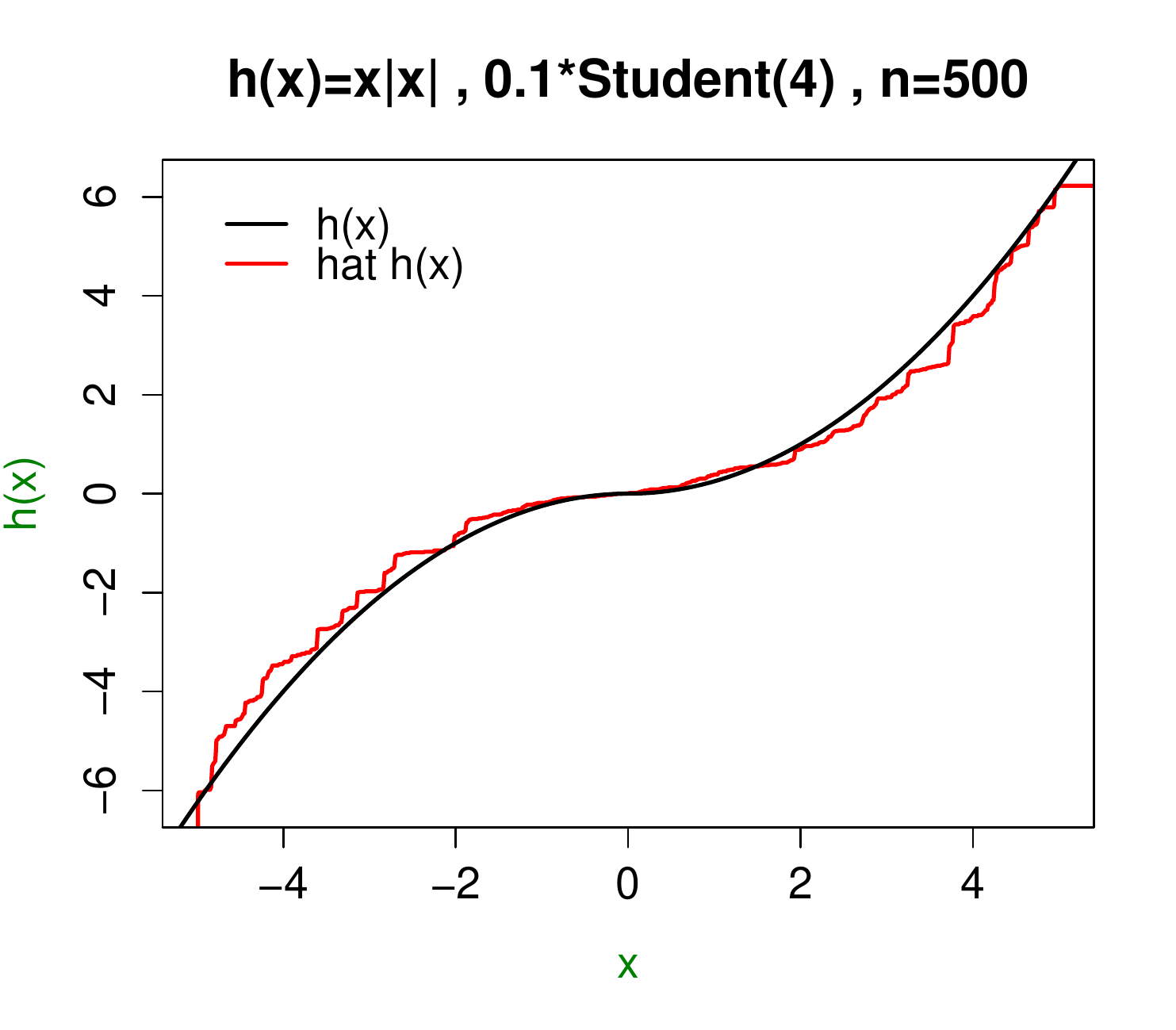}
%\caption{Mean of the square of the re-ordered coordinates $\theta_{(j)}^2$ of $\theta$ sampled according to priors $\Theta_1$, $\Theta_2$, and $\Theta_3$.} \label{fig:exp5}
\end{minipage}

%\vspace{-0.4cm}
\begin{minipage}{4.2cm}
%\hspace{-0.83cm}
\includegraphics[width=4.5cm]{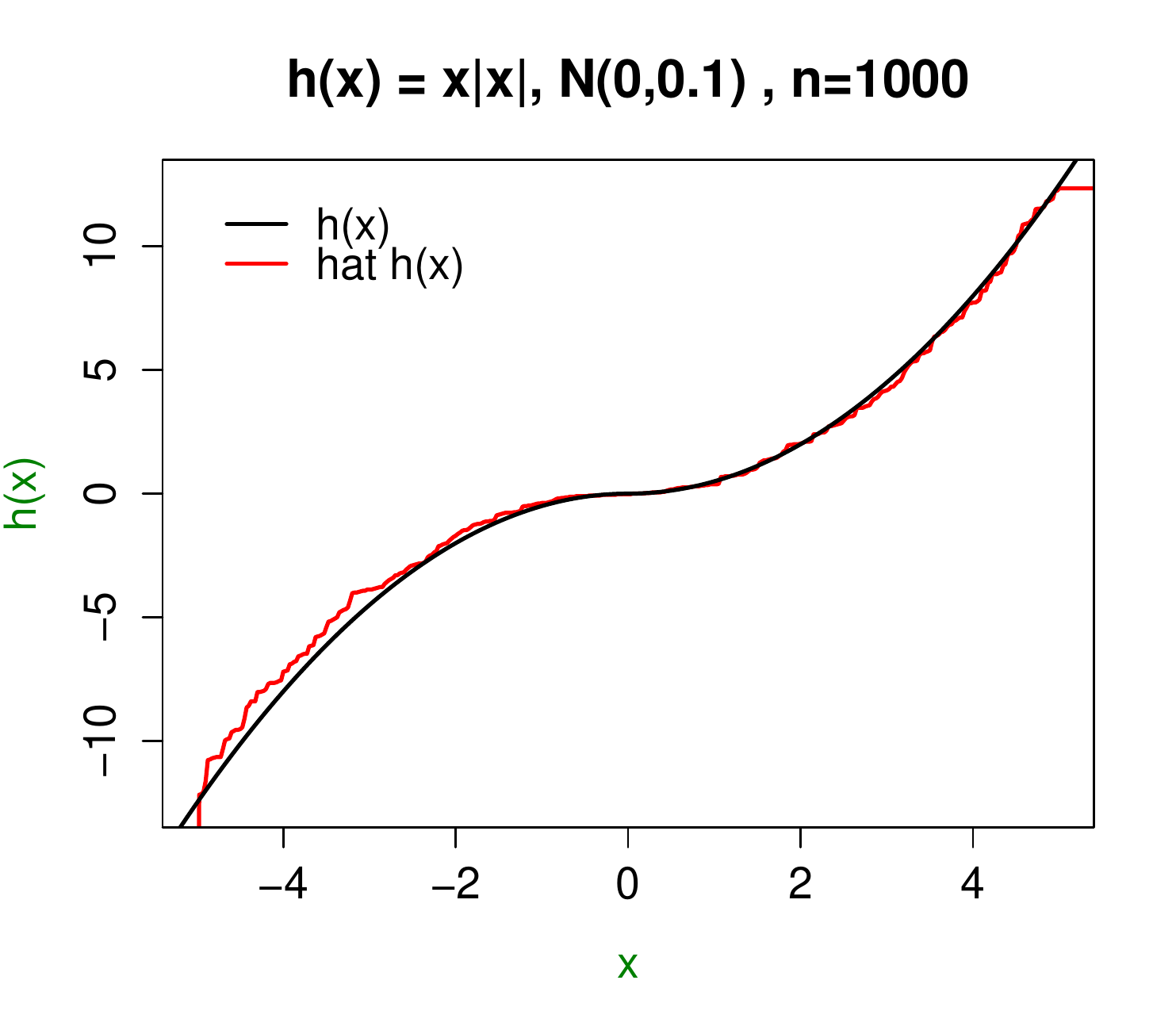}
%\end{center}
\end{minipage}
\begin{minipage}{4.2cm}
\includegraphics[width=4.5cm]{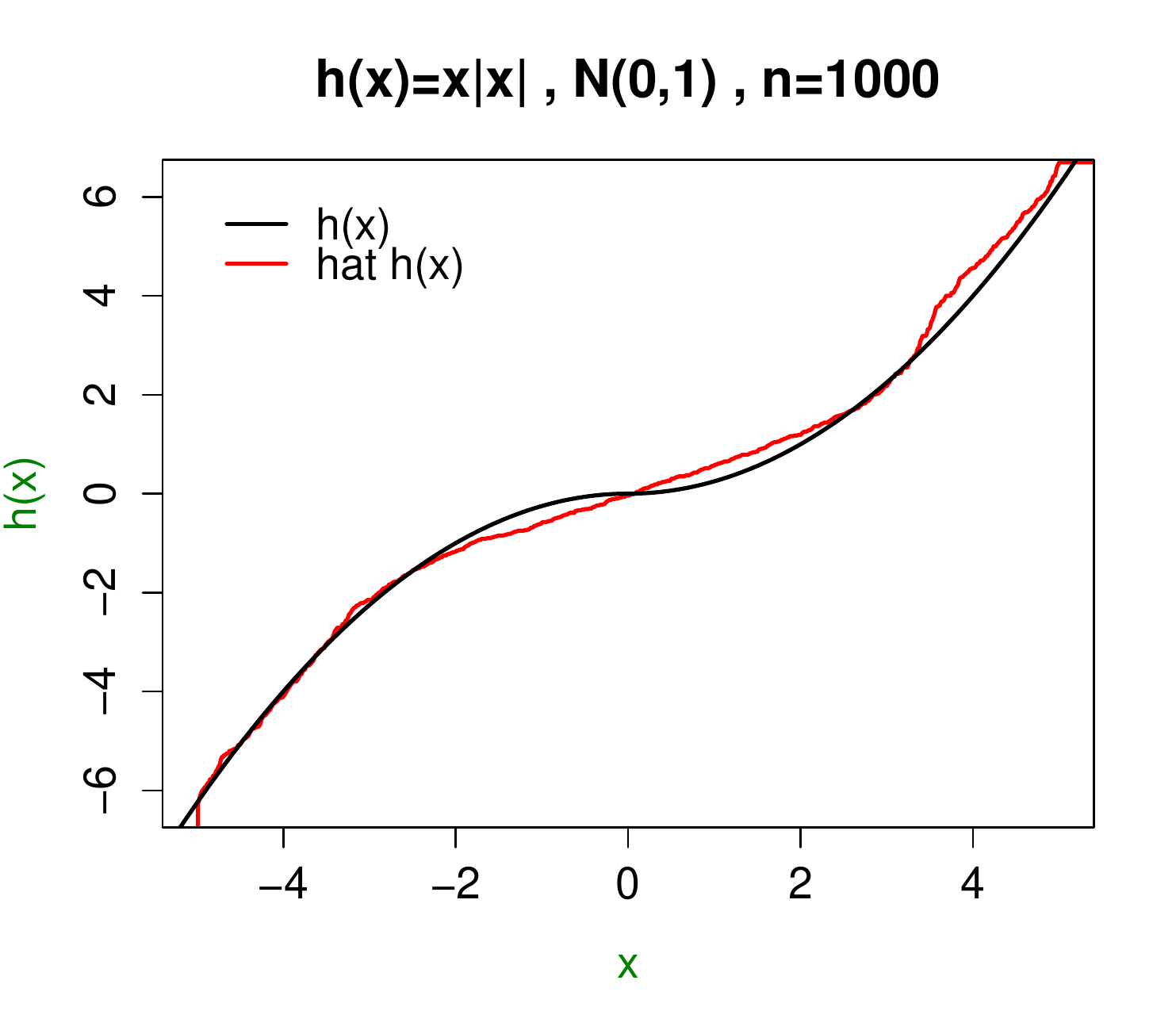}
\end{minipage}
\begin{minipage}{4.2cm}
\includegraphics[width=4.5cm]{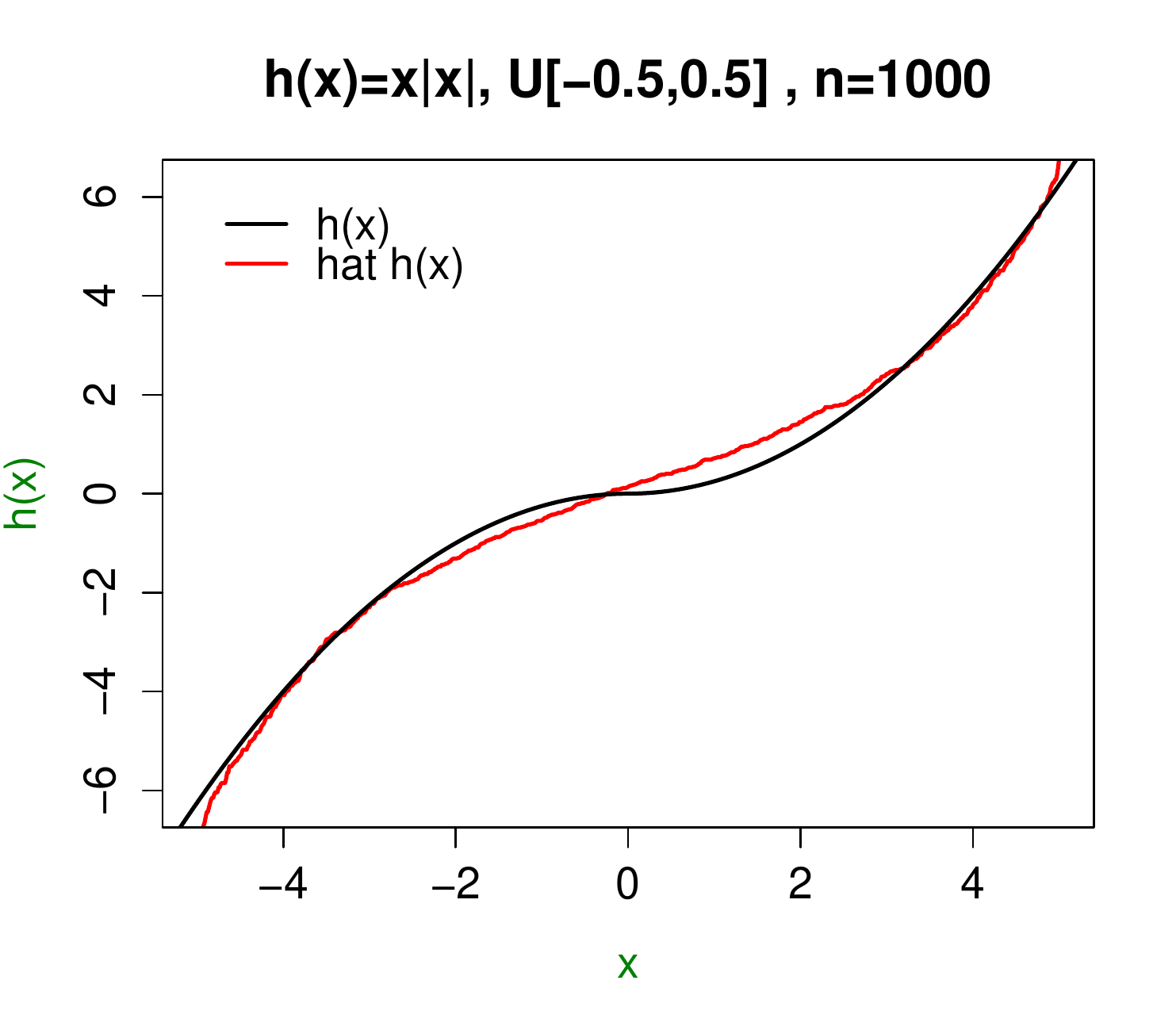}
%\caption{Mean of the square of the re-ordered coordinates $\theta_{(j)}^2$ of $\theta$ sampled according to priors $\Theta_1$, $\Theta_2$, and $\Theta_3$.} \label{fig:exp5}
\end{minipage}
\begin{minipage}{4.2cm}
\includegraphics[width=4.5cm]{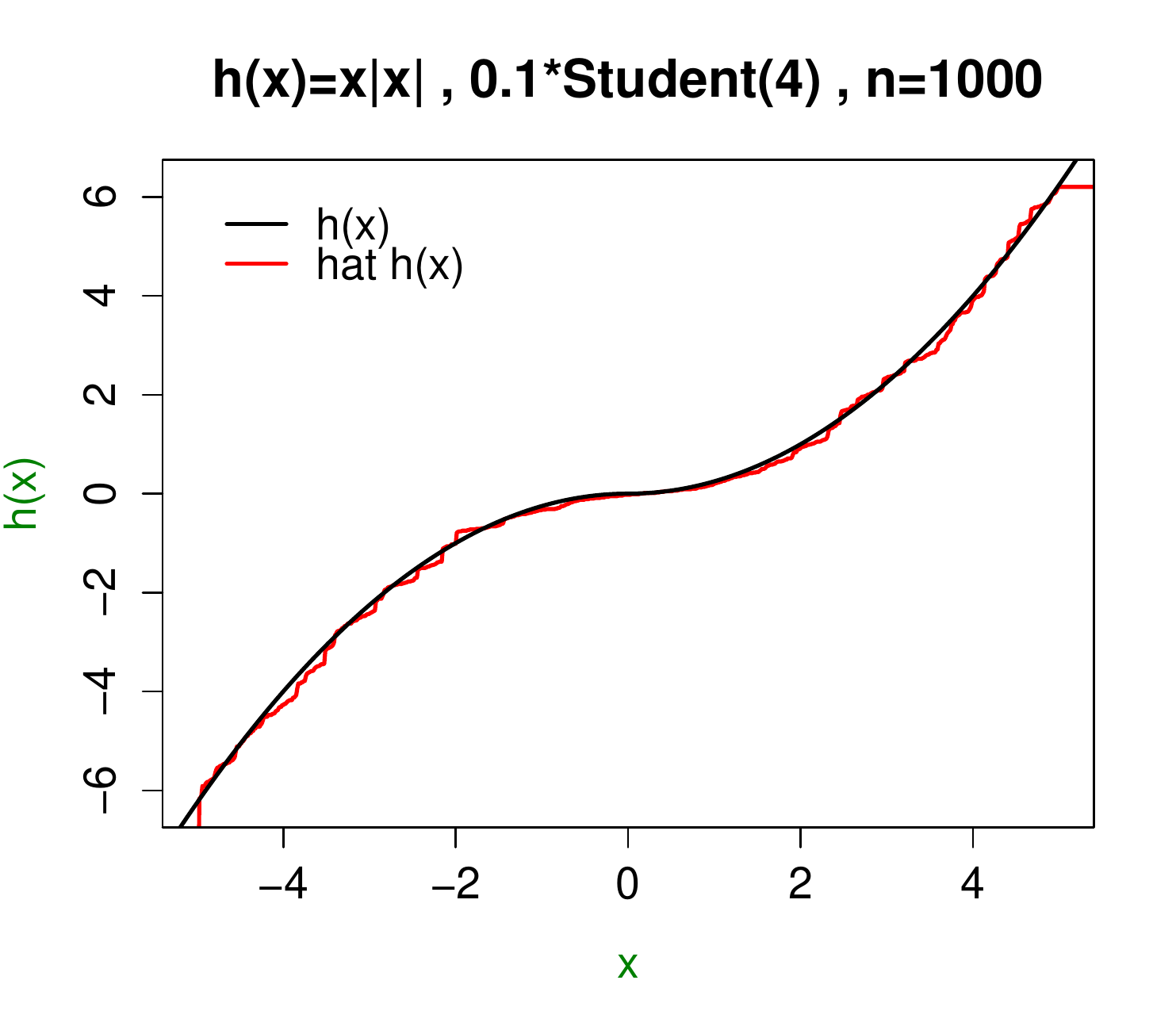}
\end{minipage}
%\vspace{-0.3cm}
\caption{A realisation of $\hat h$ (red) when $h:x \rightarrow x|x|/4$, for different values of $m=n$, and different types of noise. Left to right: $\xi$ is (i) a $\mathcal N(0,0.1)$, (ii) a $\mathcal N(0,1)$, (iii) a $U([-0.5,0.5])$ and (iv) a $0.1 Student(4)$. Up to down: $m=n$ takes value (i) $100$, (ii) $500$ and (iii) $1000$.} \label{fig:exp1}
\end{figure*}
\end{center}

\subsubsection{Simulations with a function $h$ that has a discontinuity}

We now consider the same setting as before (in Subsubsection~\ref{sss:sim1}) but now we consider a function $h$ that has a discontinuity. The results are displayed in Figure~\ref{fig:exp2}.%, and various noises $\xi$ and sample sizes.

The function $h(x) = x|x|/4$ of Figure~\ref{fig:exp1} is easier to reconstruct than the function $h$ of Figure~\ref{fig:exp2}, which has a discontinuity and that thus does not satisfy the assumptions of Theorem~\ref{th:cocobleu2}. The estimator is however not far from the original function in particular in Student noise, or at some distance of the discontinuity.

\begin{center}
\begin{figure*}[!htb]
\begin{minipage}{4.2cm}
\includegraphics[width=4.5cm]{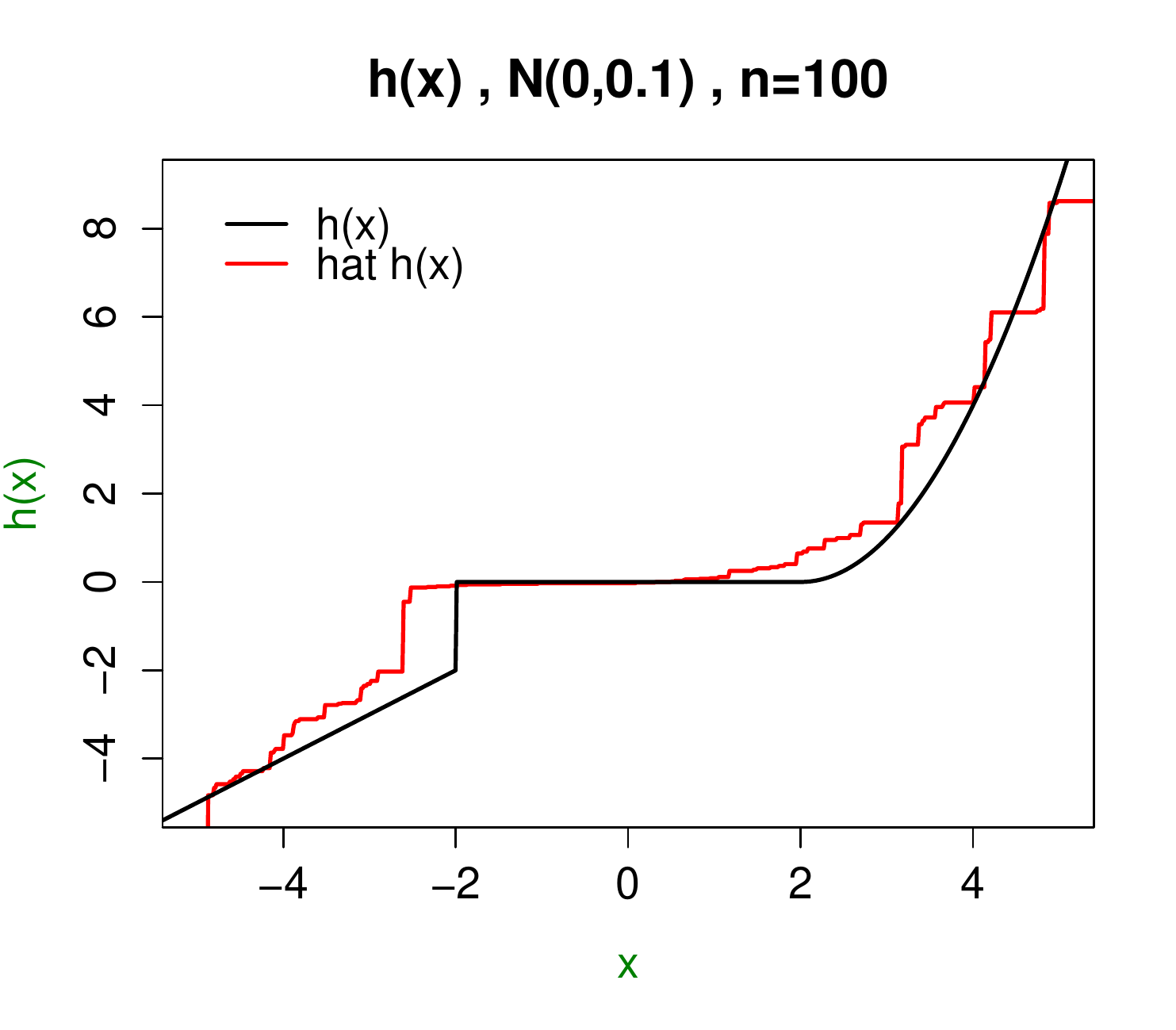}
%\end{center}
\end{minipage}
\begin{minipage}{4.2cm}
\includegraphics[width=4.5cm]{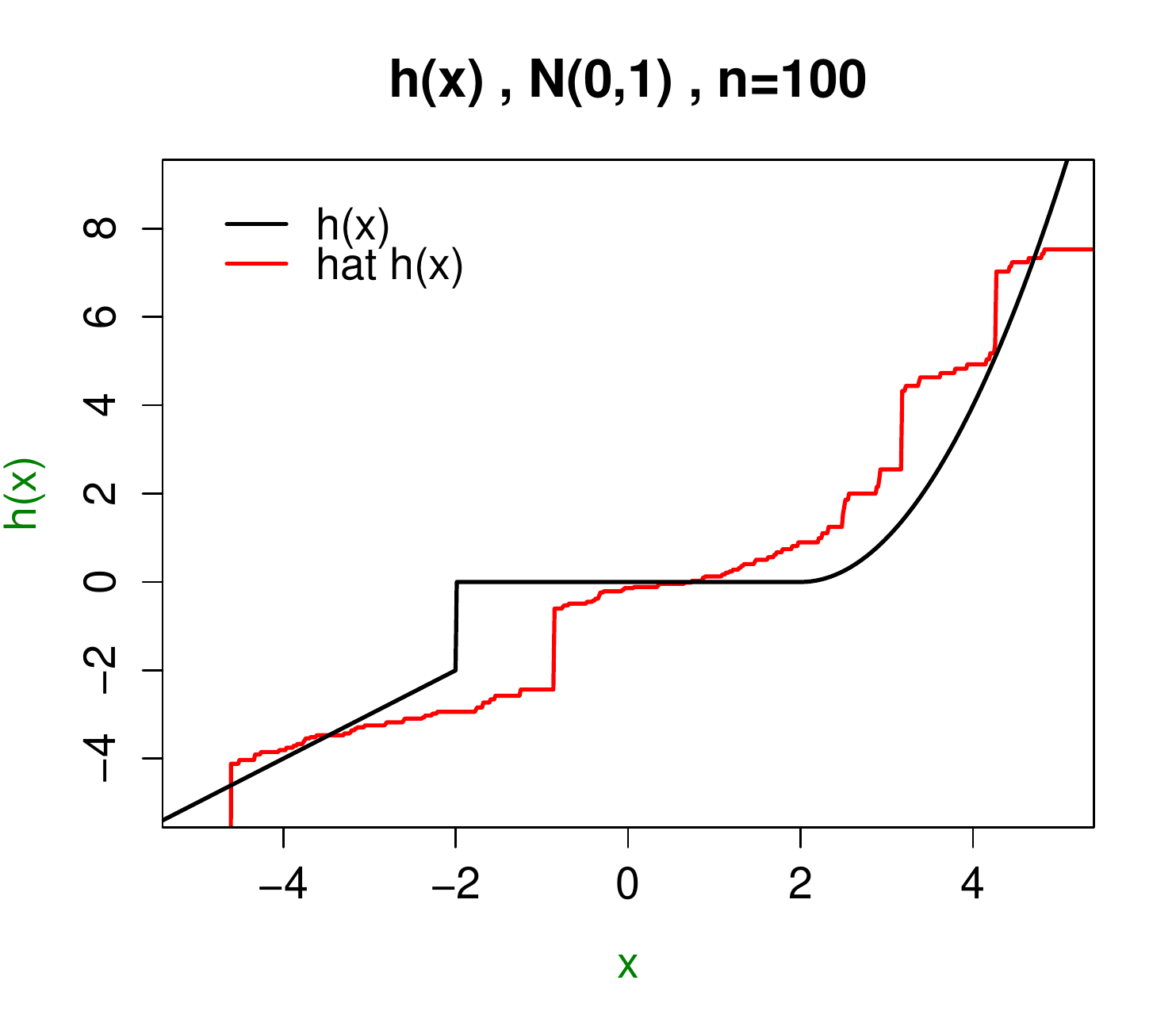}
\end{minipage}
\begin{minipage}{4.2cm}
\includegraphics[width=4.5cm]{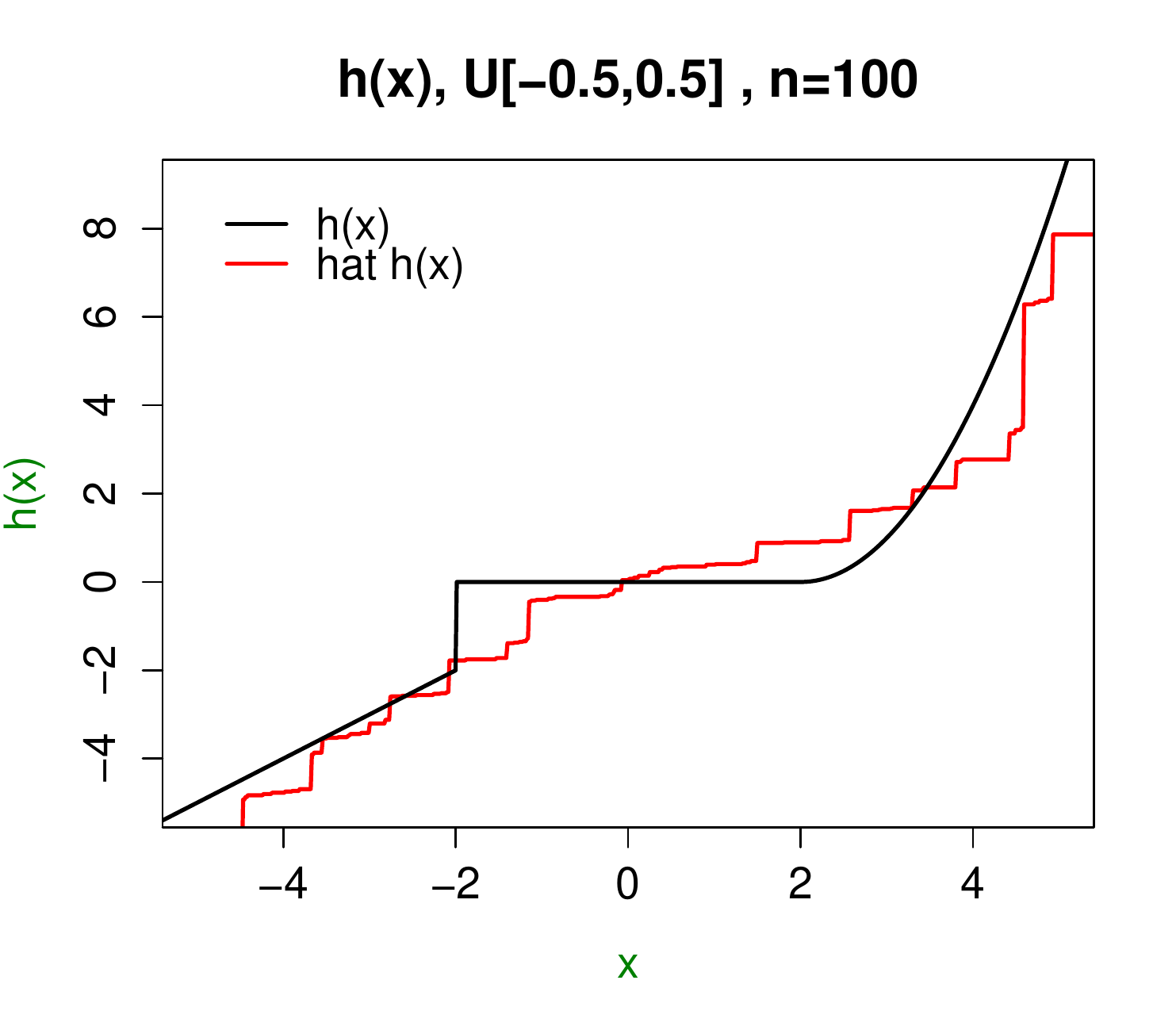}
%\caption{Mean of the square of the re-ordered coordinates $\theta_{(j)}^2$ of $\theta$ sampled according to priors $\Theta_1$, $\Theta_2$, and $\Theta_3$.} \label{fig:exp5}
\end{minipage}
\begin{minipage}{4.2cm}
\includegraphics[width=4.5cm]{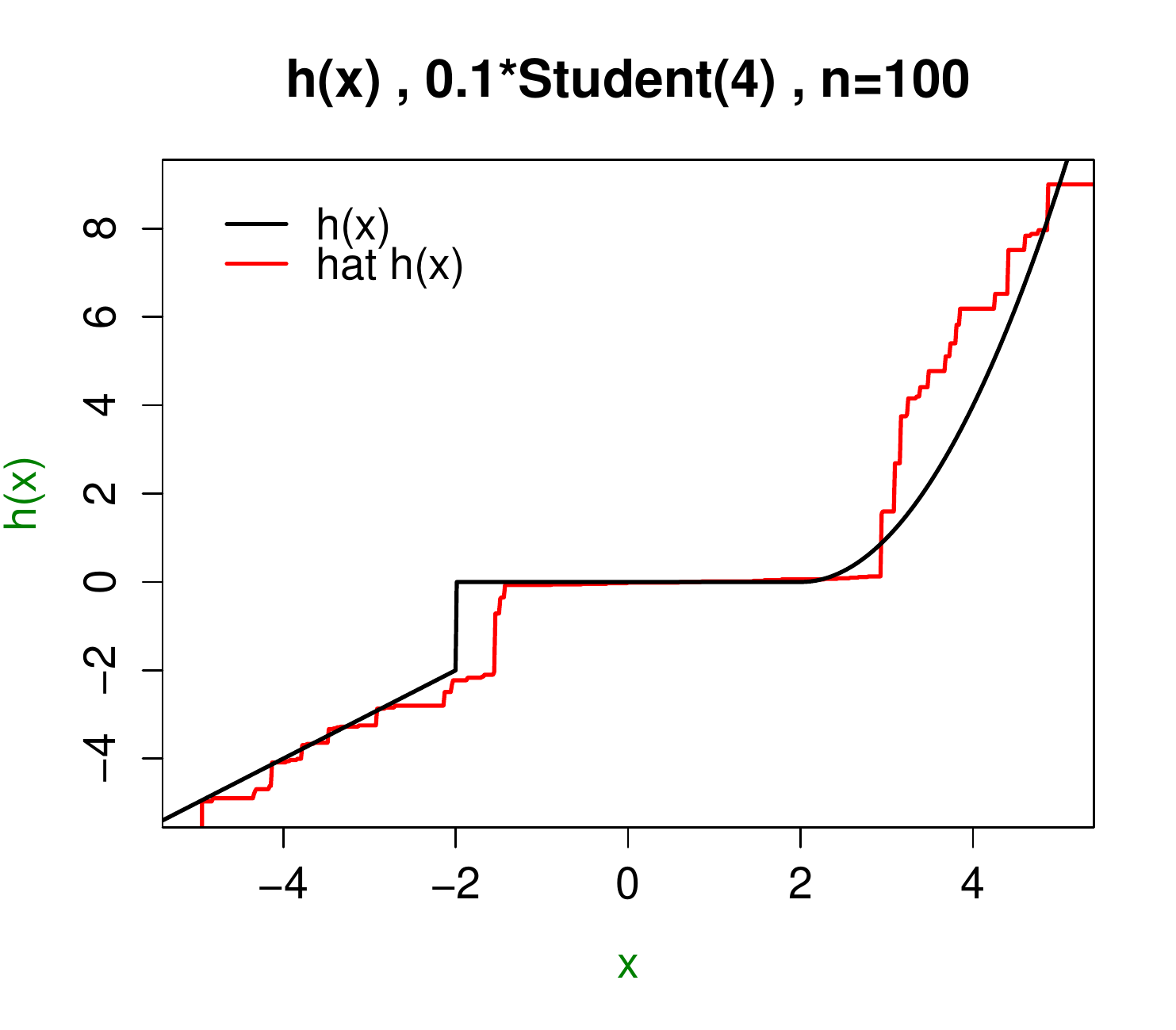}
%\caption{Mean of the square of the re-ordered coordinates $\theta_{(j)}^2$ of $\theta$ sampled according to priors $\Theta_1$, $\Theta_2$, and $\Theta_3$.} \label{fig:exp5}
\end{minipage}

\begin{minipage}{4.2cm}
\includegraphics[width=4.5cm]{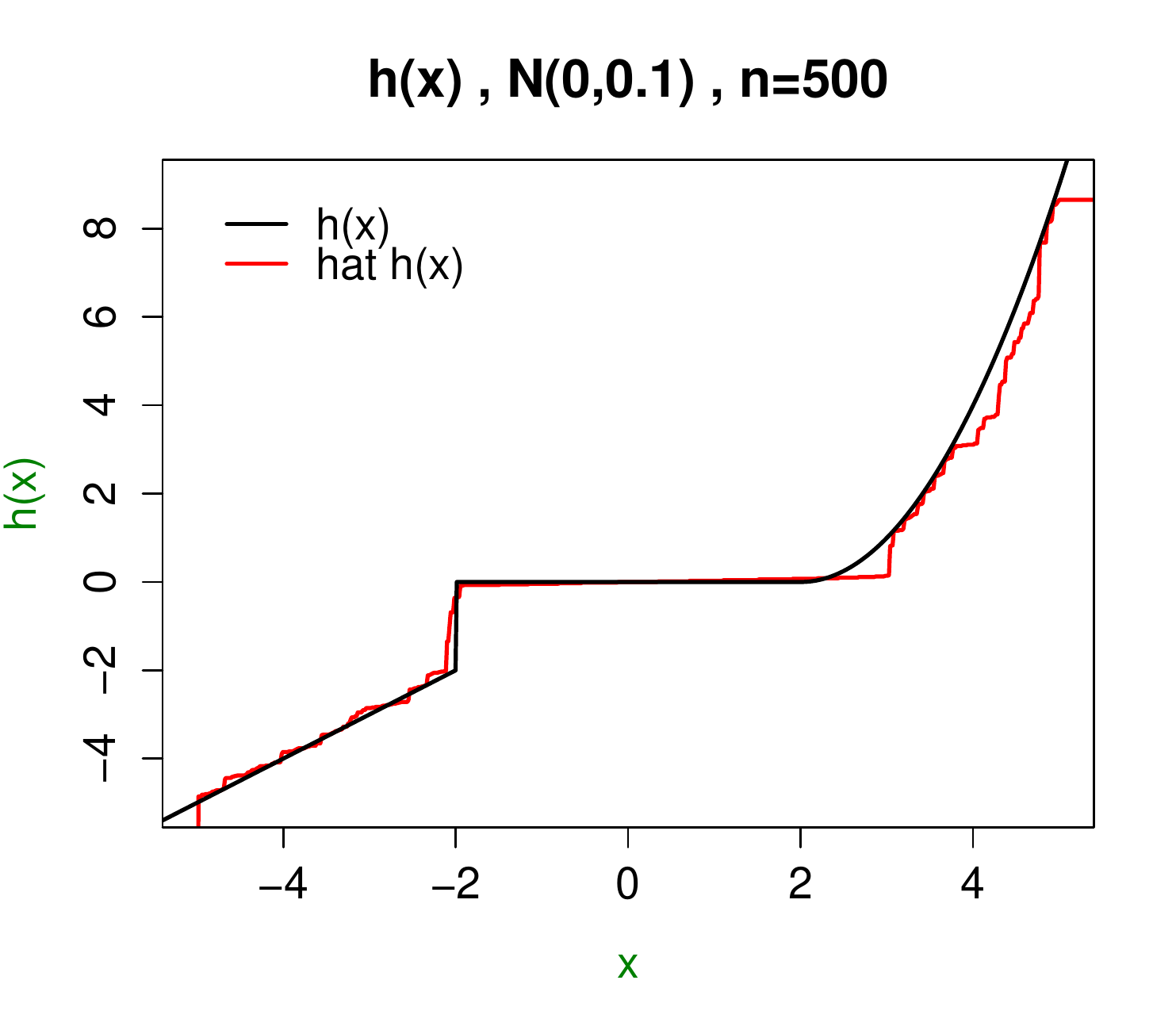}
%\end{center}
\end{minipage}
\begin{minipage}{4.2cm}
\includegraphics[width=4.5cm]{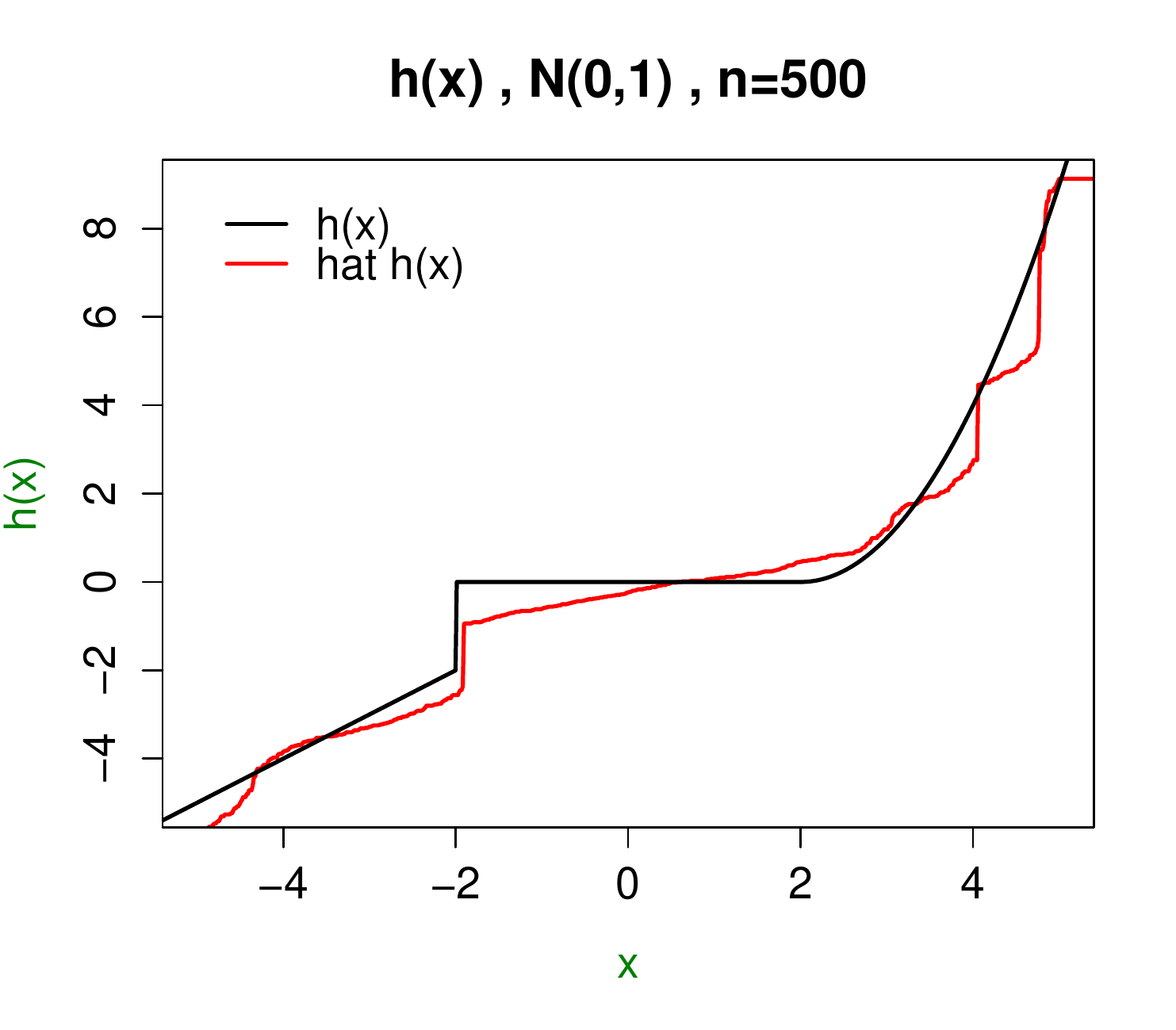}
\end{minipage}
\begin{minipage}{4.2cm}
\includegraphics[width=4.5cm]{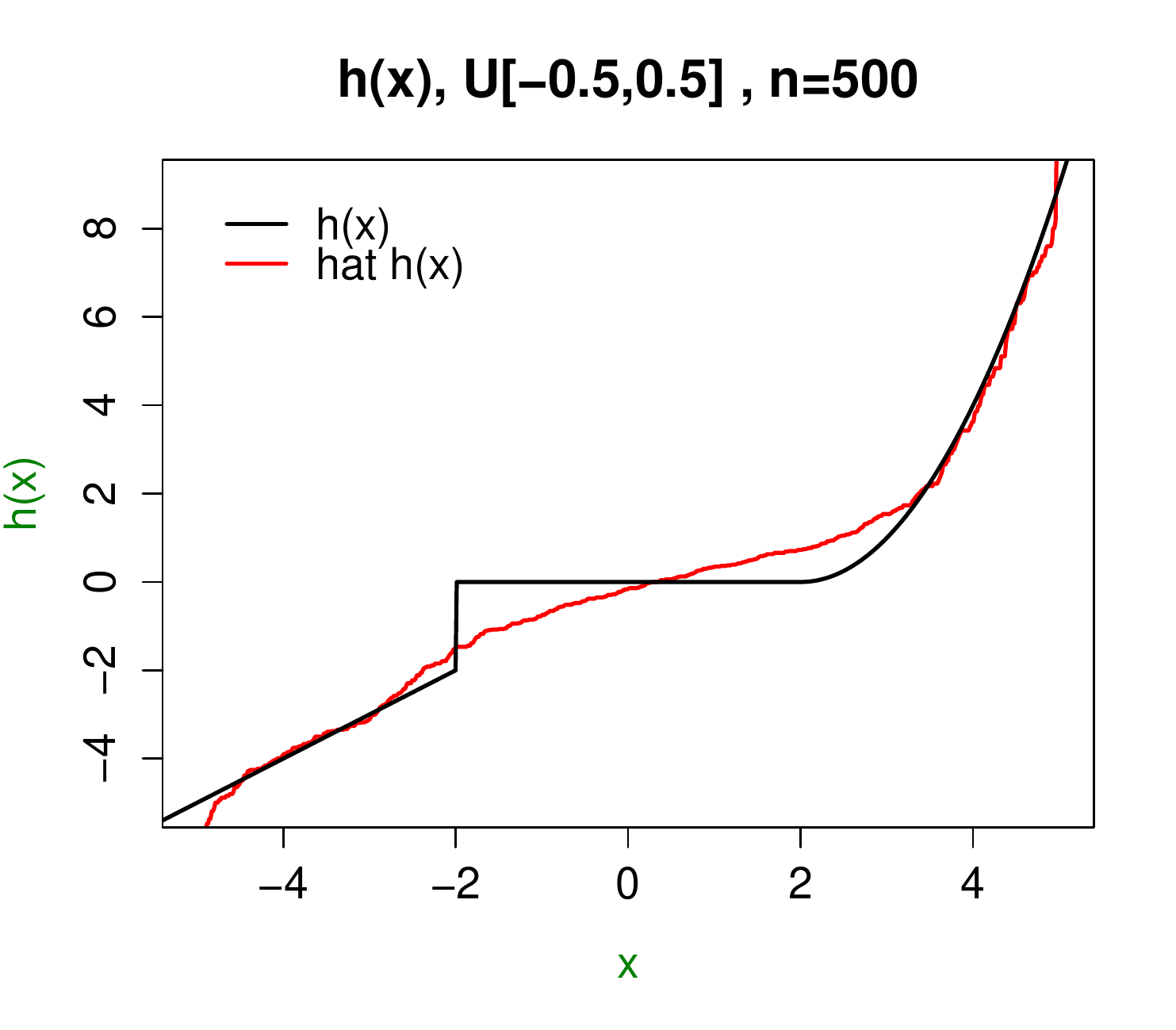}
%\caption{Mean of the square of the re-ordered coordinates $\theta_{(j)}^2$ of $\theta$ sampled according to priors $\Theta_1$, $\Theta_2$, and $\Theta_3$.} \label{fig:exp5}
\end{minipage}
\begin{minipage}{4.2cm}
\includegraphics[width=4.5cm]{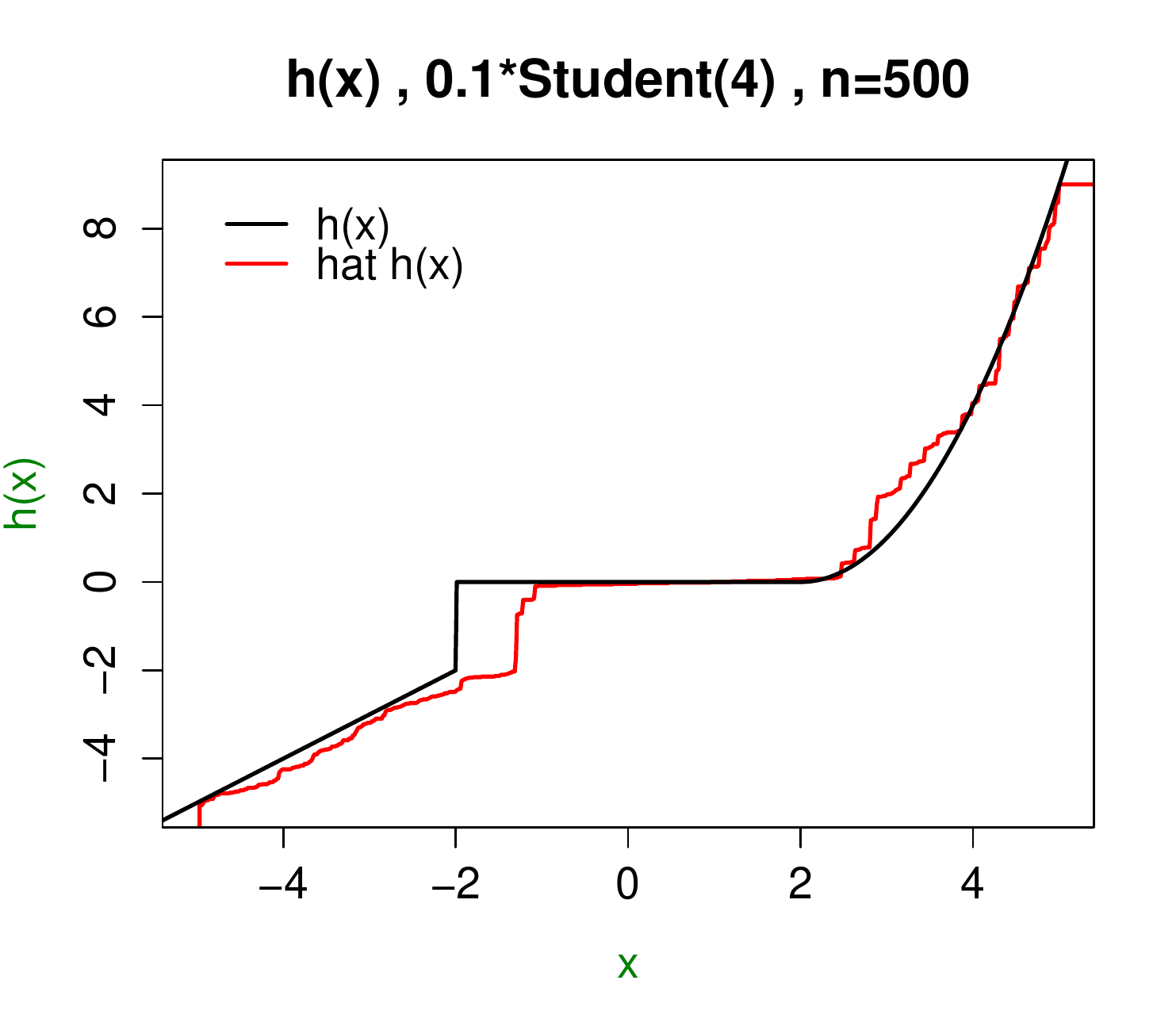}
%\caption{Mean of the square of the re-ordered coordinates $\theta_{(j)}^2$ of $\theta$ sampled according to priors $\Theta_1$, $\Theta_2$, and $\Theta_3$.} \label{fig:exp5}
\end{minipage}

\begin{minipage}{4.2cm}
\includegraphics[width=4.5cm]{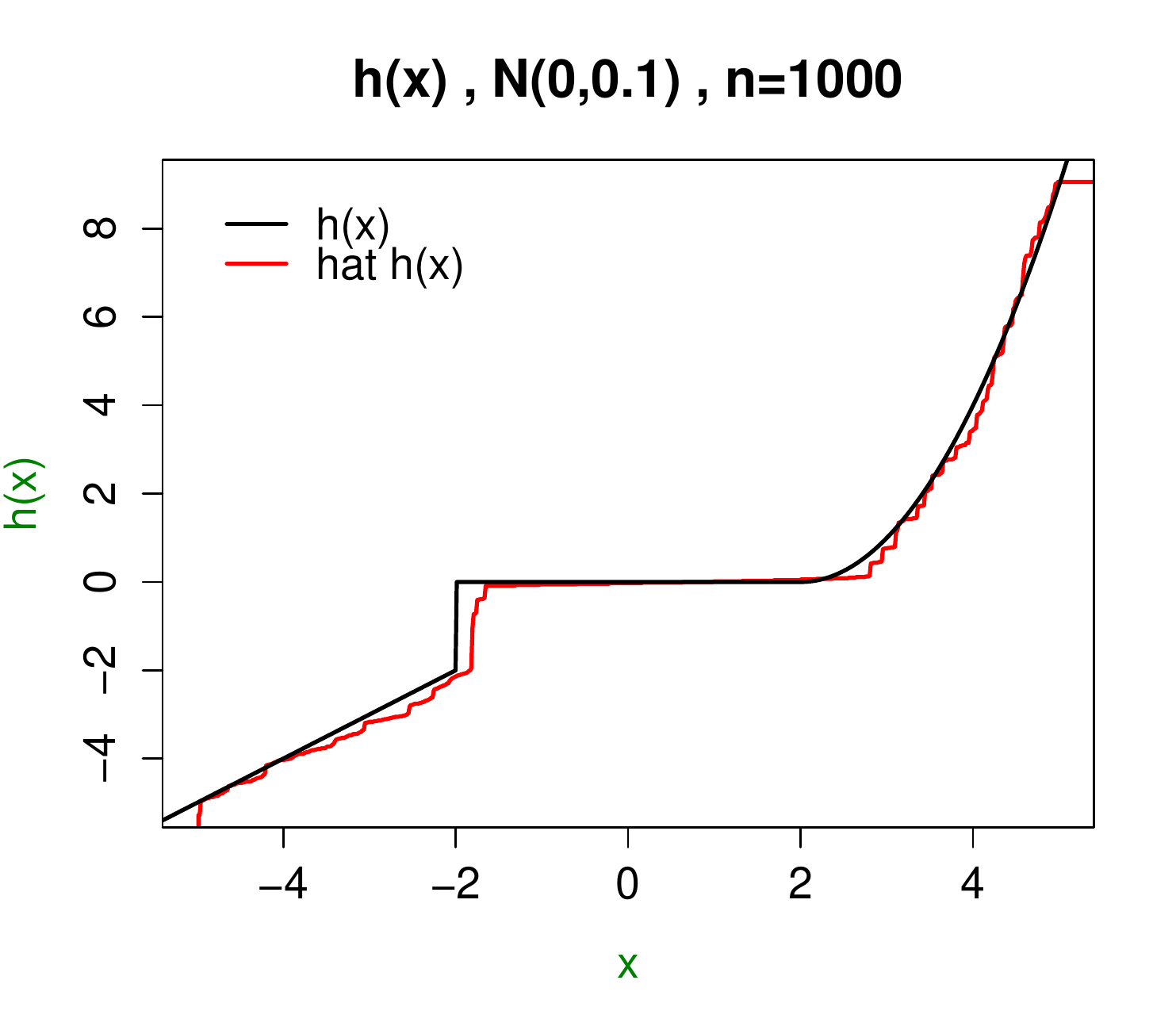}
%\end{center}
\end{minipage}
\begin{minipage}{4.2cm}
\includegraphics[width=4.5cm]{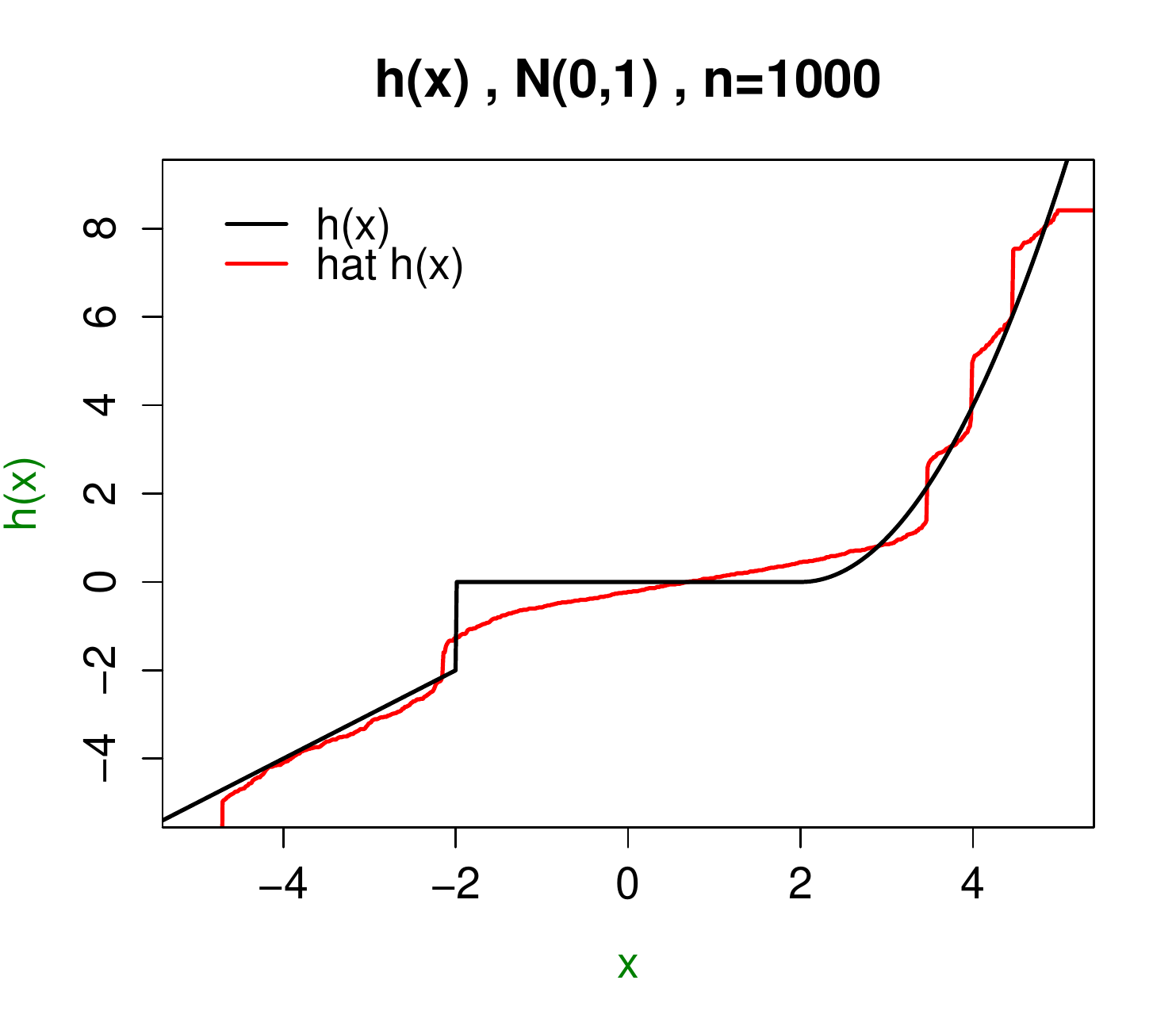}
\end{minipage}
\begin{minipage}{4.2cm}
\includegraphics[width=4.5cm]{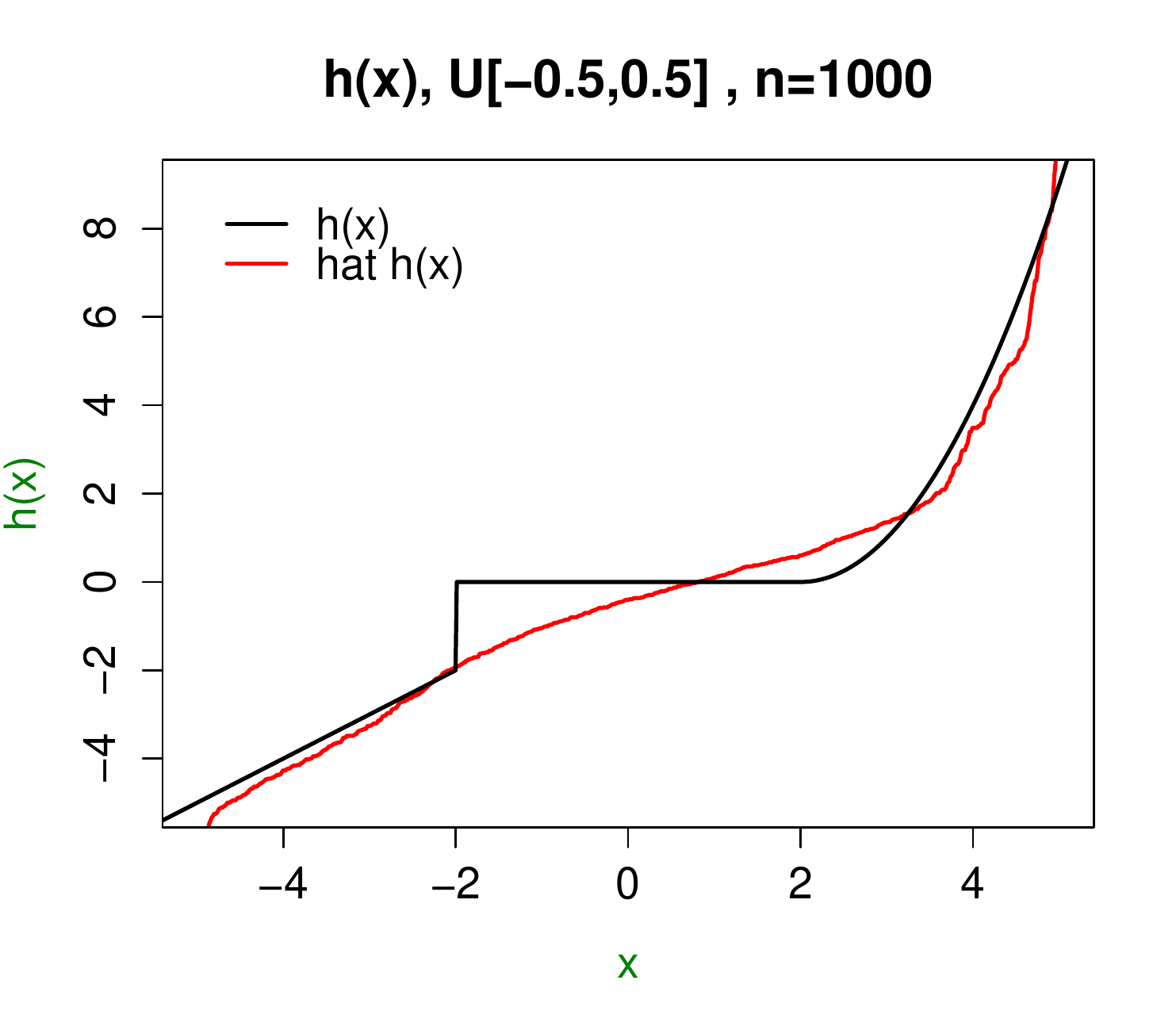}
%\caption{Mean of the square of the re-ordered coordinates $\theta_{(j)}^2$ of $\theta$ sampled according to priors $\Theta_1$, $\Theta_2$, and $\Theta_3$.} \label{fig:exp5}
\end{minipage}
\begin{minipage}{4.2cm}
\includegraphics[width=4.5cm]{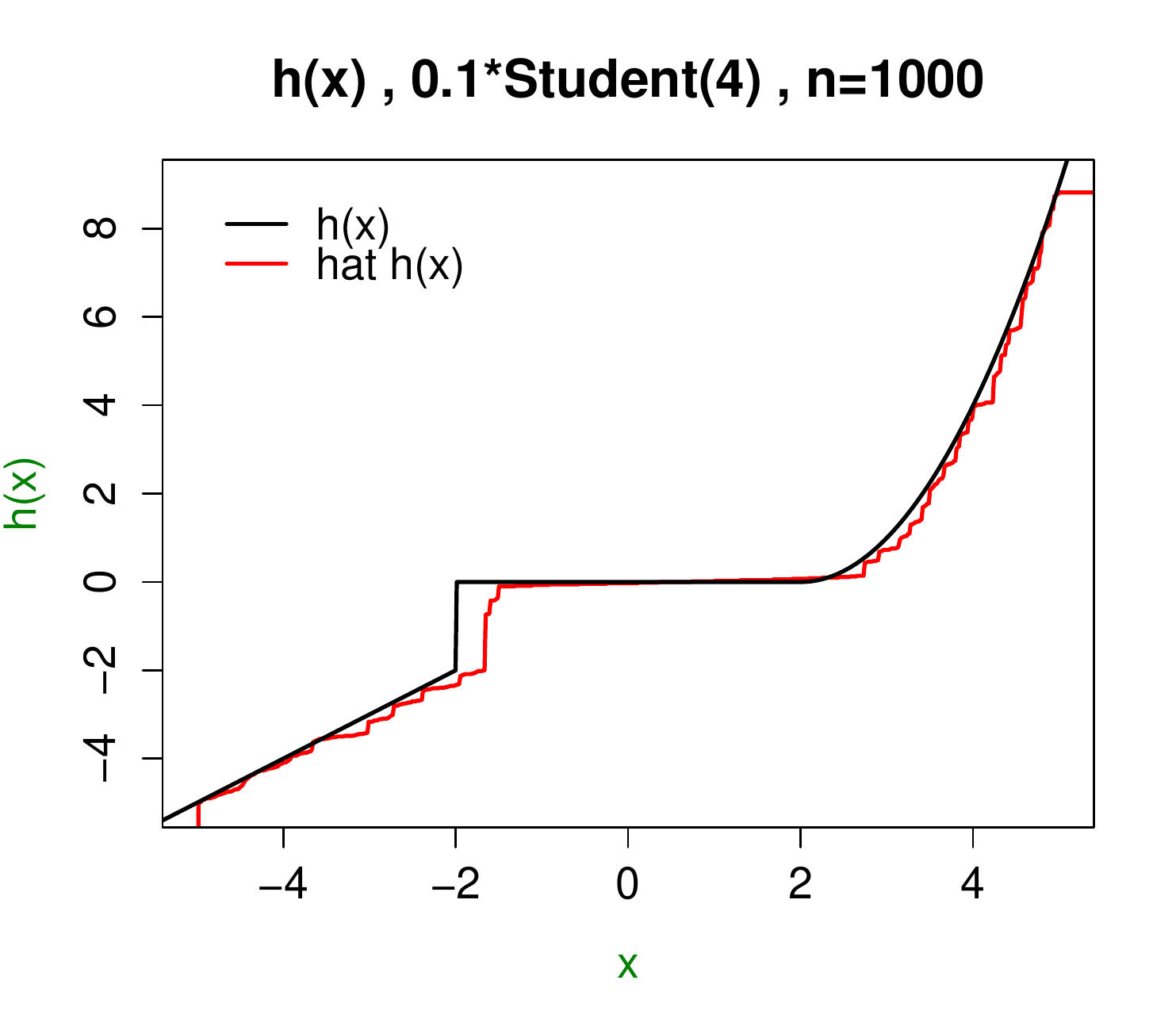}
%\caption{Mean of the square of the re-ordered coordinates $\theta_{(j)}^2$ of $\theta$ sampled according to priors $\Theta_1$, $\Theta_2$, and $\Theta_3$.} \label{fig:exp5}
\end{minipage}

\caption{A realisation of $\hat h$ (red) when $h$ is a function that has a discontinuity and a flat area, for different values of $m=n$, and different types of noise. Left to right: $\xi$ is (i) a $\mathcal N(0,0.1)$, (ii) a $\mathcal N(0,1)$, (iii) a $U([-0.5,0.5])$ and (iv) a $0.1 Student(4)$. Up to down: $m=n$ takes value (i) $100$, (ii) $500$ and (iii) $1000$.} \label{fig:exp2}
\end{figure*}
\end{center}

\subsubsection{Simulations in the case of deconvolution with the wrong noise $\xi$}

A third aspect that we illustrate with simulations is the impact of not knowing precisely the noise distribution $\xi$ (during the deconvolution process) on the estimator $\hat h$. More precisely, what is the impact to use a deconvolution distribution $\xi'$ that differs from $\xi$ in the deconvolution process for obtaining the estimate $\hat F_h$? We did experiments for two ``true" noises $\xi$, and four deconvolution distributions $\xi'$. The results are displayed in Figure~\ref{fig:exp3}.

The impact of not having a precise knowledge of $\xi$ is existent, but is not very important in these two examples. It implies that when confronted with a such problem, making a small error on the distribution of the noise $\xi$ is not completely altering the efficiency of the procedure.

\begin{figure*}[!htb]
\begin{center}
\begin{minipage}{7.4cm}
%\hspace{-0.83cm}
\includegraphics[width=7.4cm]{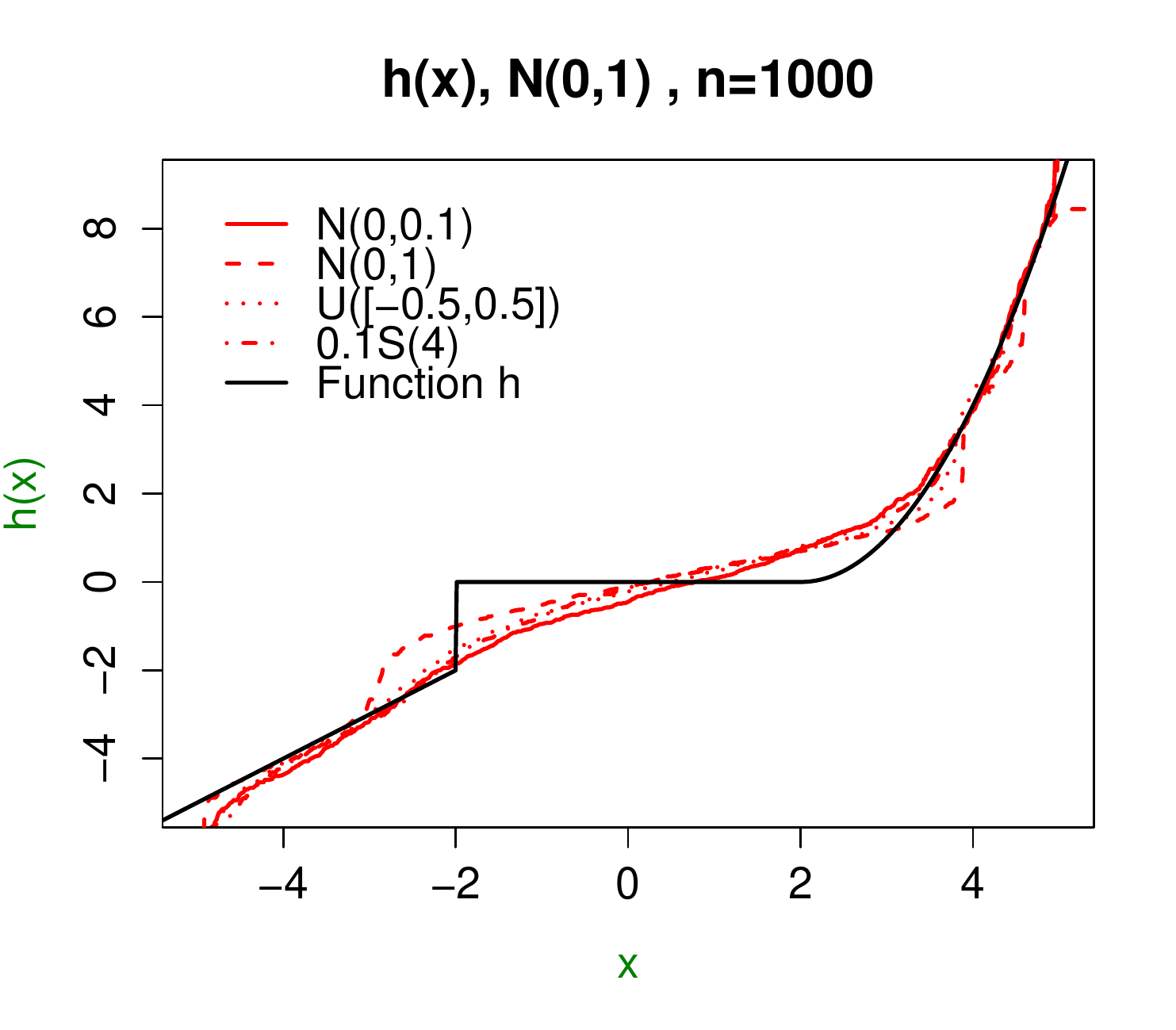}
%\end{center}
\end{minipage}
%\vspace{-1cm}
\begin{minipage}{7.4cm}
\includegraphics[width=7.4cm]{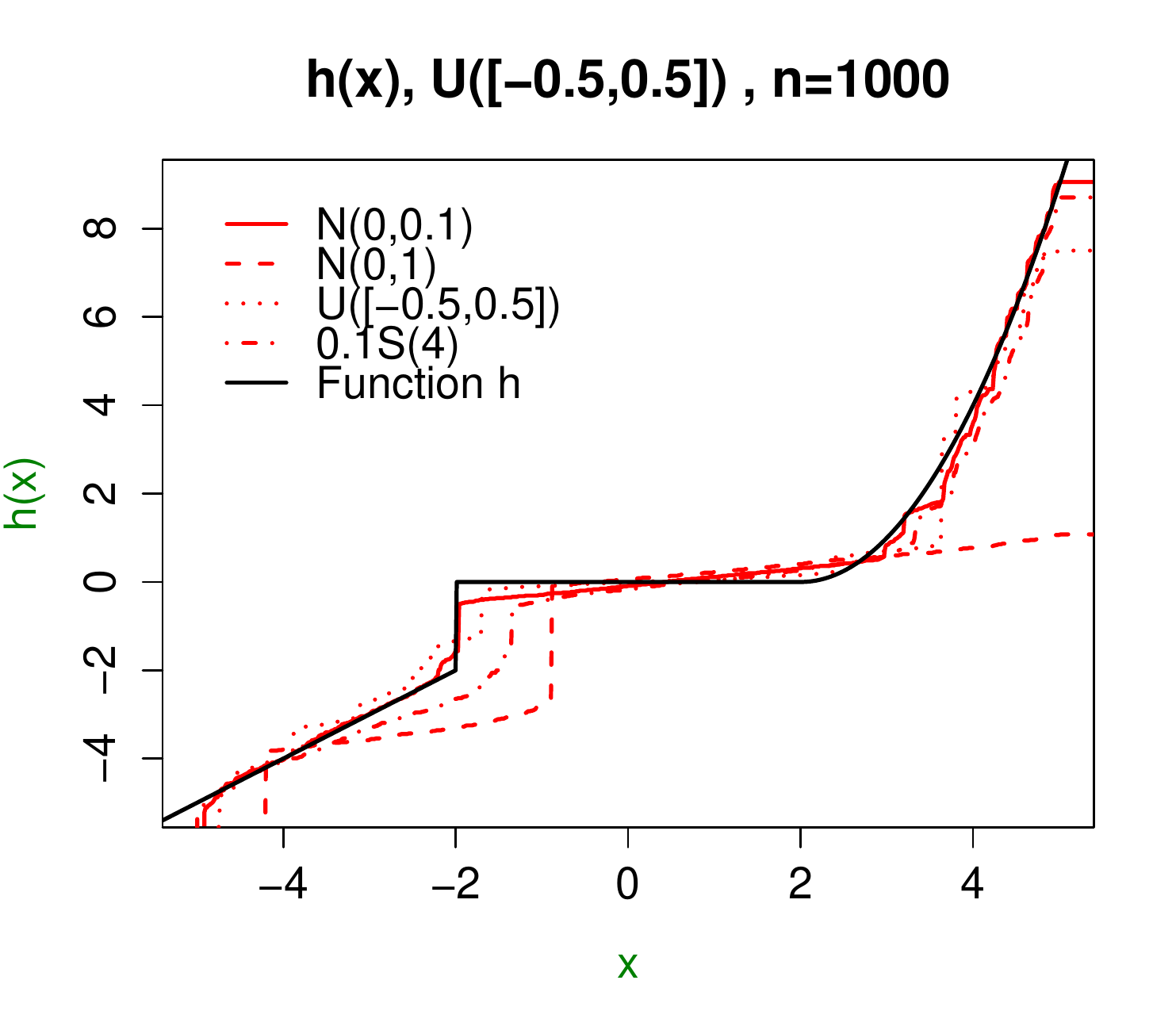}
\end{minipage}

\caption{A realisation of $\hat h$ (red) for $m=n=1000$ when deconvolving using different distributions of $\xi$ (the true $\xi$ is not available but one uses another value of $\xi$) when $h$ is a function that has a discontinuity and a flat area, for two types of ''true" noise. Left to right: $\xi$ is (i) a $\mathcal N(0,0.1)$, (ii) a $U([-0.5,0.5])$.} \label{fig:exp3}
\end{center}
\end{figure*}

\end{document}